\newtheorem{definition}{Definition}
\newtheorem{remark}{Remark}
\newtheorem{proposition}{Proposition}
\newtheorem{corollary}{Corollary}[proposition]
\def\revision{\textcolor{black}}
\definecolor{mintbg}{rgb}{.63,.79,.95}
\colorlet{lightmintbg}{mintbg!40}
\begin{document}
%
\title{Differential-Critic GAN: Generating What You Want by a Cue of Preferences}
%
%
%

\author{Yinghua Yao,
        Yuangang Pan\textsuperscript{\Letter},
        Ivor W. Tsang,~\IEEEmembership{Fellow,~IEEE,}
        and~Xin Yao,~\IEEEmembership{Fellow,~IEEE}
\thanks{Yinghua~Yao is with Guangdong Key Laboratory of Brain-inspired Intelligent Computation, Department of Computer Science and Engineering, Southern University of Science and Technology, Shenzhen, China, and also with Australian Artificial Intelligence Institute, University of Technology Sydney, Australia. Yuangang~Pan and Ivor~W.~Tsang are with A*STAR Center for Frontier AI Research, Singapore. Ivor~W.~Tsang is also with Australian Artificial Intelligence Institute, University of Technology Sydney, Australia. Xin~Yao is with the Research Institute of Trustworthy Autonomous Systems (RITAS) and Guangdong Key Laboratory of Brain-inspired Intelligent Computation, Department of Computer Science and Engineering, Southern University of Science and Technology, Shenzhen, China, and also with School of Computer Science, University of Birmingham, UK.  
Email: yinghua.yao@student.uts.edu.au,
yuangang.pan@gmail.com,
ivor.tsang@gmail.com, 
xiny@sustech.edu.cn. (Corresponding author: Yuangang Pan)}
}

%
%

\markboth{Journal of \LaTeX\ Class Files,~Vol.~14, No.~8, August~2015}%
{Shell \MakeLowercase{\textit{et al.}}: Bare Demo of IEEEtran.cls for IEEE Journals}
%



\maketitle

\begin{abstract}
This paper proposes Differential-Critic Generative Adversarial Network (DiCGAN) to learn the distribution of user-desired data when only partial instead of the entire dataset possesses the desired property. DiCGAN generates desired data that meets the user's expectations and can assist in designing biological products with desired properties.
Existing approaches select the desired samples first and train regular GANs on the selected samples to derive the user-desired data distribution. However, the selection of the desired data relies on global knowledge and supervision over the entire dataset.
DiCGAN introduces a differential critic that learns from pairwise preferences, which are local knowledge and can be defined on a part of training data.
The critic is built by defining an additional ranking loss over the Wasserstein GAN’s critic. It endows the difference of critic values between each pair of samples with the user preference and guides the generation of the desired data instead of the whole data. 
For a more efficient solution to ensure data quality, we further reformulate DiCGAN as a constrained optimization problem, based on which we theoretically prove the convergence of our DiCGAN.
Extensive experiments on a diverse set of datasets with various applications demonstrate that our DiCGAN achieves state-of-the-art performance in learning the user-desired data distributions, especially in the cases of insufficient desired data and limited supervision. The code is available in \url{https://github.com/EvaFlower/Differential-Critic-GAN}.
\end{abstract}

\begin{IEEEkeywords}
Generative adversarial network, desired data generation, user preference, pairwise ranking.
\end{IEEEkeywords}

%
\IEEEpeerreviewmaketitle

\section{Introduction}
\IEEEPARstart{L}{earning} a good generative model for high-dimensional natural signals, such as images~\cite{zhu2017unpaired}, video~\cite{9210866}, speech~\cite{9219228} and text~\cite{9146686}, has long been one of the key milestones of machine learning. Powered by the learning capabilities of deep neural networks, Generative Adversarial Networks (GANs)~\cite{Goodfellow2014} have brought the field closer to attaining this goal. 
\revision{Currently, GANs are applied in a setting where the whole training dataset is of user interest (Fig.~\ref{fig:tsne_gan}). However, regular GANs no longer meet our requirement when only partial instead of the entire training dataset possesses the desired property~\cite{killoran2017generating}. 
Studying GANs under this setting can be useful in many real-world applications. 
One application can be to optimize generated biological data for desired properties, which can automate the process of designing DNA sequences, proteins and additional macromolecules for usage in medicine and manufacturing~\cite{gupta2019feedback}. Another can be to generate images that meet the user’s interest for image search~\cite{49290}.}

Existing methods~\cite{pmlr-v70-arjovsky17a,mirza2014conditional,gupta2019feedback} derive a user-desired data distribution by labeling the whole training dataset with a universal criterion. Based on the criterion, each training sample is annotated as ``desired'' or ``undesired''. Then GANs only learn to match the distribution of desired data. However, the requirement for the universal criterion is harsh since it requires global knowledge over the dataset, i.e., ``what desired data is" derived from the whole data, which is expensive and is not available in many real-world applications~\cite{christiano2017deep,gupta2019feedback}. For example, when it is asked to train a robot to clean a table. It’s not clear how to construct a suitable reward function (global knowledge, assessing all behaviors with a universal criterion).
Secondly, labeling all of the training data causes high labor costs. 
Suppose that the user is interested in the generation of small digits on MNIST. The global knowledge is about the global ranking list of digits, i.e., $0 \succ 1 \ldots \succ 9$, where the notation $\succ$ denotes the left-hand side is preferred over the right-hand side. 
Traversing the whole dataset, the user annotates zero digits as ``desired'' and other digits as ``undesired''.

\begin{figure}[!tb]
    \centering
    \begin{subfigure}[t]{.32\linewidth}
  \centering
    \includegraphics[width=\linewidth]{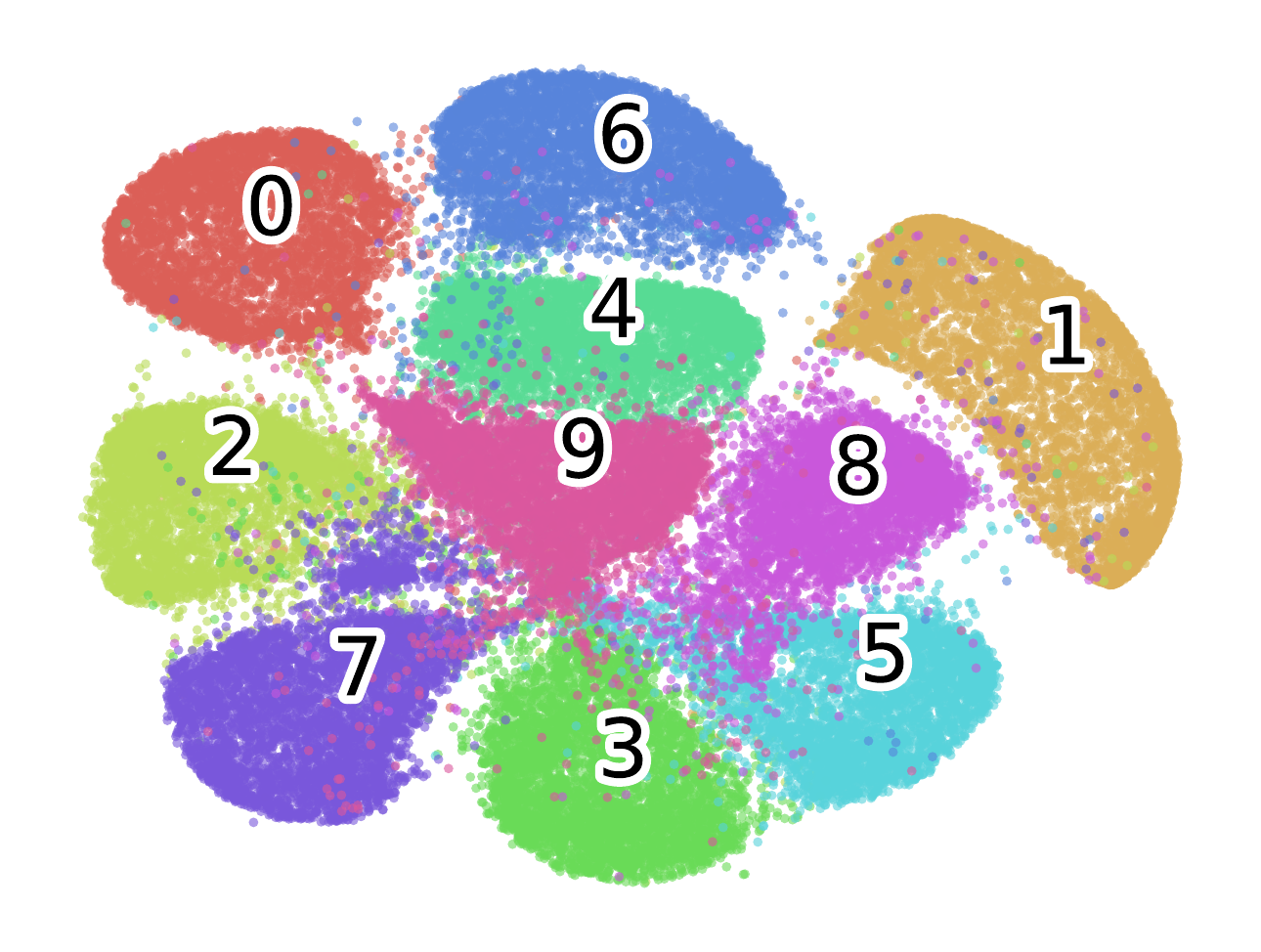}
    \caption{Training data\label{fig:tsne_data}}
  \end{subfigure}
    \begin{subfigure}[t]{.32\linewidth}
  \centering
    \includegraphics[width=\linewidth]{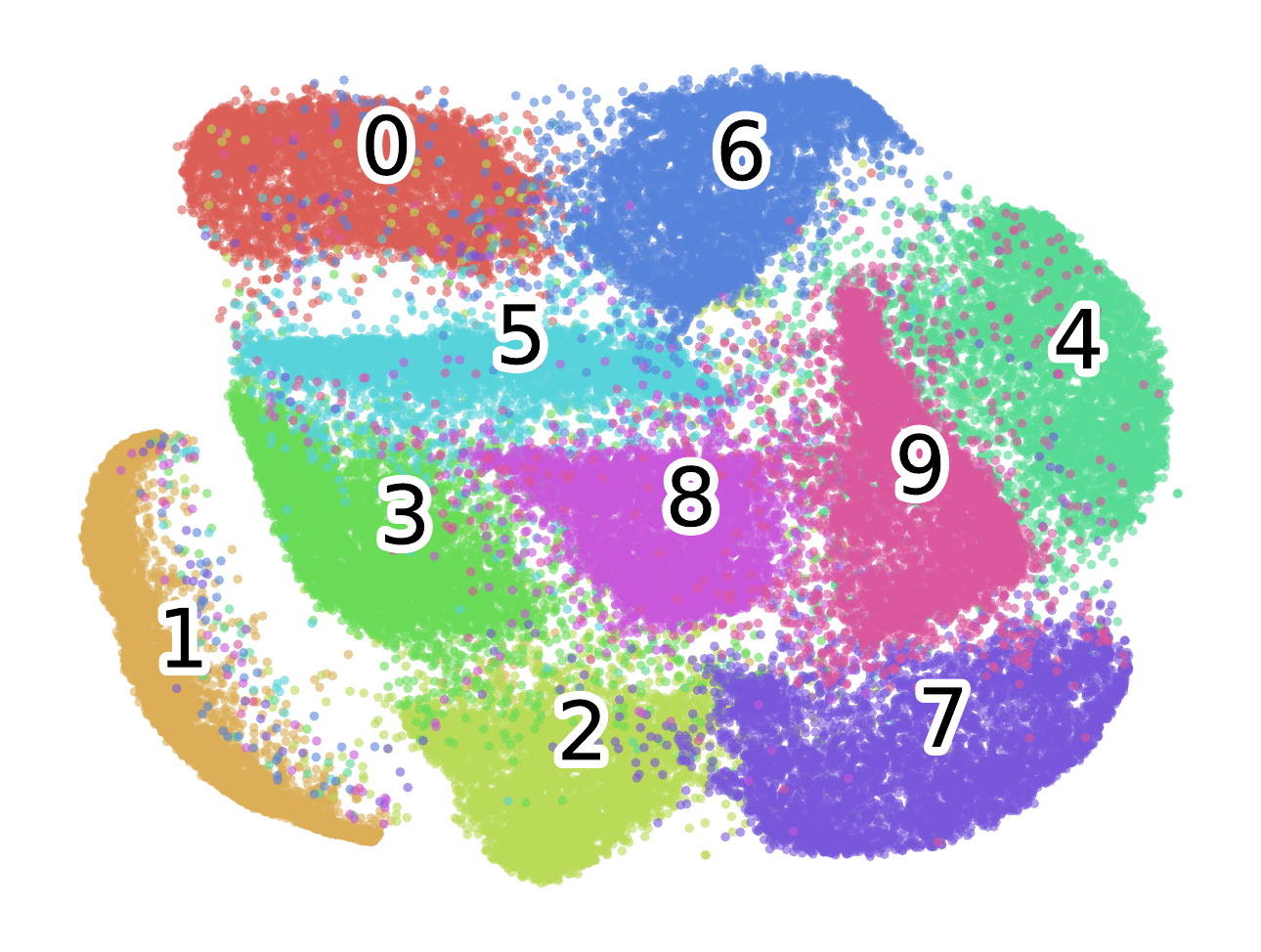}
    \caption{GAN\label{fig:tsne_gan}}
  \end{subfigure}
  \begin{subfigure}[t]{.32\linewidth}
  \centering
    \includegraphics[width=\linewidth]{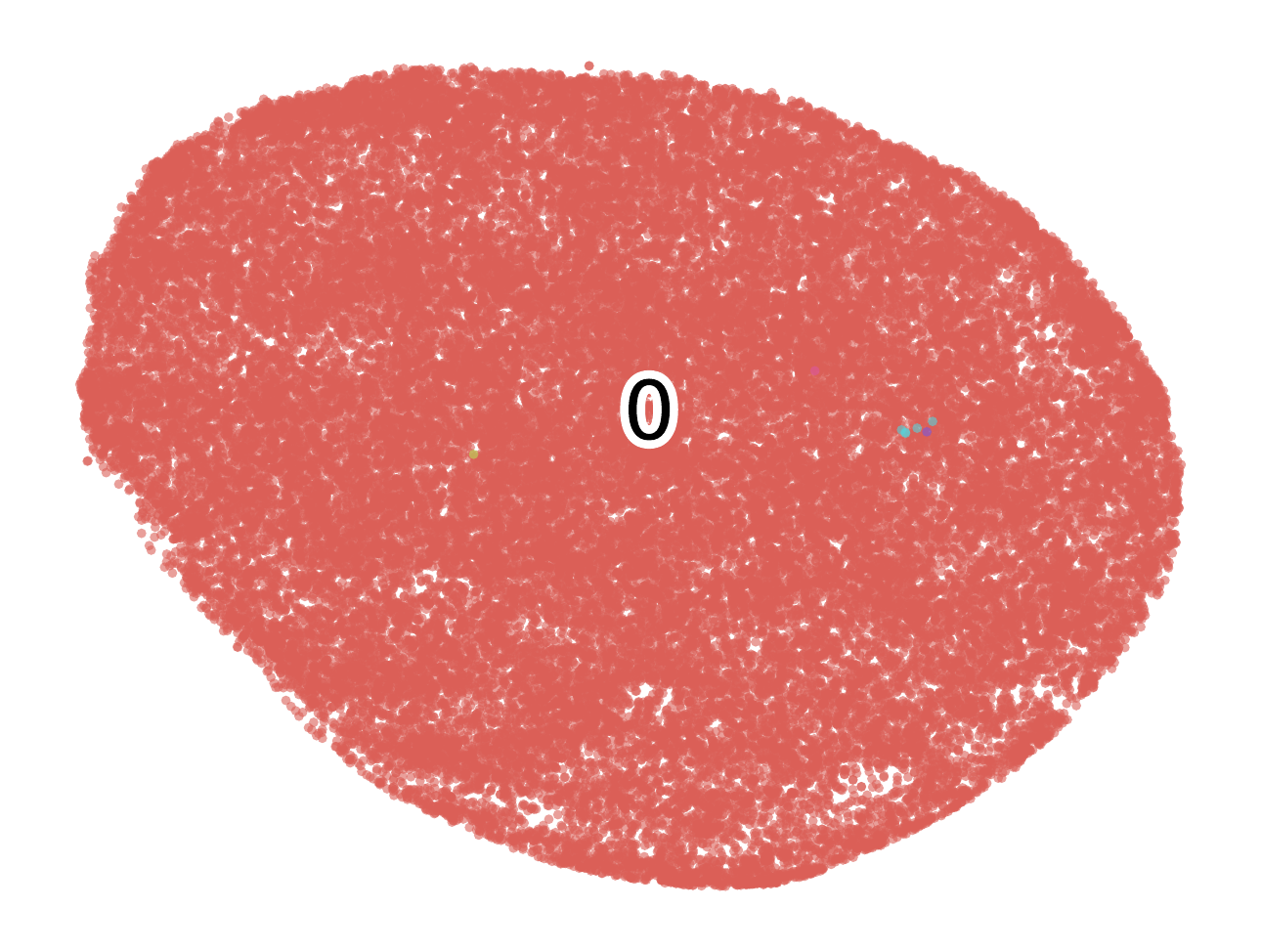}
    \caption{DiCGAN\label{fig:tsne_dicgan}}
  \end{subfigure}
    \caption{\label{fig:ps} t-SNE of $50K$ MNIST samples from (a) training data, (b) GAN and (c) DiCGAN, respectively. Training on MNIST, DiCGAN learns the distribution of small digits, i.e., digit zero, while GAN learns the distribution of the entire dataset.}\vskip-0.1in
\end{figure}

\begin{figure*}[!bt]
    \begin{minipage}[c]{0.62\textwidth}
        \includegraphics[width=\textwidth]{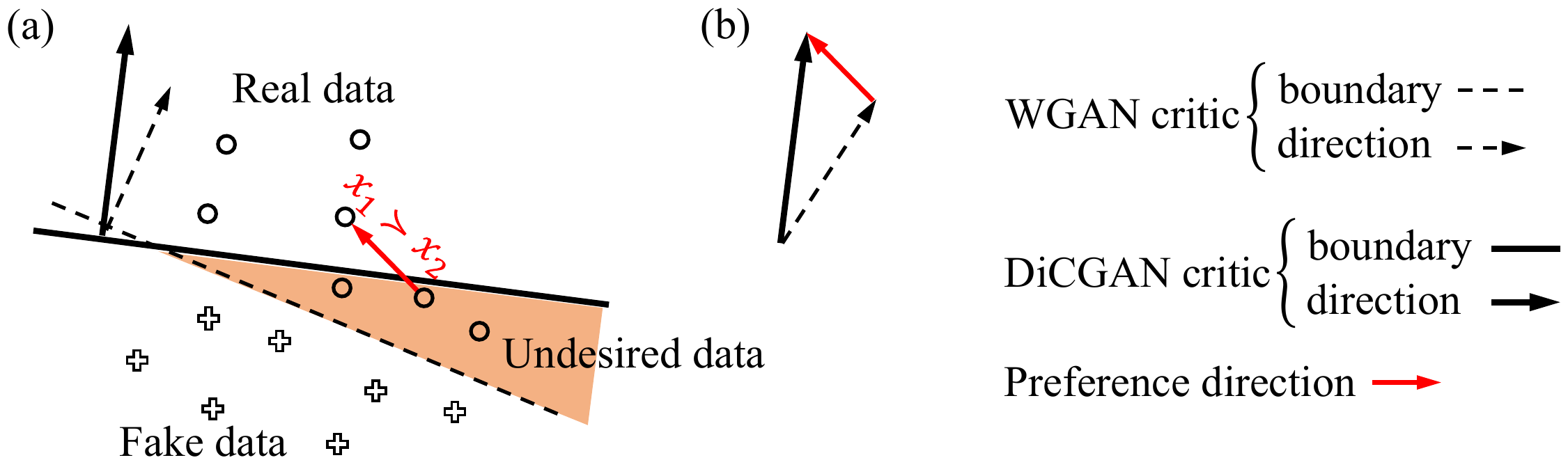}
     \end{minipage}\hfill\hfill
     \begin{minipage}[c]{0.37\textwidth}
        \caption{\label{fig:motivation}Illustration of why DiCGAN can learn the user-desired data distribution. (a) DiCGAN's critic pushes fake data towards the real desired data while WGAN's critic pushes fake data towards all the real data. (b) The change of DiCGAN's critic direction is driven by the preference direction. Note that the preference direction is learned from all pairwise preferences.}
      \end{minipage}\vskip-0.15in
\end{figure*}
Instead of soliciting global knowledge, we consider an easier setting where GAN can be guided towards the distribution of user-desired data by user preferences. 
In particular, pairwise preferences are the most popular form of user preferences due to their simplicity and easy accessibility~\cite{lu2011learning}. Such supervision only requires local knowledge, which can be easily collected with local ranking information. For training a robot to clean a table, it is feasible to compare two behaviors of the robot and determine which one is preferred w.r.t. the goal of cleaning the table~\cite{christiano2017deep}. In addition, resorting to pairwise preferences, it is not necessary to label all training data.
Therefore, our target is to incorporate pairwise preferences into the learning of GAN, \mbox{so as to guide the generation of the desired data.}

Relativistic GAN (RGAN)~\cite{jolicoeur-martineau2018} is a variant of regular GAN and is proposed to learn the whole data distribution. It considers the critic values as the indicators of sample quality, which are similar to the ranking scores. Motivated by this, we consider taking the critic values as the ranking scores and define the ranking loss for pairwise preferences based on the critic values directly. In particular, the difference in critic values for each pair of samples reflects the user's preference over the samples. This is why we call our critic the \textbf{differential critic}, and we propose Differential-Critic GAN (DiCGAN) for learning the user-desired data distribution. As shown in Fig.~\ref{fig:motivation}, the differential critic incorporates the direction of user preferences, which pushes the original critic direction towards the real desired data region instead of the entire real data region. 

The main contributions are summarized as follows:
\begin{itemize}[leftmargin=.2in]
    \item We propose DiCGAN to learn the distribution of the desired data from the entire dataset (Fig.~\ref{fig:tsne_dicgan}) using pairwise preferences. To the best of our knowledge, this is the first work to use local knowledge, i.e.,  local ranking information about user preferences to learn the desired data distribution. Such knowledge is wildly accessible and can be defined on part of the training data.
    \item We are the first one to endow the difference in the critic values between each pair of samples with user preferences. In particular, we incorporate user preferences into GAN’s learning to build Differential-Critic GAN (DiCGAN) via introducing an additional pairwise ranking loss over the WGAN’s critic. Further, we propose an equivalent form of DiCGAN with a hard constraint to ensure data quality.
    \item  We theoretically prove that the distribution of generated samples in DiCGAN can converge to the distribution of desired samples. The relationship between the user preferences and the distribution distance is the first time to be rigorously shown.
    \item We empirically study that our DiCGAN can generate images that meet the user’s interest on MNIST and CelebA-HQ and help design biological products with desired properties on the gene sequence dataset. Our DiCGAN outperforms the baselines in learning the distribution of desired \mbox{data especially when labels of desired data are limited.}
\end{itemize}

\section{Background}
\label{sect:backgroud}
\subsection{Related Work}
Vanilla GANs, like original GAN~\cite{Goodfellow2014}, Wasserstein GAN (WGAN)~\cite{pmlr-v70-arjovsky17a}, can be adapted to learn a user-desired data distribution. A naive way is to first select the samples possessing the desired property based on a universal criterion and then perform regular GAN training only on the selected samples to derive the desired data distribution. 
However, the criterion needs to give a global ranking over the whole data in terms of the interested property so as to pick up desired data, which is expensive. It may not be accessible in real applications. Even, these GANs will fail when the desired samples are insufficient.

The conditional variants of GAN~\cite{mirza2014conditional,odena2017conditional} can be applied in this setting by modeling ``desired/undressed’’ labels as condition variables to learn the conditional desired data distribution. However, the splitting of desired data and undesired data also requires a universal criterion. On the other hand, the generation performance of condition-based GAN is governed by the respective conditions with sufficient training observations. When the desired data is limited, the conditional modeling is dominated by the major classes, i.e., undesired data, resulting in a failure to capture the desired data distribution.

Feedback GAN (FBGAN)~\cite{gupta2019feedback} successfully derives the user-desired data distribution with limited desired data by iteratively introducing desired samples into the training data. Specifically, FBGAN is pre-trained with all training data using the vanilla GAN. At each training epoch, the generator first generates certain amounts of samples. The generated samples possessing the desired property are selected by a universal criterion and used to replace the old training data. Then, regular GAN is trained with the updated training data. Since the ratio of the desired samples gradually increases in the training data, all training data will be replaced with the desired samples. Finally, FBGAN would obtain the desired data distribution. 
Instead of explicitly selecting desired samples, an intuitive way is to first pre-train GAN on all data and then fine-tune it with a classification loss of classifying the generation as ``desired'' to derive the desired data distribution. However, these methods are still restrictive due to the requirement of the universal criterion.

All literature methods resort to a universal criterion to select the desired data in order to learn the desired data distribution, but the criterion requires expensive global knowledge and may even not exist in real applications. In addition, all methods need to label the entire training data, which incurs huge costs. Our work derives the user-desired data distribution using pairwise preferences, which only requires local knowledge and can reduce the burden of labeling the whole training data~\cite{DBLP:journals/ml/PanHT18,pan2022fast,10.1561/1500000016}.
Therefore, our DiCGAN has the advantage of requiring less and more accessible supervision than existing approaches.

\begin{table}[!b]
\centering \vskip-0.1in
\caption{\label{tb:method_comp} Comparison of DiCGAN with WGAN and RGAN in terms of the target data distribution and the critic value.}
	\renewcommand{\arraystretch}{1.}
	\setlength{\tabcolsep}{1.mm}{	
		\scalebox{1.}{
\begin{tabular}{c|c|c|c}\toprule[1.3pt]
\multirow{2}{*}{Method} & \multirow{2}{*}{target distribution} & \multicolumn{2}{c}{critic value}  \\ \cline{3-4}
& & scope & physical meaning    \\\midrule[1pt]
WGAN                    & whole                                     & distribution level  & distribution distance \\
RGAN                    & whole                                     & sample level        & data quality          \\
DiCGAN                  & partial (desired)                         & sample level        & user preference    \\
\bottomrule[1.3pt]
\end{tabular}}}\vskip-0.1in
\end{table}
\subsection{Preliminaries: Generative Adversarial Networks}
GAN~\cite{Goodfellow2014} performs generative modeling by learning a map from low-dimensional latent space $\mathcal{Z}$ to data space $\mathcal{X}$,  i.e., $G: \mathcal{Z} \rightarrow \mathcal{X}$, 
given samples from the training data distribution, namely, $x \sim p_{\mathrm{r}}(x)$. The goal is to find $G$ that achieves $p_{\uptheta}(x)=p_{\mathrm{r}}(x)$, where $p_{{\uptheta}}(x)$ is the distribution of fake data $x=G(z)$.

In order to train the generator, GAN introduces another network, i.e., discriminator, to discriminate real data from fake data. The generator is trained to produce images that are conceived to be realistic by the discriminator. Two networks are trained alternately until the generator successfully fools the discriminator. GAN~\cite{Goodfellow2014}'s objective is defined as follows:
\begin{equation}
    \min_{G}\max_{D}\mathbb{E}_{p_{\mathrm{r}}(x)}\left[\log \sigma\left(D(x)\right)\right]+\mathbb{E}_{p_{{\uptheta}}(x)}\left[\log\left(1-\sigma\left(D(x)\right)\right)\right],
\end{equation}
where $\sigma\left(D(x)\right)$ is the probability that the input data is real and $\sigma$ is the sigmoid function. $D(x)$ is the non-transformed discriminator output, which is called \textit{critic value} in WGAN~\cite{pmlr-v70-arjovsky17a}. 

WGAN~\cite{pmlr-v70-arjovsky17a,gulrajani2017improved} and RGAN~\cite{jolicoeur2019relativistic} are stable variants of GANs defining the loss functions in terms of the critic $D$, i.e., the non-transformed discriminator. Specifically, WGAN measures the quality of fake data in terms of the Wasserstein distance (W-distance) between the real data distribution and the fake data distribution. The W-distance is approximated by the difference in the average critic values between the real data and the fake data. WGAN's objective is defined as follows:
\begin{equation}
    \min_{G}\max_{D}\mathbb{E}_{p_{\mathrm{r}}(x)}\left[{D}(x)\right]-\mathbb{E}_{p_{{\uptheta}}(x)}\left[{D}\left(x\right)\right],
\end{equation}
where $D$ is the critic enforced with a $1$-Lipschitz constraint.

RGAN estimates the probability that the given real data is more realistic than randomly sampled fake data by using the difference in the critic values. \mbox{Its objective is defined as follows:}
\begin{equation}
\begin{aligned}
    &\max_{D}\mathbb{E}_{x_{\mathrm{r}}\sim p_{\mathrm{r}}(x), x_{\uptheta} \sim p_{{\uptheta}}(x) }\left[\log\left(\sigma\left({D}(x_{\mathrm{r}})-{D}\left(x_{\uptheta}\right)\right)\right)\right],\\
    &\max_{G}\mathbb{E}_{x_{\mathrm{r}}\sim p_{\mathrm{r}}(x), x_{\uptheta} \sim p_{{\uptheta}}(x) }\left[\log\left(\sigma\left({D}(x_{\uptheta})-{D}\left(x_{\mathrm{r}}\right)\right)\right)\right].
\end{aligned}
\end{equation}
It has a similar form as the pairwise ranking loss~\cite{burges2005learning}, but interpreting the critic values as \mbox{ranking scores for sample quality.}

Our DiCGAN considers the critic values as the ranking scores, which has a similar viewpoint to RGAN. But very differently, 1) our DiCGAN aims to learn the distribution of user-desired data when only part of the dataset possesses the desired property while RGAN targets to learn the whole data distribution;
2) our DiCGAN uses critic values to represent user preferences while RGAN uses critic values to describe data quality. We summarize the comparison of our DiCGAN with WGAN and RGAN in Table~\ref{tb:method_comp}.
Our DiCGAN can be applied to GAN variants based on the critic, like WGAN and RGAN. In this work, we develop our DiCGAN on WGAN.

\begin{table}[!t]
\centering
\caption{\label{tb:notations} Main mathematical notations in this paper.}
	\renewcommand{\arraystretch}{1.}
	\setlength{\tabcolsep}{1.mm}{	
		\scalebox{1.}{
\begin{tabular}{l|l}
\toprule[1.3pt]
Notation   & Explanation   \\ \hline
$G$     & generator     \\
$D$     & critic    \\
$\mathrm{X}$    & training samples  \\
$x_1 \succ x_2$     & $x_1$ is preferred over $x_2$ \\
$\mathrm{S}$    & pairwise preferences, $\mathrm{S}=\big\{s=(x_1, x_2)|x_1 \succ x_2, x_1, x_2 \in \mathrm{X}\big\}$ \\
$p_{\mathrm{r}}(x)$     & the distribution of the whole data    \\
$p_{\mathrm{d}}(x)$     & the distribution of the user-desired data    \\
$p_{\mathrm{u}}(x)$     & the distribution of the undesired data    \\
$p_{\uptheta}(x)$     & the target generative model    \\
$T$     & threshold to discriminate the desired data from the undesired data  \\
$f(x)$     & score function that maps sample $x$ to the score that reflects \\
           & the user's preference for $x$  \\
$d(,)$     & distribution distance \\
$\varepsilon$  & distance constraint that guarantees good generation quality \\
\toprule[1.3pt]
\end{tabular}}}\vskip-0.1in
\end{table}

\section{DiCGAN for User-desired Distribution}
\label{sect:approach}
No longer learning the distribution of the whole dataset, GAN is applied in a new scenario, where the distribution of the partial dataset is what we desire.
User-desired data may refer to some certain class of data among multiple class datasets, or observations with/without some particular attributes or properties.  Such data can be induced from user preference, which can be represented as an ordering relation between two or more samples in terms of the desired property. We propose differential-critic GAN (DiCGAN) to learn the desired data distribution from \mbox{the user preferences along with the whole dataset.}

\subsection{Learning the Distribution of User-desired Data}
A universal criterion to help derive a user-desired data distribution can be constructed based on a score function. Following the score-based ranking literature~\cite{DBLP:conf/icml/CaoQLTL07}, we suppose that there exists a numeric score associated with each sample,  reflecting the user's preference for the sample. A higher score indicates that its corresponding sample is preferred by the user. 
In detail, let $f()$ denote a score function that maps sample $x$ to score $f(x)$. Let $T$ denote the threshold to discriminate the desired data from the undesired data. That is, if a sample's score $f(x)$ exceeds a predefined threshold $T$, namely, $I(f(x)>T)=1$, the sample $x$ is desired by the user. $I()$ is a sign function, which equals $1$ if its condition is true and $0$ otherwise. 
For the sake of explanation, we use $p_{\mathrm{r}}(x), p_{\mathrm{d}}(x), p_{\mathrm{u}}(x)$ to denote the distribution of the whole data,  the user-desired data and the undesired data, respectively.

Current literatures~\cite{gupta2019feedback,pmlr-v70-arjovsky17a,mirza2014conditional} needs to explicitly label desired/undesired data in order to learn the distribution of the desired data~$p_{\mathrm{d}}(x)$. Namely, the desired data $\mathrm{X}_{\mathrm{d}}=\{x|I(f(x)>T)=1, x\sim p_{\mathrm{r}}(x)\}$. The undesired data $\mathrm{X}_{\mathrm{u}}=\{x|I(f(x) \leq T)=1, x\sim p_{\mathrm{r}}(x)\}$.
However, the assumption that the score function $f()$ is predefined may be too restrictive for real applications, where no universal and explicit criteria exist. Second, the definitions of the desired/undesired samples are highly
dependent on the choice of the threshold~$T$.  Third, labeling over the entire dataset incurs high costs.

Instead of relying on a predefined score function (global knowledge), we propose to learn the desired data distribution in a straightforward manner from the user preferences.
Here, we consider general auxiliary information, i.e., the pairwise preferences, to represent the user preferences, due to its simplicity and easy accessibility. For any two samples $x_1, x_2 \sim p_{\mathrm{r}}(x)$, let $x_1 \succ x_2$ denote that $x_1$ is preferred over $x_2$ according to the user's preference over the samples. Let $\mathrm{X}$ be the training samples, i.e., $\mathrm{X}=\{x \sim p_{\mathrm{r}}(x)\}$. A collection of pairwise preferences $\mathrm{S}$ is obtained by:
\begin{equation}
    \mathrm{S}=\big\{s=(x_1, x_2)|x_1 \succ x_2, x_1, x_2 \in \mathrm{X}\big\}.
    \label{eq:s}
\end{equation}
$S$ can be defined on part of the dataset.
\begin{remark}
We can construct $\mathrm{S}$ by first randomly drawing sample pairs from part of the training samples and then asking the user to select the preferred one from each pair.
\end{remark}
\begin{definition}[Problem Setting]
Given the training samples $\mathrm{X}$ and the pairwise preferences $\mathrm{S}$, the target is to learn a generative model $p_{{\uptheta}}(x)$ that is identical to the distribution of the desired data $p_{\mathrm{d}}(x)$, i.e., $p_{{\uptheta}}(x)=p_{\mathrm{d}}(x)$.
\end{definition}

\subsection{Differential Critic GAN}
\label{sect:dicgan}
Instead of adopting WGAN's critic for quality assessment, we present the differential critic for modeling pairwise preferences.
The differential critic can guide the generation of the user-desired data.

\subsubsection{Pairwise Preference}
We consider incorporating pairwise preferences into the training of GAN.

The score-based ranking model~\cite{10.1145/1390334.1390382} is used to model the pairwise preferences. It learns the score function~$f()$, of which the score value, called ranking score in the model, is the indicator of the user preferences. Further, the difference in ranking scores can indicate the pairwise preference relation. That is, for any pair of samples $x_1, x_2$, if $x_1 \succ x_2$ then $f(x_1)-f(x_2)>0$ and vice versa. For any pairwise preference $s: x_1 \succ x_2$, the ranking loss we consider is as follows:
\begin{equation}
    h(s) = \max\left(0, -\left(f\left(x_{1}\right)-f\left(x_{2}\right)\right)+m\right),
 \label{eq:margin}
\end{equation}
where $m$ is the ranking margin. For other forms of ranking losses, the reader can refer to~\cite{10.1145/1390334.1390382}.

Instead of learning the score function independently of GAN’s training, we consider incorporating it into GAN’s training, guiding GAN towards the generation of the desired data. 
The critic in RGAN~\cite{jolicoeur-martineau2018} is similar to the score function, where the critic values are used to describe the quality of samples. We are motivated to take the critic values as the ranking scores and define the ranking loss on the critic directly. In particular, the difference in the critic values for each pair of samples reflects the user's preference over the samples.

\subsubsection{Loss Function}
We build DiCGAN based on WGAN and the pairwise ranking loss is defined over the WGAN's critic. The loss function for DiCGAN is defined as:
\begin{equation}
\begin{aligned}
    \min_{G}\max_{D} \mathbb{E}_{p_{\mathrm{r}}(x)}\left[{D}(x)\right]
    -\mathbb{E}_{p_{\mathrm{\uptheta}}(x)}\left[{D}\left(x\right)\right]
    -\lambda\frac{1}{|\mathrm{S}|}\sum_{s\in \mathrm{S}}\left[h\left(s\right)\right],
\end{aligned}
\label{eq:objective}
\end{equation}
where $h(s)$ is the pairwise ranking loss (Eq.~\eqref{eq:margin}). $f()$ is approximated by the critic $D$. Namely, $h(s) \approx \max\left(0, -\left(D\left(x_{1}\right)-D\left(x_{2}\right)\right)+m\right)$. $\lambda$ is a balance factor, which will be discussed further in section~\ref{sec:3.3}.
Similar to WGAN, we formulate the objective for the differential critic $L_D$ and the generator $L_G$ as:
\begin{equation}
\begin{aligned}\label{eq:loss}
    L_D &= \frac{1}{b}\sum_{i=1}^{b}\left(D(x^{i})-D(G(z^{i}))\right)-\lambda\frac{1}{n_{\mathrm{s}}}\sum_{j=1}^{n_\mathrm{s}}h(s^{j}),\\
    L_G &= \frac{1}{b}\sum_{i=1}^{b}-D(G(z^{i})).
\end{aligned}
\end{equation}
where $b$ is the batch size. $n_{\mathrm{s}}$ is the number of preferences sampling from $\mathrm{S}$.

The advantages of DiCGAN are twofold.
$(1)$ The introduced ranking loss in DiCGAN is defined on the critic directly. Apart from WGAN, it can be easily applied to other GAN variants developed based on the critic, e.g., RGAN.
$(2)$ The construction of pairwise preferences involves the undesired data. Thus, the undesired samples are also utilized during the training and they, together with desired samples, provide the generation direction of the desired data for the generator. 

We argue that the differential critic in DiCGAN can guide the generator to learn the user-desired data distribution. As shown in Fig.~\ref{fig:motivation}, the differential critic in DiCGAN provides the direction towards the real desired data. We denote the critic direction as the moving direction of the fake data, which is orthogonal to the decision boundary of the critic. Referring to Eq.~\eqref{eq:objective}, DiCGAN's critic loss consists of two terms: the vanilla WGAN loss and the ranking loss.  The vanilla WGAN loss imposes the critic direction from the fake data to the real data. Meanwhile, the ranking loss induces a user preference direction, which points from the undesired data to the desired data. Combining these two effects, the critic direction of DiCGAN targets the region of the real desired data only. 

The above proposed DiCGAN (Eq.~\eqref{eq:objective}) however requires sensitive hyperparameter tuning during the training. Revisiting the objective (Eq.~\eqref{eq:objective}), the first two terms (WGAN loss) can be considered as the WGAN regularization, which ensures the generated data distribution is close to the whole real data distribution, i.e., $p_{\mathrm{\uptheta}} \approx p_{\mathrm{r}}$. The third term (ranking loss) serves as a correction for WGAN, which makes WGAN slightly biased to our target of learning the desired data distribution, i.e., $p_{{\uptheta}}=p_{\mathrm{d}}$.
Therefore, the WGAN regularization serves as the cornerstone of our DiCGAN. Particularly, if the desired data distribution is close to the whole data distribution, the ranking loss easily corrects the WGAN to achieve the desired data distribution. Otherwise, satisfactory performance of DiCGAN may require the online hyperparameter tuning of $\lambda$ during the training process. \mbox{Thus, it is hard to train with Eq.~\eqref{eq:objective} in this case.} 

\subsection{Reformulating DiCGAN to Ensure Data Quality}
\label{sec:3.3}
In this section, we reformulate DiCGAN as a form with a hard constraint. This form indicates that the tuning for Eq.~\eqref{eq:objective} relies largely on the distance between the distributions of the desired data and the undesired data. Further, it inspires us to derive a more efficient solution -- minor correction and major correction.

According to the above analysis, the WGAN
loss serves as the cornerstone of our DiCGAN and the pairwise ranking loss serves as a correction for WGAN. Thus, we consider reformulating the objective of DiCGAN, i.e., Eq.~\eqref{eq:objective} into an equivalent objective with a hard WGAN constraint:
\begin{equation}
\begin{aligned}\label{eq:objective_3}
    &\min_{G} \max_{D}-\sum_{s\in \mathrm{S}}\left[h\left(s\right)\right], \\
    \text{s.t.} \; d(p_{\mathrm{r}}, p_{\uptheta})= &\left|\mathbb{E}_{p_{\mathrm{r}}(x)}\left[{D}(x)\right]-\mathbb{E}_{p_{{\uptheta}}(x)}\left[{D}\left(x\right)\right] \right|\leq \varepsilon.
\end{aligned}
\end{equation}
where $\varepsilon>0$. Note that we impose an explicit non-negative constraint on $d(p_{\mathrm{r}}, p_{\uptheta})$, to highlight that it is a distance metric. 
It is still equivalent to WGAN loss from its definition. Eq.~\eqref{eq:objective} is the Lagrangian function. Since Eq.~\eqref{eq:objective_3} imposes a hard constraint on the WGAN loss, it is more difficult to optimize compared to Eq.~\eqref{eq:objective}. 
However, more efficient solutions of DiCGAN can be explored by analyzing Eq.~\eqref{eq:objective_3} regarding the hard constraint on $d(p_{\mathrm{r}}, p_{\uptheta})$.

\begin{figure}[!t]
    \centering
    \includegraphics[width=0.88\linewidth]{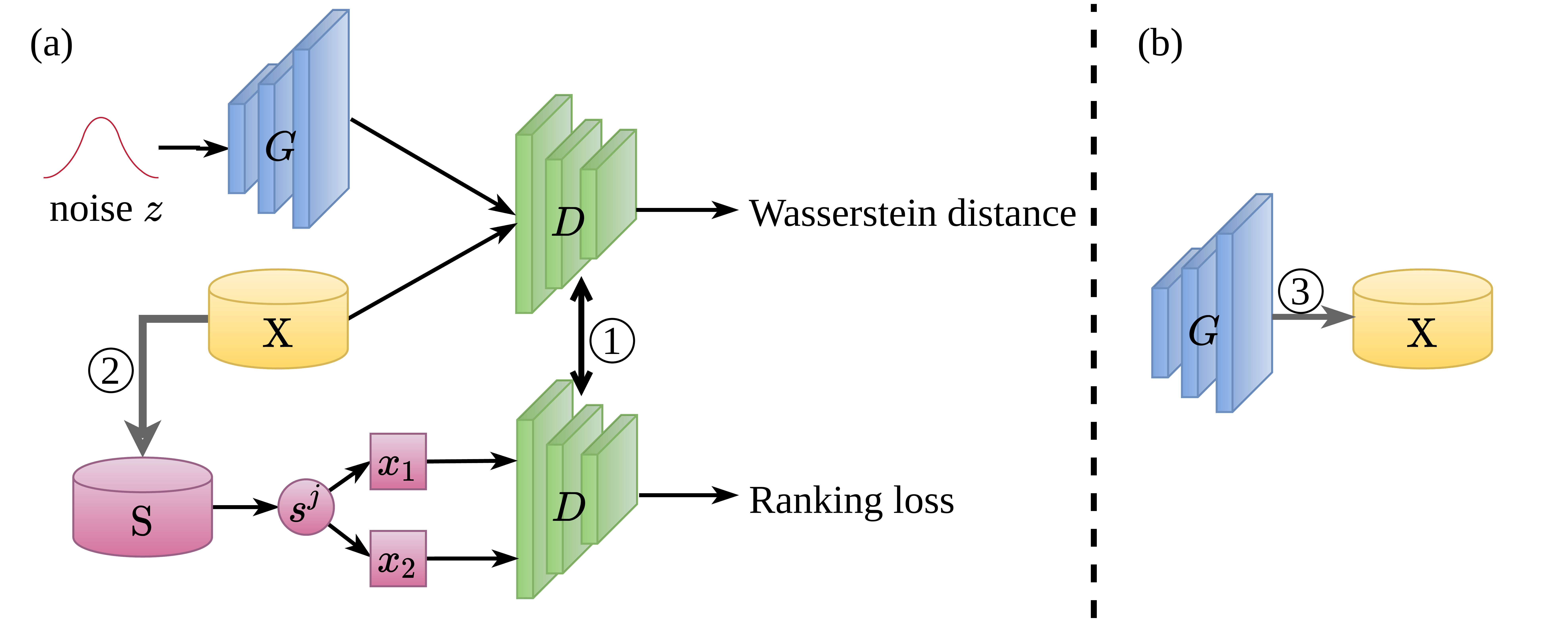}
    \caption{\label{fig:framework} DiCGAN architecture and training. DiCGAN is alternately trained with step (a) and (b). (a) Training DiCGAN at one minor correction. (b) Replacing data after one minor correction. \textcircled{1} denotes the shared differential critic $D$. \textcircled{2} denotes that $\mathrm{S}$ is constructed from $\mathrm{X}$ using Eq.~\eqref{eq:s}. \textcircled{3} denotes data replacement using Eq.~\eqref{eq:x}.}\vskip-0.1in
\end{figure}
In terms of a minor correction situation, this means the desired data distribution $p_{\mathrm{d}}$ is close to the real data distribution $p_{\mathrm{r}}$. Therefore, the hard constraint dominates the training goal of DiCGAN. By assigning a proper $\lambda$ to ensure the constraint is satisfied, Eq.~\eqref{eq:objective} can learn the distribution of the user-desired data while ensuring data quality. 

In terms of a major correction situation, this means the desired data distribution $p_{\mathrm{d}}$ is quite diverse from the real data distribution $p_{\mathrm{r}}$. Therefore, DiCGAN needs to achieve an equilibrium between the correction, imposed by the ranking loss, and the hard constraint, imposed by the WGAN loss. However, a large correction may not ensure the quality of the generated data, since the WGAN loss, used to guarantee the image quality, is defined between the generated data and the whole real data. 
To avoid the major correction, we propose to break the major correction into a sequence of minor corrections to ensure data quality. Namely, at each minor correction, we first use the generator $G$ to generate $n_{\mathrm{g}}$ samples, denoted as~$\mathrm{X_g}$:
\begin{equation}
    \mathrm{X}_\mathrm{g}^e \leftarrow \{G^e(z^{1}), \ldots, G^e(z^{n_{\mathrm{g}}})\},\quad \{z^{i} \sim p(z)\}_{i=1}^{n_{\mathrm{g}}},
    \label{eq:xg}
\end{equation}
where $e$ is $e$-th minor correction. Then we replace partial old training samples with the generated samples:
\begin{equation}
    \mathrm{X}^{e+1} \leftarrow \mathrm{X}^{e} \setminus \mathrm{X}_\mathrm{o}^{e} \cup \mathrm{X}_\mathrm{g}^e,
    \label{eq:x}
\end{equation}
\mbox{where $\mathrm{X}_\mathrm{o}^{e}$ are the old (least-recently added) $n_{\mathrm{g}}$ samples in $\mathrm{X}^{e}$.} 

Due to the ranking loss, the generated data distribution~$p_{\uptheta}^e$ is closer to the desired data distribution~$p_{\mathrm{d}}$, compared to the constructed $p_{\mathrm{r}}^{e}$ at each minor correction. Therefore, the iterative replacement (Eq.~\eqref{eq:x}) can gradually shift the real data distribution $p_{\mathrm{r}}$ towards the desired data distribution $p_{\mathrm{d}}$. Namely, $d(p_{\mathrm{r}}, p_{\mathrm{d}}) >\cdots > d(p_{\mathrm{r}}^{e}, p_{\mathrm{d}}) > d(p_{\mathrm{r}}^{e+1}, p_{\mathrm{d}}) >\cdots$. According to the monotone convergence theorem, $d(p_{\mathrm{r}}^{e}, p_{\mathrm{d}})$ will converge to zero when $e \rightarrow +\infty$.
So only a minor correction needs to be imposed on $p_{\uptheta}^{e}$ by optimizing Eq.~\eqref{eq:objective} at each minor correction. Iteratively, the generated distribution $p_{\uptheta}$ shifts towards $p_{\mathrm{d}}$. The training algorithm is summarized in Algorithm~\ref{alg:dicgan_whole}. The architecture and training of DiCGAN can be seen in Fig.~\ref{fig:framework}. For the sake of easy optimization, we pretrain the differential critic $D$ and the generator $G$ using vanilla WGAN.
\begin{algorithm}[!t]
  \caption{Training algorithm of DiCGAN}
  \label{alg:dicgan_whole}
  \begin{algorithmic}[1]
  \small
    \STATE \textbf{Input:} training data $\mathrm{X}$, pairwise preferences $\mathrm{S}$
    \STATE \textbf{Initilization:} balance factor $\lambda$, \#generated samples $n_{\mathrm{g}}$, \#pairs $n_{\mathrm{s}}$, batch size $b$, \#iterations per minor correction $n_{\mathrm{i}}$, \#critic iterations per generator iteration $n_\mathrm{critic}$    
    \STATE Pretrain $D$ and $G$
    \REPEAT
    \STATE $\%$ Shift to the user-preferred distribution\\
    \STATE \quad Generate samples using Eq.~\eqref{eq:xg} \\
    \STATE \quad Replace partial old samples in $\mathrm{X}$ with $\mathrm{X_g}$ using Eq.~\eqref{eq:x}
      \STATE Obtain pairwise preferences $\mathrm{S}$ using Eq.~\eqref{eq:s}
    \STATE $\%$ Training of $D$ and $G$ at a minor correction
    \FOR{$l=1,\dots, n_{\mathrm{i}}$}      
      \FOR{$t=1,\ldots,n_\mathrm{critic}$}
        \STATE Sample $\{x^{i}\}_{i=1}^{b}$ from $\mathrm{X}$, $\{z^{i} \sim p(z)\}_{i=1}^{b}$
        \STATE Sample $\{s^j\}_{j=1}^{n_\mathrm{s}}$ from $\mathrm{S}$.
        \STATE Train the differential critic $D$ using $L_D$ in Eq.~\eqref{eq:loss}
      \ENDFOR
      \STATE Train the generator $G$ using $L_G$ in Eq.~\eqref{eq:loss}
    \ENDFOR  
    \UNTIL{converge}
    \STATE \textbf{Output:} \mbox{generator G for desired data distribution}
  \end{algorithmic}
\end{algorithm}

\subsection{Convergence Analysis}
In this section, we analyze the convergence of our DiCGAN under the minor correction and the major correction, respectively. In the case of the minor correction, we prove that the distribution of generated data $p_{\uptheta}(x)$ converges to the distribution of user-desired data $p_{\mathrm{d}}(x)$ via adversarial training of GAN given that the differential critic of DiCGAN converges to the score function whose score describes the user's preference for the sample. In the case of the major correction, $p_{\uptheta}(x)$ is proven gradually moving towards $p_{\mathrm{d}}(x)$ with a sequence of minor correction as one minor correction shifts $p_{\uptheta}(x)$ towards $p_{\mathrm{d}}(x)$ with a certain small distance. 

Suppose the training data $\mathrm{X}=\{x_1, x_2, \ldots, x_n\}$, where $n$ is the number of training samples. Their corresponding scores $o=f(x)$ are $\{o_1, o_2, \ldots, o_n\}$, which describes the user's preference for the sample. The maximum score among the samples is denoted as $o_{max}$ while the minimum one is $o_{min}$.

\begin{proposition}\label{prop:minor}
In the case of the minor correction, i.e., $d\left(p_{\mathrm{r}}(x), p_{\mathrm{d}}(x)\right) \leq \varepsilon$, $p_{\mathrm{\uptheta}}(x)$ converges to $p_{\mathrm{d}}(x)$.
\end{proposition}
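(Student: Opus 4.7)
The plan is to exploit the hard-constrained reformulation in Eq.~\eqref{eq:objective_3} and decouple the argument into (i) what the ranking loss forces on the critic and (ii) what the WGAN constraint forces on the generator. Intuitively, once the ranking loss drives $D$ to be a monotone surrogate for the score function $f$, the generator's objective reduces to maximizing an expected preference score inside a Wasserstein ball around $p_{\mathrm{r}}$. The minor-correction hypothesis $d(p_{\mathrm{r}},p_{\mathrm{d}})\leq\varepsilon$ guarantees that $p_{\mathrm{d}}$ itself lies in that ball, so the only issue is to argue that $p_{\mathrm{d}}$ uniquely maximizes the score among feasible distributions.

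First, I would analyze the critic subproblem. Summing the hinge loss $h(s)=\max(0,-(D(x_1)-D(x_2))+m)$ over all $s\in\mathrm{S}$ with $x_1\succ x_2$ enforces $D(x_1)\geq D(x_2)+m$ at the optimum whenever the hypothesis class is expressive enough. This implies there is a level $T'$ (derived from $T$ and $m$) such that $D(x)>T'$ for $x$ in the desired support and $D(x)\leq T'$ for $x$ in the undesired support; in other words, $D$ converges to a thresholded version of $f$ on the observed data, which by the assumed global ranking structure extends to the full support.

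Second, I would turn to the generator subproblem in the form of Eq.~\eqref{eq:objective_3}. Using $\min_G \max_D -\sum_{s} h(s)$ combined with the Wasserstein constraint $d(p_{\mathrm{r}},p_{\uptheta})\leq\varepsilon$, and substituting the preceding characterization of $D$, the generator effectively solves
\begin{equation*}
\max_{p_{\uptheta}}\; \mathbb{E}_{p_{\uptheta}(x)}[f(x)] \quad \text{s.t.}\quad d(p_{\mathrm{r}},p_{\uptheta})\leq\varepsilon .
\end{equation*}
Because $f$ is strictly larger on the desired support than on the undesired support (scores bounded below by $T$ on $p_{\mathrm{d}}$ and above by $T$ on $p_{\mathrm{u}}$), and because the minor-correction assumption places $p_{\mathrm{d}}$ inside the feasible set, $p_{\mathrm{d}}$ attains the maximum: any mass placed on the undesired region strictly lowers the objective while still consuming the $\varepsilon$ budget. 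A triangle-inequality sanity check, $d(p_{\uptheta},p_{\mathrm{d}})\leq d(p_{\uptheta},p_{\mathrm{r}})+d(p_{\mathrm{r}},p_{\mathrm{d}})\leq 2\varepsilon$, shows the feasible set is bounded and nonempty, so existence of a maximizer is not an issue. Putting the two subproblems together via the usual alternating-minimax equilibrium argument then gives $p_{\uptheta}=p_{\mathrm{d}}$ at convergence.

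The hard part, I expect, will be the step that promotes the pairwise ranking constraints on $\mathrm{S}$ to a pointwise ordering of $D$ on the whole support of $p_{\mathrm{r}}$: the preferences in $\mathrm{S}$ are only observed on a subset of $\mathrm{X}$, so this extension needs either an assumption that $\mathrm{S}$ is rich enough to identify the threshold $T$ or an explicit appeal to the score-based ranking model in~\cite{DBLP:conf/icml/CaoQLTL07}. A secondary difficulty is showing that the uniqueness of the maximizer of $\mathbb{E}_{p_{\uptheta}}[f]$ over the Wasserstein ball actually singles out $p_{\mathrm{d}}$ rather than a mixture supported only on the highest-scoring desired samples; this will require either a convexity argument on the feasible set together with the hard constraint making $p_{\mathrm{d}}$ an extreme point, or a slightly strengthened version of the minor-correction assumption tying $\varepsilon$ to the support of $p_{\mathrm{d}}$.
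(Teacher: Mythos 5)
Your proposal follows essentially the same route as the paper's own proof: the ranking loss drives the critic to realize the preference score function (so desired data receives strictly higher critic values than undesired data), and the generator, maximizing the critic subject to the WGAN/Wasserstein constraint, is then pushed onto the high-score (desired) region, with the adversarial equilibrium giving $p_{\uptheta}=p_{\mathrm{d}}$. The two difficulties you flag --- promoting pairwise preferences observed on a subset to a global ordering of $D$, and ruling out collapse onto only the highest-scoring desired samples --- are exactly the points the paper also leaves informal (it appeals to learning-to-rank theory under a ``sufficient $\mathrm{S}$'' assumption and to the ``same principle'' as the GAN/WGAN equilibrium), so your version is, if anything, more explicit about where the rigor stops.
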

\begin{proof}
According to the theory of learning to rank~\cite{10.1561/1500000016}, by setting an appropriate $\lambda$, we have $D$ converge to the score function $f()$ iff $S$ is sufficient. Then the real desired data will be assigned higher scores than the real undesired data.

With Eq.~\eqref{eq:loss}, the generator is optimized to generate samples with scores as high as possible while the critic is optimized to assign the generated samples lower than the training samples. As only training samples preferred by the user are assigned with high scores, when the adversarial training converges, the generated samples are alike samples with high scores, i.e., the desired data, which shares the same principle in~\cite{Goodfellow2014,pmlr-v70-arjovsky17a}. Therefore, $p_{\uptheta}(x)$ converges to $p_{\mathrm{d}}(x)$.
\end{proof}

\begin{proposition}\label{prop:minor_2}
In the case of the minor correction where $d\left(p_{\mathrm{r}}(x), p_{\mathrm{d}}(x)\right) \leq \varepsilon$ and $p_{\uptheta}(x) = p_{\mathrm{d}}(x)$, we can prove that $\mathbb{E}_{p_{{\uptheta}}(x)}\left[{D}\left(x\right)\right] = \mathbb{E}_{p_{\mathrm{r}}(x)}\left[{D}(x)\right] + \delta $, for some $\delta>0$.
\end{proposition}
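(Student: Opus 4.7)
The plan is to unpack both expectations by viewing the real data distribution $p_{\mathrm{r}}$ as a mixture of the desired and undesired subpopulations, and then invoke the ranking property of the differential critic $D$ established in Proposition~\ref{prop:minor}.

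First, I would write $p_{\mathrm{r}}(x) = \alpha\, p_{\mathrm{d}}(x) + (1-\alpha)\, p_{\mathrm{u}}(x)$, where $\alpha \in (0,1)$ is the fraction of desired samples in the training set (the assumption that only \emph{part} of the data is desired guarantees $\alpha < 1$, and the existence of user preferences guarantees $\alpha > 0$). By linearity of expectation, this gives
\begin{equation*}
\mathbb{E}_{p_{\mathrm{r}}(x)}[D(x)] = \alpha\, \mathbb{E}_{p_{\mathrm{d}}(x)}[D(x)] + (1-\alpha)\, \mathbb{E}_{p_{\mathrm{u}}(x)}[D(x)].
\end{equation*}
Using the hypothesis $p_{\uptheta}(x) = p_{\mathrm{d}}(x)$, subtracting this from $\mathbb{E}_{p_{\uptheta}(x)}[D(x)] = \mathbb{E}_{p_{\mathrm{d}}(x)}[D(x)]$ and collecting terms yields
\begin{equation*}
\mathbb{E}_{p_{\uptheta}(x)}[D(x)] - \mathbb{E}_{p_{\mathrm{r}}(x)}[D(x)] = (1-\alpha)\bigl(\mathbb{E}_{p_{\mathrm{d}}(x)}[D(x)] - \mathbb{E}_{p_{\mathrm{u}}(x)}[D(x)]\bigr).
\end{equation*}

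Next, I would argue that the bracketed difference on the right-hand side is strictly positive. This is exactly the consequence of the pairwise ranking loss $h(s)$ in Eq.~\eqref{eq:margin} together with the convergence result used inside Proposition~\ref{prop:minor}: with enough preference supervision $\mathrm{S}$ and appropriate $\lambda$, the critic $D$ behaves as the score function $f(\cdot)$, assigning strictly higher critic values (by at least the ranking margin $m$ in expectation) to desired samples than to undesired ones. Taking expectations over $p_{\mathrm{d}}$ and $p_{\mathrm{u}}$ preserves this strict ordering, so $\mathbb{E}_{p_{\mathrm{d}}(x)}[D(x)] > \mathbb{E}_{p_{\mathrm{u}}(x)}[D(x)]$.

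Combining the two observations, I would set
\begin{equation*}
\delta := (1-\alpha)\bigl(\mathbb{E}_{p_{\mathrm{d}}(x)}[D(x)] - \mathbb{E}_{p_{\mathrm{u}}(x)}[D(x)]\bigr),
\end{equation*}
which is strictly positive since $\alpha \in (0,1)$ and the critic gap is positive, establishing the desired identity. The main obstacle is the middle step: rigorously justifying the strict critic gap rather than merely a non-negative one. I would handle this by appealing to the fact that the margin-based hinge loss in Eq.~\eqref{eq:margin} pushes the critic gap on labeled preference pairs to be at least $m>0$, and (under a mild coverage assumption that preferences in $\mathrm{S}$ are representative of the desired/undesired split) this lower bound transfers to the population-level expectations, yielding $\delta \geq (1-\alpha)m > 0$.
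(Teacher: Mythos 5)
Your proof is correct and follows essentially the same route as the paper: decompose $p_{\mathrm{r}}$ as a mixture of $p_{\mathrm{d}}$ and $p_{\mathrm{u}}$, use linearity of expectation, and conclude from the critic's ranking property that $\mathbb{E}_{p_{\mathrm{d}}(x)}[D(x)] > \mathbb{E}_{p_{\mathrm{u}}(x)}[D(x)]$, so that $\delta$ equals the undesired-data fraction times this critic gap (your $\alpha$ is the desired fraction whereas the paper's is the undesired fraction, a purely notational swap). The only cosmetic difference is that the paper formalizes the strict gap via the mean value theorem for integrals applied to the threshold condition $D(x)>T$ on $p_{\mathrm{d}}$ versus $D(x)\leq T$ on $p_{\mathrm{u}}$, while you argue it directly from the pointwise ordering induced by the margin loss; both rest on the same separation assumption.
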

\begin{proof}
Without loss of generality, we represent $p_{\mathrm{r}}(x)$ and $p_{\uptheta}(x)$ in a fine-grain formulation. Namely,
\begin{equation}\label{eq_slpit}
   p_{\mathrm{r}}(x) =  (1-\alpha) p_{\mathrm{d}}(x) + \alpha p_{\mathrm{u}}(x), \quad p_{\uptheta}(x) =  p_{\mathrm{d}}(x),
\end{equation}
where $\alpha \in [0,1]$ is a very small value such as $d\left(p_{\mathrm{r}}(x), p_{\mathrm{d}}(x)\right) \leq \varepsilon$ is satisfied.

Furthermore, according to our definition of the ranking model, the score for the desired data should be higher than that of the undesired data, namely 
\begin{equation}\label{eq:c}
D(x)
\begin{cases}
>T, & \text{if }x \sim p_{\mathrm{d}}(x);\\
\leq {T}, & \text{if }x \sim p_{\mathrm{u}}(x),
\end{cases}
\end{equation}
where $o_{min} < {T} < o_{max}$. (1) $(o_{min}, o_{max})$ is introduced since the critic score is always bounded; (2) {$T$} denotes some value to discriminate the desired data from undesired data.   

Taking into consideration of both Eq.~\eqref{eq_slpit},~\eqref{eq:c}, we have 
\begin{align}\label{eq_main}
&\mathbb{E}_{p_{\mathrm{r}}(x)}\left[{D}\left(x\right)\right] 
= \int \left[(1-\alpha) p_{\mathrm{d}}(x) + \alpha p_{\mathrm{u}}(x)\right]  D(x) dx \\
&= \int \left[(1-\alpha) p_{\mathrm{d}}(x)\right] D(x)  dx  + \int \left[\alpha p_{\mathrm{u}}(x)\right]  D(x) dx \nonumber\\
&= \int p_{\mathrm{d}}(x) D(x) dx -\alpha ( \int p_{\mathrm{d}}(x) D(x) dx - \int p_{\mathrm{u}}(x) D(x) dx ).\nonumber\
\end{align}
Considering that (1) $p_{\mathrm{d}}(x)$ and $p_{\mathrm{u}}(x)$ are always positive; (2) $D(x)$ is continuous and bounded on the domain of $x$ with respect to $p_{\mathrm{d}}(x)$ and $p_{\mathrm{u}}(x)$, respectively, we have the following derivations according to the mean value theorem for integrals: 
\begin{subequations}
\begin{align*}
& \exists \ \xi_{\mathrm{d}} \in ({T}, o_{max}], \quad \int p_{\mathrm{d}}(x) D(x) dx = \xi_{\mathrm{d}} \int p_{\mathrm{d}}(x) dx = \xi_{\mathrm{d}}; \\
& \exists \ \xi_{\mathrm{u}} \in [o_{min}, {T}], \quad \int p_{\mathrm{u}}(x) D(x) dx = \xi_{\mathrm{u}} \int p_{\mathrm{u}}(x) dx = \xi_{\mathrm{u}}. 
\end{align*}
\end{subequations}
Since $\xi_{\mathrm{d}} > \xi_{\mathrm{u}}$, we have
\begin{align*}
& \alpha \left( \int p_{\mathrm{d}}(x) D(x) dx - \int p_{\mathrm{u}}(x) D(x) dx \right) = \alpha(\xi_{\mathrm{d}} - \xi_{\mathrm{u}}) >0.
\end{align*}
$\Longrightarrow \textrm{Eq.~\eqref{eq_main}} = \mathbb{E}_{p_{\mathrm{d}}(x)}\left[{D}\left(x\right)\right] - \delta,  \textrm{where }\delta = \alpha(\xi_{\mathrm{d}} - \xi_{\mathrm{u}}) >0.$
Furthermore, by replacing $p_{\mathrm{d}}(x)$ with $p_{\uptheta}(x)$, we have 
\begin{equation*}
    \mathbb{E}_{p_{{\uptheta}}(x)}\left[{D}\left(x\right)\right] = \mathbb{E}_{p_{\mathrm{r}}(x)}\left[{D}(x)\right] + \delta, 
\end{equation*}
for some $\delta>0$.
\qedhere
\end{proof}

\begin{corollary}\label{cor:minor}
The minor correction moves $p_{\uptheta}$ towards $p_{\mathrm{d}}$ with distance $\delta$ compared to $p_{\mathrm{r}}$, i.e., $d(p_{\mathrm{r}}, p_{\mathrm{d}})-d(p_{\uptheta}, p_{\mathrm{d}})=\delta$. 
\end{corollary}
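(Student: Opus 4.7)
The plan is to derive Corollary~\ref{cor:minor} as a direct algebraic consequence of Proposition~\ref{prop:minor_2} together with the definition of the critic-based distance in Eq.~\eqref{eq:objective_3}. Recall that for the trained differential critic $D$, the distribution distance is
\[
    d(p, q) \;=\; \bigl|\mathbb{E}_{p(x)}[D(x)] - \mathbb{E}_{q(x)}[D(x)]\bigr|.
\]
My intention is simply to instantiate this definition with the pairs $(p_{\mathrm{r}}, p_{\mathrm{d}})$ and $(p_{\uptheta}, p_{\mathrm{d}})$ and reduce the difference using the identity established in Proposition~\ref{prop:minor_2}.

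First, I would invoke the hypothesis of Proposition~\ref{prop:minor_2}, namely that $p_{\uptheta}(x) = p_{\mathrm{d}}(x)$ and $d(p_{\mathrm{r}}, p_{\mathrm{d}}) \leq \varepsilon$, to obtain the key identity
\[
    \mathbb{E}_{p_{\uptheta}(x)}[D(x)] \;=\; \mathbb{E}_{p_{\mathrm{r}}(x)}[D(x)] + \delta, \qquad \delta>0.
\]
Combined with $p_{\uptheta}=p_{\mathrm{d}}$, this rewrites as $\mathbb{E}_{p_{\mathrm{d}}(x)}[D(x)] - \mathbb{E}_{p_{\mathrm{r}}(x)}[D(x)] = \delta$. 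Plugging this into the distance definition yields $d(p_{\mathrm{r}}, p_{\mathrm{d}}) = |{-\delta}| = \delta$, while $d(p_{\uptheta}, p_{\mathrm{d}}) = 0$ trivially because the two distributions coincide. Subtracting gives the claimed equality
\[
    d(p_{\mathrm{r}}, p_{\mathrm{d}}) - d(p_{\uptheta}, p_{\mathrm{d}}) \;=\; \delta - 0 \;=\; \delta,
\]
which simultaneously certifies that the move is \emph{toward} $p_{\mathrm{d}}$ (since $\delta > 0$).

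There is essentially no combinatorial or analytic obstacle here; the only point that warrants care is the handling of the absolute value in the definition of $d$, i.e., making explicit that $\delta > 0$ (guaranteed by Proposition~\ref{prop:minor_2}) so that $|-\delta|=\delta$ rather than an expression of indeterminate sign. A secondary cosmetic issue is that the corollary is stated for a generic ``minor correction'' $p_{\uptheta}$, whereas Proposition~\ref{prop:minor_2} fixes $p_{\uptheta}=p_{\mathrm{d}}$ as the ideal minor-correction outcome; I would therefore preface the computation by noting that the corollary quantifies precisely the distance collapsed in one ideal minor correction step, after which the iterative scheme of Eq.~\eqref{eq:x} chains such steps into a convergent sequence.
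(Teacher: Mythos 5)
Your derivation is correct and coincides with the paper's (implicit) argument: the paper states Corollary~\ref{cor:minor} with no separate proof, treating it exactly as you do --- as an immediate consequence of Proposition~\ref{prop:minor_2} together with the definition of $d$ in Eq.~\eqref{eq:objective_3}. One note: the issue you call cosmetic is really the substantive one, since under the hypothesis $p_{\uptheta}=p_{\mathrm{d}}$ your computation forces $d(p_{\mathrm{r}},p_{\mathrm{d}})=\delta$ and $d(p_{\uptheta},p_{\mathrm{d}})=0$, which sits in tension with the later use of the corollary in Proposition~\ref{prop:major}, where $d(p_{\mathrm{r}},p_{\mathrm{d}})=T_0$ is treated as independent of, and typically much larger than, $\delta$.
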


\begin{proposition}\label{prop:major}
In the case of the major correction, i.e., $d\left(p_{\mathrm{r}}, p_{\mathrm{d}}\right)=T_0$, the distance between $p_{\uptheta}(x)$ and $p_{\mathrm{d}}(x)$ converges to $d(p_{\uptheta}^k, p_{\mathrm{d}})=T_0-k\delta$ after $k$ minor corrections.
\end{proposition}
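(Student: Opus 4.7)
The plan is to proceed by induction on the number $k$ of minor corrections, using Corollary \ref{cor:minor} as the per-step recurrence and the data replacement mechanism (Eq.~\eqref{eq:x}) to glue successive minor corrections together. Write $p_{\mathrm{r}}^k$ for the training distribution and $p_{\uptheta}^k$ for the generated distribution after the $k$-th minor correction, with $p_{\mathrm{r}}^0 = p_{\mathrm{r}}$ and $d(p_{\mathrm{r}}^0, p_{\mathrm{d}}) = T_0$. The goal is to establish $d(p_{\uptheta}^k, p_{\mathrm{d}}) = T_0 - k\delta$.

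For the base case $k=1$, observe that at the first minor correction the WGAN constraint is imposed between the generator and the original training distribution $p_{\mathrm{r}}^0$, so Corollary \ref{cor:minor} applies directly and gives $d(p_{\uptheta}^1, p_{\mathrm{d}}) = d(p_{\mathrm{r}}^0, p_{\mathrm{d}}) - \delta = T_0 - \delta$. For the inductive step, assume $d(p_{\uptheta}^k, p_{\mathrm{d}}) = T_0 - k\delta$. The key bridge is the replacement step (Eq.~\eqref{eq:x}), which gradually substitutes old training samples with samples drawn from $p_{\uptheta}^k$; once the replacement is complete the effective training distribution satisfies $p_{\mathrm{r}}^{k+1} = p_{\uptheta}^k$, and hence $d(p_{\mathrm{r}}^{k+1}, p_{\mathrm{d}}) = T_0 - k\delta$. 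Applying Corollary \ref{cor:minor} at the $(k+1)$-th minor correction then yields
\begin{equation*}
d(p_{\uptheta}^{k+1}, p_{\mathrm{d}}) = d(p_{\mathrm{r}}^{k+1}, p_{\mathrm{d}}) - \delta = T_0 - (k+1)\delta,
\end{equation*}
closing the induction.

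To finish, I would note that since each step contracts the distance by a fixed $\delta > 0$ guaranteed by Proposition \ref{prop:minor_2}, the recursion $d(p_{\uptheta}^k, p_{\mathrm{d}}) = T_0 - k\delta$ must terminate (be truncated at $0$) once $k \delta \geq T_0$, at which point the major correction regime transitions into the minor correction regime covered by Proposition \ref{prop:minor}, giving $p_{\uptheta}^k = p_{\mathrm{d}}$.

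The main obstacle is the inductive step's implicit assumption that $p_{\mathrm{r}}^{k+1} = p_{\uptheta}^k$ after replacement. In practice, Eq.~\eqref{eq:x} only replaces $n_{\mathrm{g}}$ samples at a time, so $p_{\mathrm{r}}^{k+1}$ is a mixture of residual old data and freshly generated data. I would address this by arguing (either informally, or by an additional sub-lemma) that Corollary \ref{cor:minor} is invoked not between full distribution replacements but at each minor correction where the training distribution has been updated so that $d(p_{\mathrm{r}}^{k}, p_{\mathrm{d}})$ has decreased by exactly $\delta$ relative to the previous round; under the stated idealization (full replacement per minor correction, as implied by the convergence argument in Section~\ref{sec:3.3}), the cleaner arithmetic recurrence $T_0 - k\delta$ follows. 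A rigorous treatment of the partial-replacement case would require tracking a convex combination of the old and new distributions and showing a proportional decrease, which is the technical point most likely to require care.
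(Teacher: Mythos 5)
Your proof matches the paper's argument essentially verbatim: an induction driven by Corollary~\ref{cor:minor} at each minor correction, with the training distribution updated to $p_{\uptheta}^{k}$ between rounds. The "obstacle" you flag is resolved in the paper exactly as you suggest --- its proof explicitly assumes that \emph{all} training samples are replaced by the generated samples after each minor correction, so the idealized recurrence $d(p_{\uptheta}^{k}, p_{\mathrm{d}})=T_0-k\delta$ follows directly.
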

\begin{proof}
The major correction is divided into a sequence of minor corrections. 
In the first minor correction, the training data distribution is $p_{\mathrm{r}}(x)$. Derived by Corollary~\ref{cor:minor}, after this minor correction, we have 
\begin{equation*}
    d(p_{\mathrm{r}}, p_{\mathrm{d}})-d(p_{\uptheta}^1, p_{\mathrm{d}})=\delta.
\end{equation*} 
With $d\left(p_{\mathrm{r}}, p_{\mathrm{d}}\right)=T_0$, we can get
\begin{equation*}
    d(p_{\uptheta}^1, p_{\mathrm{d}})=T_0-\delta.
\end{equation*}

In the second minor correction, we can replace all training samples with the generated samples obtained in the first correction. Thus, the training data distribution becomes $p_{\uptheta}^1(x)$. After two minor corrections, similarly, we can get
\begin{equation*}
    d(p_{\uptheta}^2, p_{\mathrm{d}})=T_0-2\delta.
\end{equation*}
So on and so forth. After $k$ minor corrections, we can get 
\begin{equation*}
    d(p_{\uptheta}^k, p_{\mathrm{d}})=T_0-k\delta.
\qedhere
\end{equation*}
\end{proof}

\begin{corollary}\label{cor:major}
In the case of major correction, $p_{\uptheta}(x)$ converges to $p_{\mathrm{d}}(x)$ after $K$ minor corrections, where $K=\lceil \frac{T_0}{\delta} \rceil$.
\end{corollary}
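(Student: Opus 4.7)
The plan is to derive the corollary as a direct consequence of Proposition~\ref{prop:major}, treating it essentially as an algebraic unrolling: given that each minor correction reduces the distance $d(p_{\uptheta}^k, p_{\mathrm{d}})$ by exactly $\delta$ from the starting value $T_0$, convergence to the desired distribution reduces to finding the smallest integer number of corrections that drives this distance to zero.

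First, I would invoke Proposition~\ref{prop:major} to write $d(p_{\uptheta}^k, p_{\mathrm{d}}) = T_0 - k\delta$ after $k$ minor corrections. Since any statistical distance must satisfy $d(p_{\uptheta}^k, p_{\mathrm{d}}) \geq 0$, convergence in the sense $p_{\uptheta}(x) = p_{\mathrm{d}}(x)$ is equivalent to requiring $T_0 - k\delta \leq 0$. Solving this linear inequality for $k$ yields $k \geq T_0/\delta$, and the smallest integer satisfying the bound is $K = \lceil T_0/\delta \rceil$, as claimed.

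The argument is almost entirely bookkeeping, so the main obstacle is conceptual rather than technical: one must justify why the formula $d(p_{\uptheta}^k, p_{\mathrm{d}}) = T_0 - k\delta$ can be interpreted as ``at most'' once $k\delta$ approaches or exceeds $T_0$, since otherwise the expression would become negative and contradict the non-negativity of a distance. I would address this briefly by noting that the per-step decrement $\delta$ in Corollary~\ref{cor:minor} arises from Proposition~\ref{prop:minor_2} which implicitly assumes a non-degenerate $\alpha \in [0,1]$; once the current training distribution has essentially collapsed onto $p_{\mathrm{d}}$, further minor corrections produce $\delta = 0$, so the recursion saturates at zero rather than crossing it. Consequently the upper-bound reading $d(p_{\uptheta}^k, p_{\mathrm{d}}) \leq \max(T_0 - k\delta, 0)$ is the correct interpretation, and the ceiling $K = \lceil T_0 / \delta \rceil$ is exactly the number of minor corrections needed for this upper bound to vanish, completing the proof.
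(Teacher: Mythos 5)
Your proposal is correct and follows essentially the same route as the paper: unroll Proposition~\ref{prop:major} to get $d(p_{\uptheta}^k, p_{\mathrm{d}})=T_0-k\delta$, invoke non-negativity and monotonicity of the distance, and read off $K=\lceil T_0/\delta\rceil$ as the first $k$ at which the distance reaches zero. Your added remark that the decrement saturates (so the formula should be read as $\max(T_0-k\delta,0)$ when $T_0/\delta$ is not an integer) is a small refinement the paper itself glosses over, but it does not change the argument.
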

\begin{proof}
Since $d(p_{\uptheta}^k, p_{\mathrm{d}})>=0$ and $d(p_{\uptheta}^k, p_{\mathrm{d}})=T_0-k\delta$ decreases as $k$ increases, $d(p_{\uptheta}^k, p_{\mathrm{d}})$ converges to zero when $k \rightarrow +\infty$ according to the monotone convergence theorem.
Specifically, when $k=\lceil \frac{T_0}{\delta} \rceil$, $d(p_{\uptheta}^k, p_{\mathrm{d}})=0$.
\end{proof}

From Proposition~\ref{prop:minor} and Corollary~\ref{cor:major}, we conclude that in DiCGAN, the distribution of generated samples $p_{\mathrm{\uptheta}}(x)$ converges to $p_{\mathrm{d}}(x)$.

\subsection{Technical Novelty of DiCGAN}

We elaborate our DiCGAN’s technical novelty and its significance in terms of the following four aspects:
\begin{itemize}[leftmargin=.15in]
    \item \textbf{The first one to apply user preferences for desired data generation.} Current approaches for desired data generation require expensive global knowledge, which is usually not available. Our DiCGAN uses local knowledge only -- local ranking information about user preferences.  
    \item \textbf{New insight for critic value.} Our DiCGAN considers the critic values as the ranking scores that represent user preferences. Based on this insight, we can incorporate user preferences into GAN’s learning instead of learning the score function \mbox{for user preferences independently of GAN’s training:}
    \begin{itemize}[leftmargin=.2in]
        \item[1.] \textit{Naive combination between user preferences and GAN does not work.} Introducing an additional critic that learns from user preferences onto WGAN would lead to the conflict between WGAN's original critic for good quality generation and the extra critic for desired data generation.
        \item[2.] As critic values can represent data quality and user preferences, we define a differential critic by defining an additional pairwise ranking loss on the WGAN’s critic and build DiCGAN (Eq.~\eqref{eq:objective}). Then the original WGAN's critic loss encourages:
        \begin{equation*}
            x_1>x_2 \text{ for } x_1\sim p_{\mathrm{d}}(x) \text{ and } x_2 \sim p_{\mathrm{u}}(x);
        \end{equation*} 
        and the ranking loss encourages:
        \begin{equation*}
            x_1>x_2 \text{ for } x_1\sim p_{\mathrm{r}}(x) \text{ and } x_2 \sim p_{\uptheta}(x).
        \end{equation*} 
        \textit{The critic would guide the generation with high critic values, encouraging the generation of user-desired data with good quality.}
    \end{itemize}
    \item \textbf{Efficient solution by an equivalent form with a hard constraint.} The naive form of DiCGAN (Eq.~\eqref{eq:objective}) requires heavy hyper-parameter tuning when there is a large distance gap between the distributions of the desired data and the whole data. Thus, we propose an equivalent form of DiCGAN (Eq.~\eqref{eq:objective_3}). Based on it, we derive a more efficient solution in terms of minor correction and major correction, \mbox{which can always ensure good data quality.}
    \item \textbf{The first rigorous model for desired data generation.} To the best of our knowledge, no previous work theoretically studies this problem. The above three points pave the way for the theoretical convergence proof of desired data generation:
    \begin{itemize}[leftmargin=.15in]
        \item[1.] Because of DiCGAN’s form with a hard distance constraint, we can analyze the convergence of DiCGAN under the minor correction and the major correction.
        \item[2.] Since we interpret the critic values in DiCGAN as the ranking scores, the relationship between the user preferences (reflected by ranking scores) and the distribution distance (represented by critic values) [Proposition~\ref{prop:minor_2} and Corollary~\ref{cor:minor}] can be derived.  This is the first time that such a relationship is rigorously shown.
    \end{itemize}
\end{itemize}

\subsection{\revision{Discussions about Pairwise Regularization to Generator}}
\label{sec:prg}
\revision{
In this section, we claim that adding the pairwise regularization to the generator requires heavy supervision and is invalid. Our DiCGAN thus does not consider such regularization.
}

\revision{
As the target is to learn the desired data distribution, the regularization on the generator can be used to make the critic values of the generated samples larger than those of the undesired samples. Specifically, a selector is first applied to give a full ranking for the training data, and then the bottom $K_0$ samples are picked up as the undesired samples. The pairwise preferences are then defined over the generated samples and the undesired samples. Note that the undesired subset of the training data requires labeling all training data.
}

We consider two cases of adding the regularization to the generator. 
First, we only add the pairwise regularization to the generator (PRG-1). Second, we add the regularization to the generator together with the regularization on the critic (PRG-2). 

The objective for PRG-1 is as follows:
\begin{equation}\label{eq:loss_pgr1}
\begin{aligned}
    L_D &= \mathbb{E}_{p_{\mathrm{r}}(x)}\left[{D}(x)\right]
    -\mathbb{E}_{p_{\mathrm{\uptheta}}(x)}\left[{D}\left(x\right)\right], \\
    L_G &= \mathbb{E}_{p_{\mathrm{\uptheta}}(x)}\left[{D}\left(x\right)\right]
    -\lambda_g\frac{1}{|\mathrm{S}^{'}|}\sum_{s\in \mathrm{S}^{'}}\left[h\left(s\right)\right],
\end{aligned}
\end{equation}
where $h(s)$ is Eq.~\eqref{eq:margin}. $\mathrm{S}^{'}$ is the pairwise preferences constructed between the generated data and the undesired data, i.e., $\mathrm{S}^{'}=\big\{s=(x_1, x_2)|x_1 \succ x_2, x_1 \sim p_{\mathrm{\uptheta}}(x), x_2 \sim p_{\mathrm{u}}(x)\big\}$. Now the generator consists of two terms, the original WGAN loss on the generator aims to achieve $\mathbb{E}_{p_{\mathrm{\uptheta}}(x)}\left[{D}\left(x\right)\right]>\mathbb{E}_{p_{\mathrm{r}}(x)}\left[{D}(x)\right]$, while the regularization aims to achieve $\mathbb{E}_{p_{\mathrm{\uptheta}}(x)}\left[{D}\left(x\right)\right]>\mathbb{E}_{p_{\mathrm{u}}(x)}\left[{D}(x)\right]$. Since the undesired data is a subset of the real data, i.e., $\{x|x \sim p_{\mathrm{u}}(x)\} \subseteq	\{x|x \sim p_{\mathrm{r}}(x)\}$, the WGAN loss always dominates the training of the generator. Therefore, \mbox{PRG-1} degenerates to WGAN.

The objective for PRG-2 is as follows:
\begin{align}\label{eq:loss_pgr2}
    L_D &= \mathbb{E}_{p_{\mathrm{r}}(x)}\left[{D}(x)\right]
    -\mathbb{E}_{p_{\mathrm{\uptheta}}(x)}\left[{D}\left(x\right)\right]-\lambda\frac{1}{|\mathrm{S}|}\sum_{s\in \mathrm{S}}\left[h\left(s\right)\right], \nonumber \\
    L_G &= \mathbb{E}_{p_{\mathrm{\uptheta}}(x)}\left[{D}\left(x\right)\right]
    -\lambda_{g}\frac{1}{|\mathrm{S}^{'}|}\sum_{s\in \mathrm{S}^{'}}\left[h\left(s\right)\right],
\end{align}
where $\mathrm{S}$ is constructed based on~\eqref{eq:s}.
Although the generator consists of two terms, the same as our analysis about PRG-1, the extra pairwise regularization on the generator is invalid. Meanwhile, the extra pairwise regularization on the critic works like that in DiCGAN. Therefore, the whole framework degenerates to DiCGAN.

\begin{figure*}[tb]
    \centering
    \begin{subfigure}[b]{.29\linewidth}
    \centering
    \includegraphics[width=.82\linewidth]{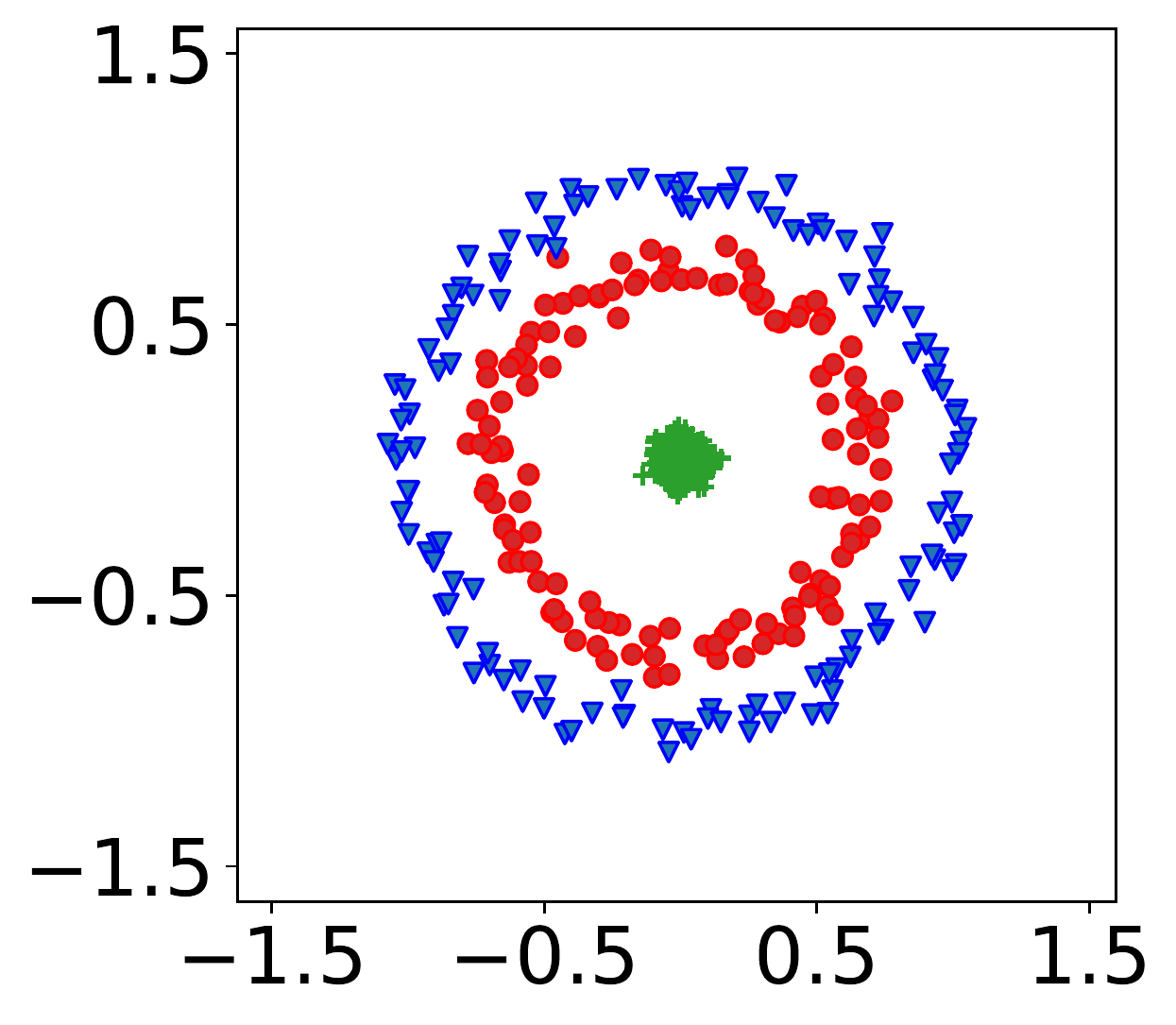}
    \caption{Data}
    \label{fig:circles}
  \end{subfigure}
  \begin{subfigure}[b]{.325\linewidth}
  \centering
    \includegraphics[width=.8\linewidth]{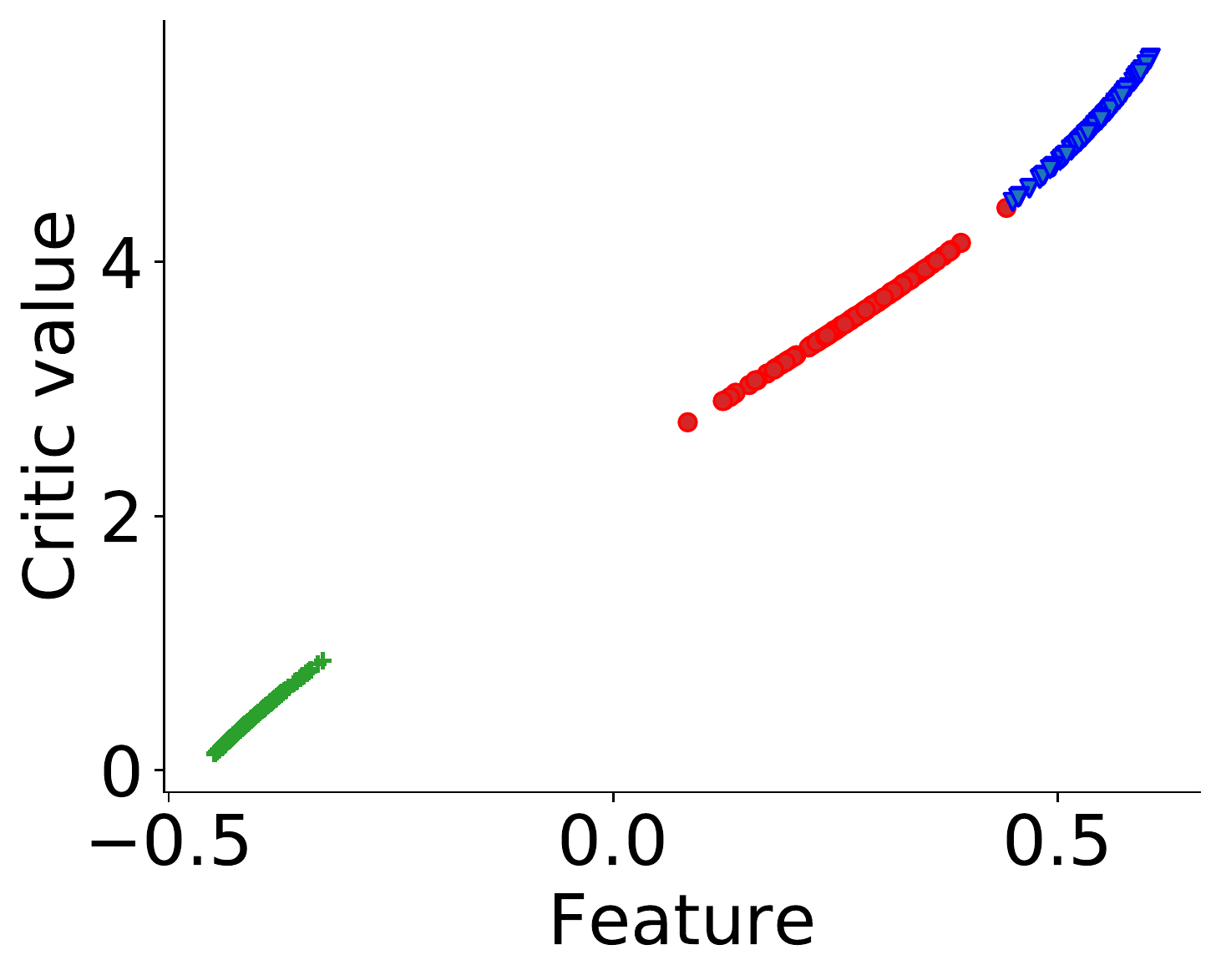}
    \caption{WGAN}
    \label{fig:wgan_circles_fixed_g}
  \end{subfigure}
  \begin{subfigure}[b]{.345\linewidth}
  \centering
    \includegraphics[width=.8\linewidth]{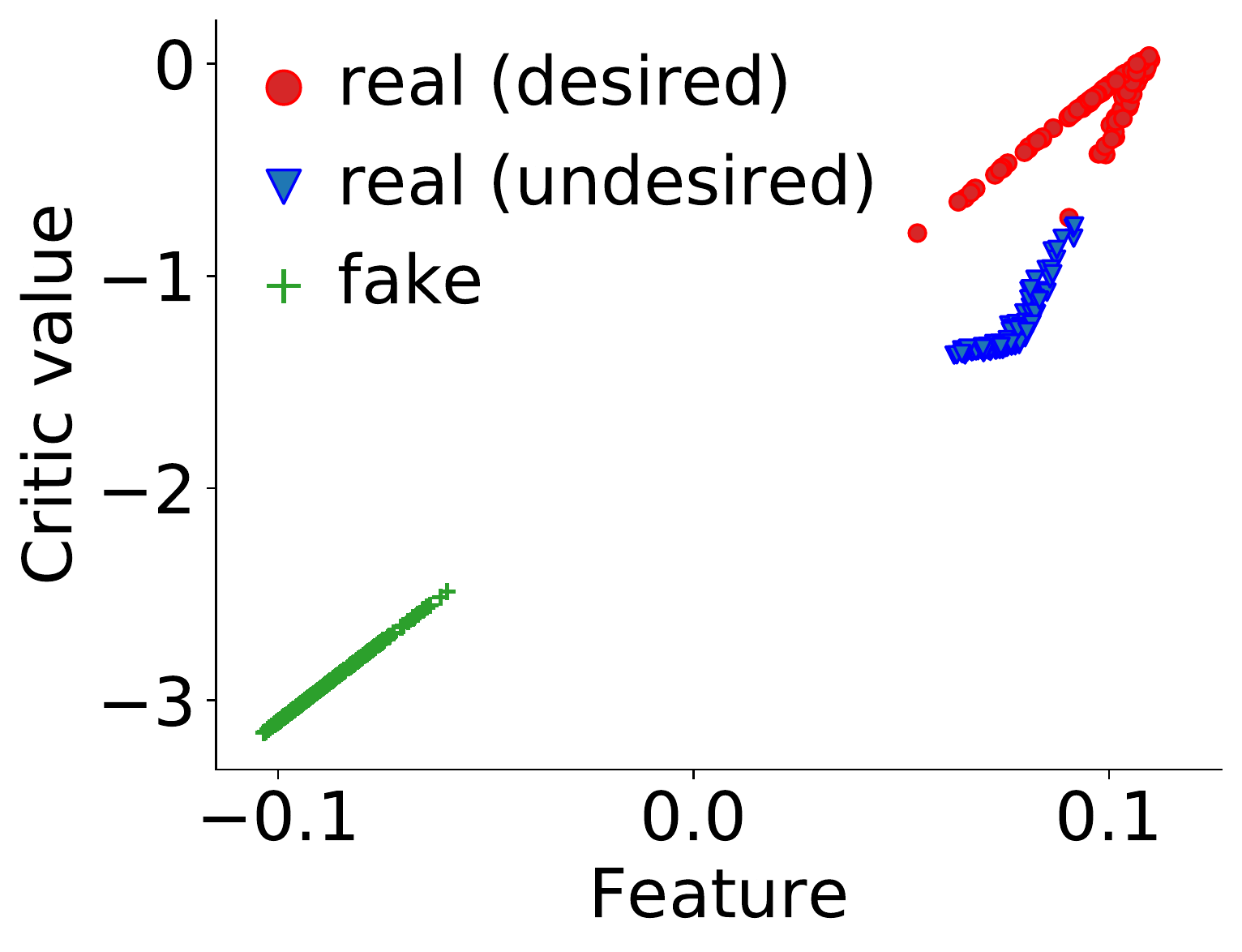}
    \caption{DiCGAN}
    \label{fig:dicgan_circles_fixed_g}
  \end{subfigure} \vskip-0.05in
  \caption{ \label{fig:compare_D} Comparison of the critic in (b) WGAN and (c) DiCGAN. DiCGAN's critic can assign higher critic values for real desired data than real undesired data while WGAN's critic cannot. ``Feature'' is obtained by using kernel PCA to project the output on the second last layer of the critic into 1D space.}\vskip-0.1in
\end{figure*}

\begin{figure*}[!htb]
    \centering
    \begin{subfigure}[b]{\linewidth} \hskip0.8in
    \includegraphics[width=.4\linewidth]{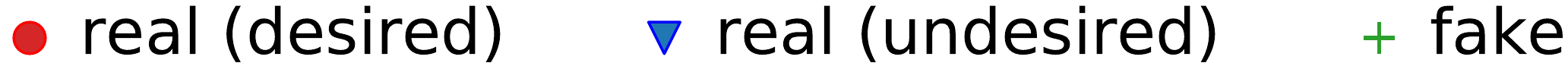}
  \end{subfigure}    \vskip-0.1in
    \begin{subfigure}[b]{.29\linewidth}
  \centering
    \includegraphics[width=.82\linewidth]{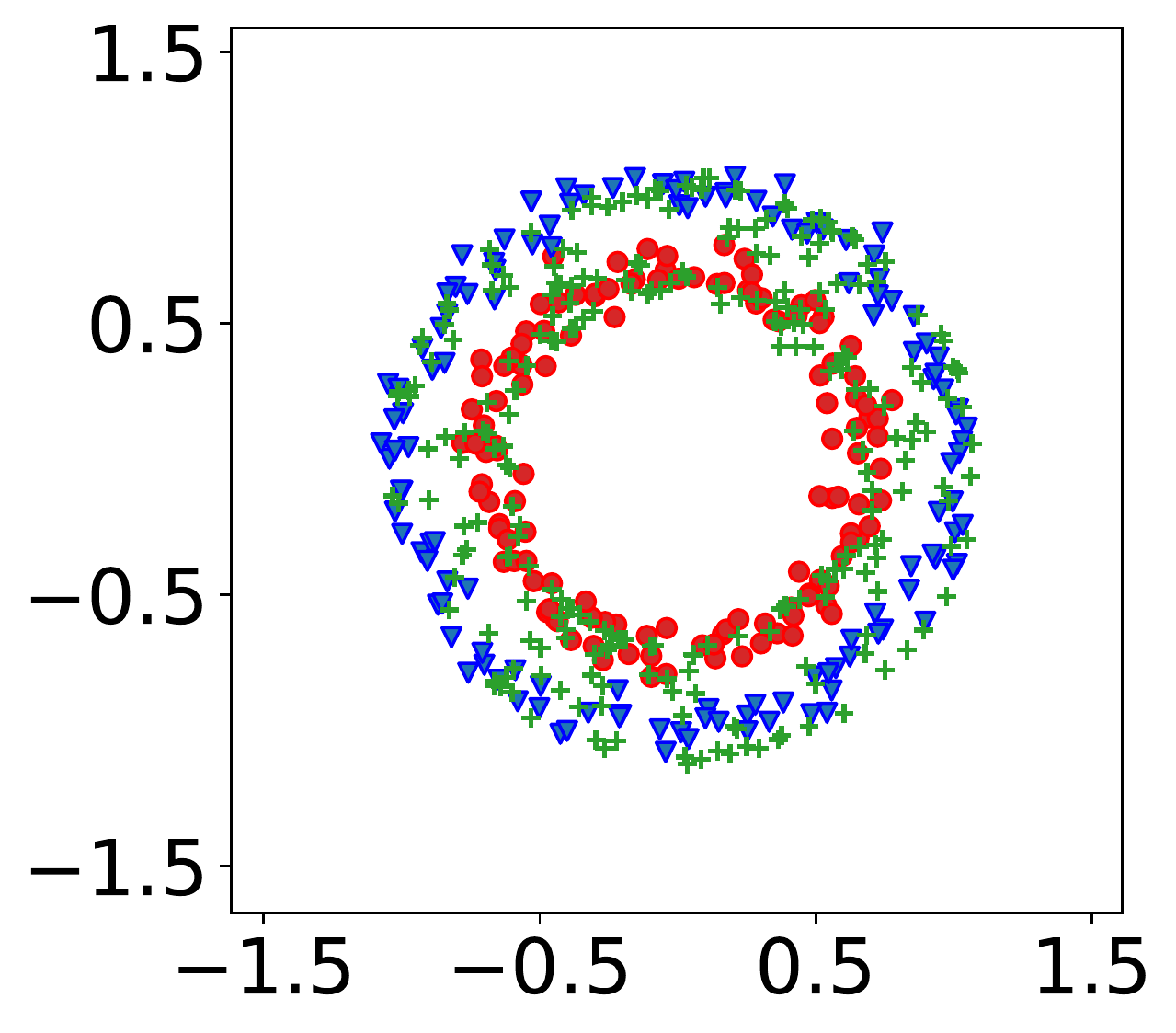}
    \caption{WGAN\label{fig:wgan_gen}}
    \label{fig:wgan_circle_samples}
  \end{subfigure}
  \begin{subfigure}[b]{.29\linewidth}
  \centering
    \includegraphics[width=.82\linewidth]{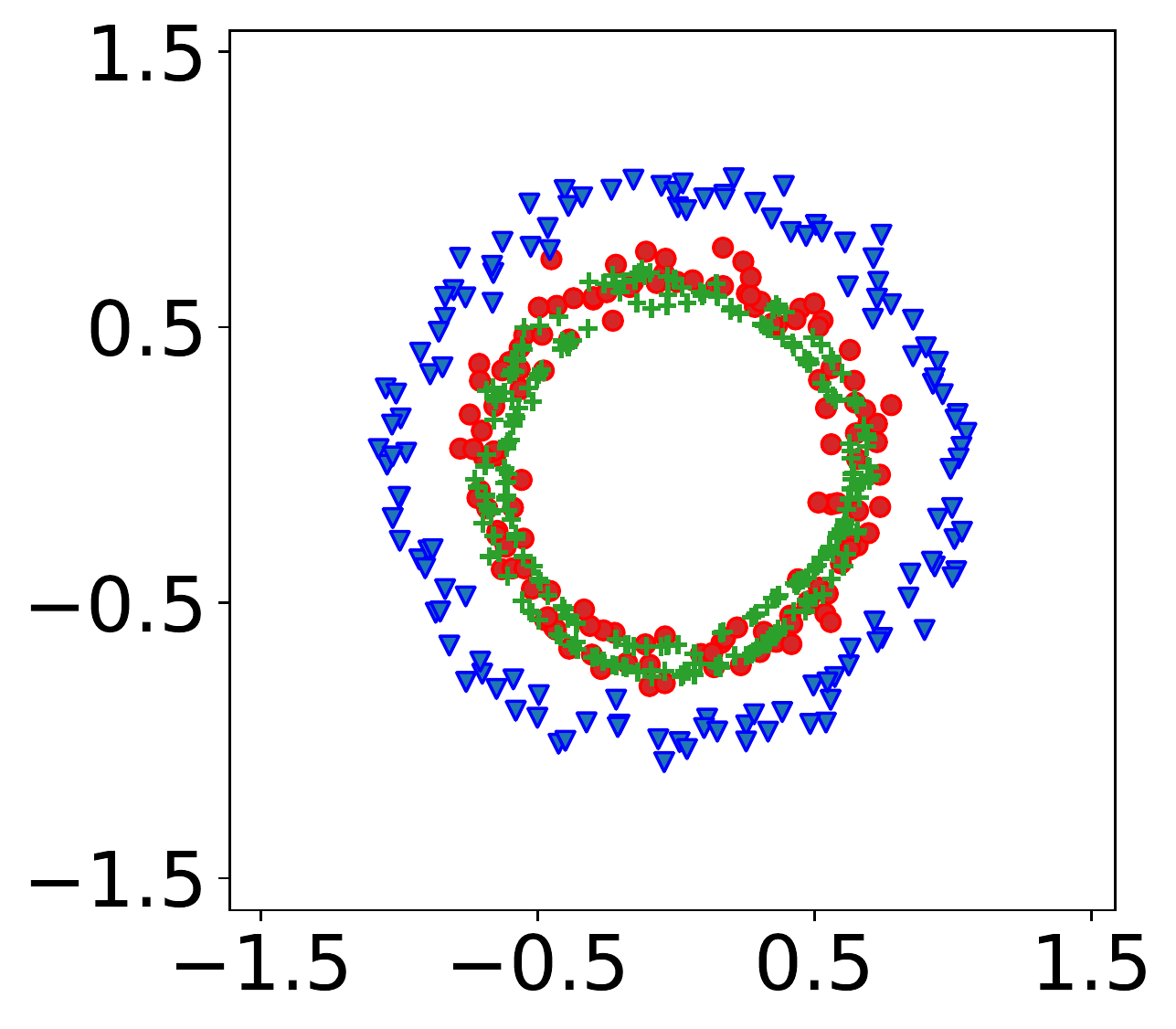}
    \caption{DiCGAN\label{fig:dicgan_gen}}
    \label{fig:dicgan_circle_samples}
  \end{subfigure}
   \begin{subfigure}[b]{.36\linewidth}
  \centering
    \includegraphics[width=.8\linewidth]{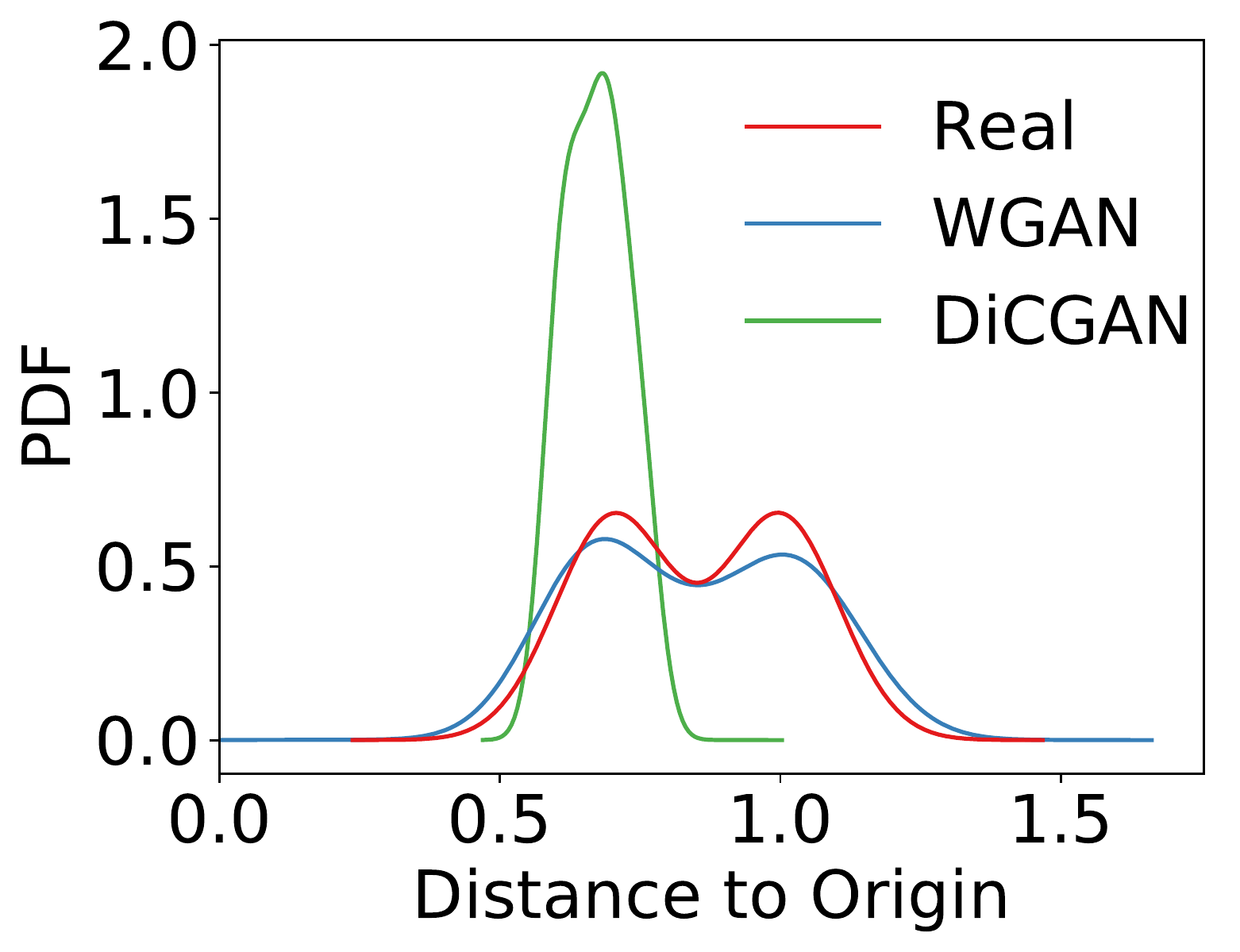}
    \caption{\label{fig:circle_dist_pdf}PDF vs. distance}
  \end{subfigure}\vskip-0.05in
  \caption{\label{fig:compare_G} (a-b) Visualization of the generated samples from WGAN and DiCGAN. The fake data is expected to overlap with the real desired data only. (c) Probability density function (PDF) vs.\mbox{ sample distance to the origin.}}\vskip-0.1in
\end{figure*}

\section{Case Study on Synthetic Data}
\label{sect:case study}
To gain an intuitive understanding of the differences between our DiCGAN and WGAN regarding the critic and the generator, we conduct a case study on a synthetic dataset.

The synthetic dataset consists of two concentric circles by adding Gaussian noise with a standard deviation of 0.05, which is a 2D mixture Gaussian distribution with two modes (See Fig.~\ref{fig:circles}). The samples located on the inner circle are considered to be the desired data, while the samples on the outer circle are defined as the undesired data. 
By labeling the desired data as $y=1$ and the undesired data as $y=0$,  we can construct the pairwise preference for two samples $x_1$ and $x_2$ based on their labels. Namely, $x_1\succ x_2$ if $y_1=1 \wedge y_2=0$, and vice versa. The pairs are constructed within each mini-batch. 
Our target is to learn the distribution of the desired data (i.e., samples on the inner circle), using the whole data along with the constructed pairwise preferences.

\subsection{WGAN vs DiCGAN on Critic}
Experiment setting: we fix the generator and simulate the fake data as the 2D Gaussian blob with a standard deviation of 0.05 (green pluses). We first train the critic until convergence. Then, we project the output on the second last layer of the critic into 1D space using kernel principal components analysis (PCA), to obtain the projected features. 
To explore the difference between the critics of WGAN and DiCGAN, we draw the curve of the critic values versus the projected features for WGAN and DiCGAN, respectively (Fig.~\ref{fig:wgan_circles_fixed_g},~\ref{fig:dicgan_circles_fixed_g}).

From Fig.~\ref{fig:wgan_circles_fixed_g},~\ref{fig:dicgan_circles_fixed_g}, we can see: $(1)$ in terms of the real data and the fake data, the critic of both WGAN and DiCGAN can achieve perfect discrimination. Meanwhile, the projected features of the real data and those of the fake data are also completely separated; $(2)$ in terms of the real desired data and the real undesired data, the critic of DiCGAN assigns higher values to the desired samples, compared to the undesired samples. This is because our ranking loss expects a higher ranking score (i.e., critic value) for the desired sample. $(3)$ In contrast, the critic of WGAN assigns lower values to the desired data since the desired data is closer to the fake data compared to the undesired data.

\subsection{WGAN vs DiCGAN on Generator}
\label{sec:comp_generator}
Experiment setting: we train the critic and the generator following the regular GANs' training procedure. The generation results of WGAN and DiCGAN are shown in Fig.~\ref{fig:wgan_gen}, ~\ref{fig:dicgan_gen}.

DiCGAN (shown in Fig.~\ref{fig:dicgan_gen}) only generates the user-desired data. Namely, generated data covers the inner circle. In contrast, WGAN (shown in Fig.~\ref{fig:wgan_gen}) generates all data. Namely, generated data covers the inner circle and the outer circle. 
As the critic in DiCGAN can guide the fake data towards the real data region and away from the undesired data region, the generator thus produces data that is similar to the real desired data. 
Because the critic in WGAN pushes the fake data to the region of all real data, the generator \mbox{finally produces the whole real-alike data.}

Further, we calculate the distance from the real samples to the origin and plot the probability density function versus the distance in Fig.~\ref{fig:circle_dist_pdf}. We also do this for the generated samples from WGAN and DiCGAN, respectively. It shows that DiCGAN only captures one mode of the real data distribution, consistent with the results that DiCGAN only produces desired samples. In contrast, WGAN captures all modes of the real data distribution, meaning that WGAN generates all real data.

\section{Experimental Study}
\label{sect:experiment}
Our DiCGAN for desired data generation has various applications in the real world. In particular, we apply our DiCGAN to two applications: 1) generating images that meet the user’s interest for a given dataset, which can be used for image search~\cite{49290}.
2) optimizing biological products with desired properties, which can automate the process of designing DNA sequences for usage in medicine and manufacturing~\cite{gupta2019feedback}.
In these applications, we verify that our DiCGAN only using local knowledge (i.e., user preferences) outperforms current methods relying on global knowledge when labels of desired data are limited.
Furthermore, we study the relation between critic values and user preferences as well as the effects of each component in DiCGAN.

\begin{figure*}[!t]
    \centering
  \begin{subfigure}[t]{.18\linewidth}
    \centering
    \includegraphics[width=\linewidth]{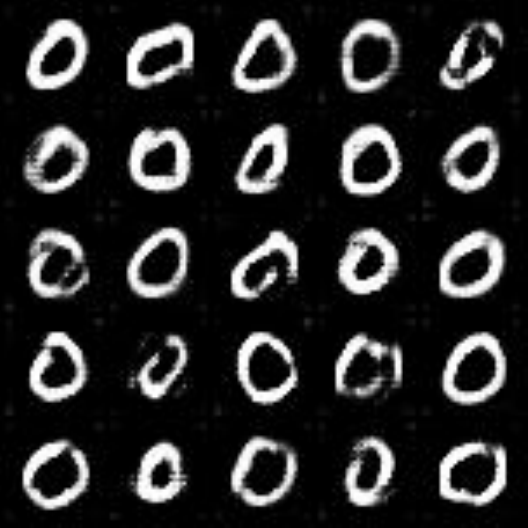}
    \caption{WGAN ($25/25$)}
  \end{subfigure}
  \begin{subfigure}[t]{.18\linewidth}
  \centering
    \includegraphics[width=\linewidth]{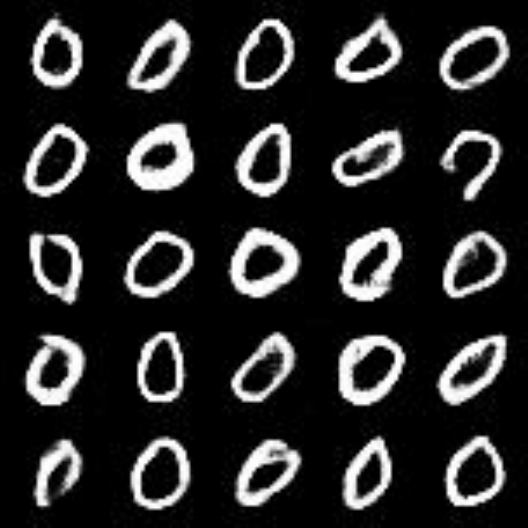}
    \caption{CWGAN ($25/25$)}
  \end{subfigure}
  \begin{subfigure}[t]{.18\linewidth}
  \centering
    \includegraphics[width=\linewidth]{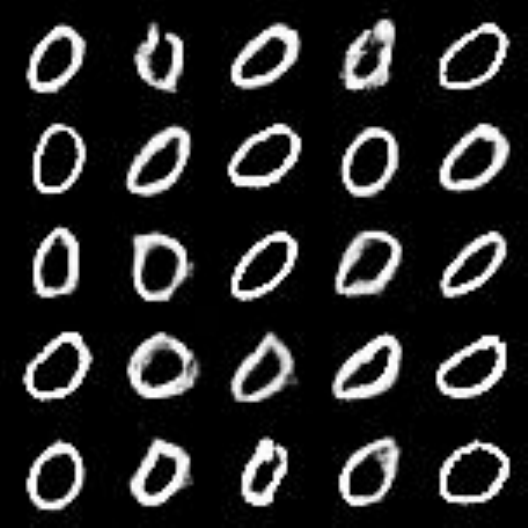}
    \caption{FBGAN ($25/25$)}
  \end{subfigure}
  \begin{subfigure}[t]{.18\linewidth}
  \centering
    \includegraphics[width=\linewidth]{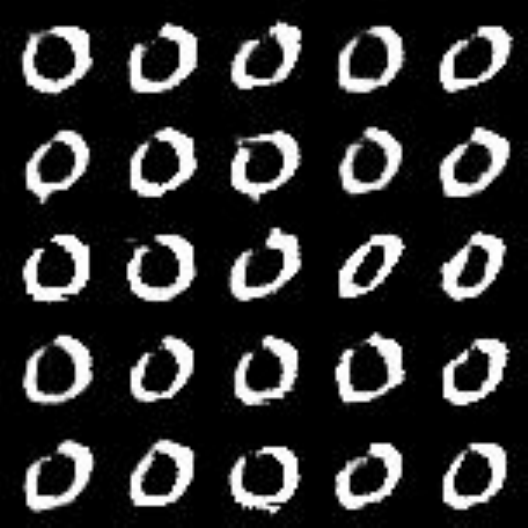}
    \caption{\revision{GAN-FT ($25/25$)}\label{fig:gan-ft_mnist}}
  \end{subfigure}
  \begin{subfigure}[t]{.18\linewidth}
  \centering
    \includegraphics[width=\linewidth]{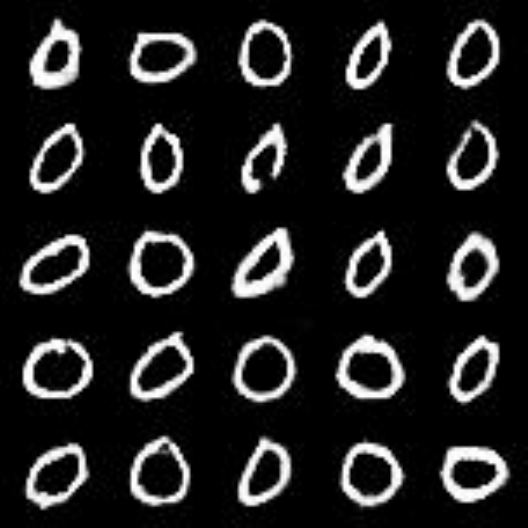}
    \caption{DiCGAN ($25/25$)}
  \end{subfigure} \vskip-0.05in
    \caption{\label{fig:mnist_baselines} Generated images on MNIST by (a) WGAN, (b) CWGAN, (c) FBGAN, \revision{(d) GAN-FT} and (e) DiCGAN}.\vskip-0.1in
\end{figure*}

\begin{figure*}[!t]
    \centering
    \begin{subfigure}[t]{.18\linewidth}
  \centering
    \includegraphics[width=\linewidth]{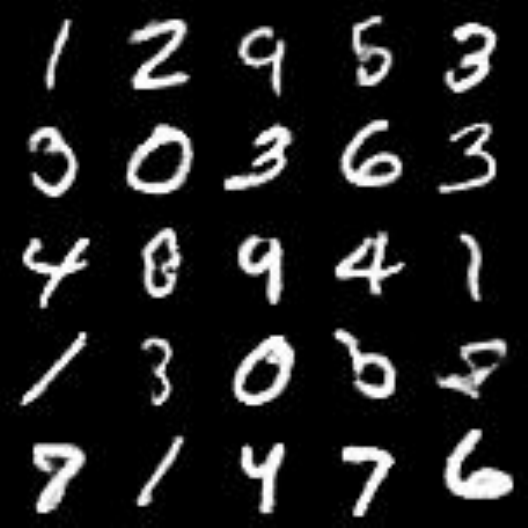}
    \caption*{Iter 0 ($1.0\%$)}
  \end{subfigure}
  \begin{subfigure}[t]{.18\linewidth}
    \includegraphics[width=\linewidth]{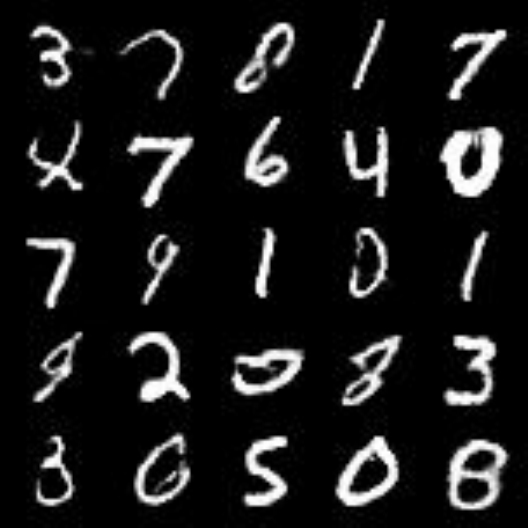}
    \caption*{Iter 200 ($16.1\%$)}
  \end{subfigure}
  \begin{subfigure}[t]{.18\linewidth}
  \centering
    \includegraphics[width=\linewidth]{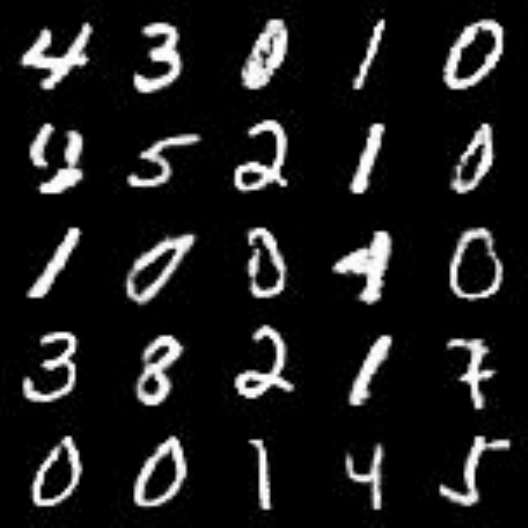}
    \caption*{Iter 400 ($28.2\%$)}
  \end{subfigure}
  \begin{subfigure}[t]{.18\linewidth}
    \includegraphics[width=\linewidth]{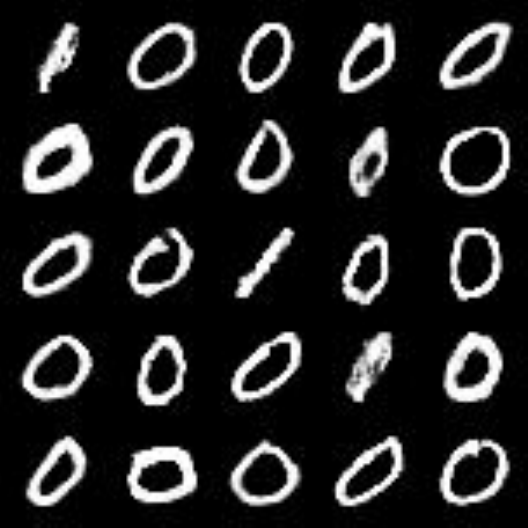}
    \caption*{Iter 1000 ($95.8\%$)}
  \end{subfigure}
  \begin{subfigure}[t]{.18\linewidth}
    \includegraphics[width=\linewidth]{Fig_mnist_dicgan.pdf}
    \caption*{Iter 2000 ($99.9\%$)}
  \end{subfigure} \vskip-0.05in
    \caption{\label{fig:mnist_samples}  Generated images of DiCGAN on MNIST during the training process. DiCGAN learns the distribution of small digits, which gradually generates more small digit images. The \% denotes the percentage of zero digits in $50K$ generated samples.}\vskip-0.1in
\end{figure*}
\textbf{Baselines} We compare DiCGAN with WGAN~\cite{pmlr-v70-arjovsky17a}, CWGAN~\cite{mirza2014conditional}, FBGAN~\cite{gupta2019feedback} and \revision{GAN-FT}. 1) WGAN is trained with only the desired data to derive the desired data distribution. 2) CWGAN is the extension of GAN with a conditional label $c$. To train CWGAN, we split the training data into the desired class ($c=1$) and the undesired class ($c=0$) based on {global knowledge}.  Then $p(x|c=1)$ is the desired data distribution. 3) FBGAN adopts an iterative training paradigm to derive the desired data distribution. First, FBGAN is pre-trained with all training data. At each training epoch, FBGAN resorts to an extra selector to select the desired samples from the generated samples and use them to replace the least-recently added samples in the training dataset. Then FBGAN performs regular GAN training with the updated training data. \revision{4) GAN-FT is to fine-tune a pre-trained GAN with a classification loss on desired data. It is possible to use GAN loss defined between the generated data and the desired data to constrain the quality of desired data during the fine-tuning of GAN-FT. This is actually similar to the baseline WGAN that is trained on the desired subset of training data. Thus it would still suffer from poor data quality issues when there is limited desired data in the training dataset.} 

\textbf{Datasets} MNIST~\cite{726791} consists of $28 \times 28$ images with digit zero to nine. $50K$ training images are regarded as training data. CelebA-HQ~\cite{DBLP:conf/iclr/KarrasALL18} is the high-quality subset of Celeb Faces Attributes Dataset, which has $30K$ face images of celebrities. We use all images as the training data and resize them to $64 \times 64$. The gene sequence dataset~\cite{gupta2019feedback} contains $3,655$ gene sequences with a maximum length of 156 codings for proteins collected from the Uniprot database. All methods applied to the datasets use the same supervision for a fair comparison. On MNIST and CelebA-HQ, we resort to class labels to derive the desired data distribution. On the gene sequence dataset, we resort to an analyzer that can evaluate the desired property for genes to derive the desired data distribution. 
\begin{remark}
Considering pairwise preferences over explicitly labeling what the user considers to be good data or not is beneficial especially given the limited supervision, which will be verified in the following experiments.
\end{remark}

\textbf{Evaluation Metric:} To evaluate the performance of learning the desired data distribution, we calculate the percentage of desired data (PDD) in GAN's generation. $\text{PDD} = \frac{|\{x|x \text{ is desired}, x \in \mathrm{X_g}\}|}{|\mathrm{X_g}|} \times 100\%$, 
where $\mathrm{X_g}$ are generated samples.

\subsection{Capturing Small Digits on MNIST}
\label{sec:mnist}
Suppose the user is interested in learning the distribution of small digits on MNIST. Zero is the smallest digit of MNIST, thus as the desired data.

\textbf{Networks \& Hyperparameters} \revision{By a coarse grid search, the balance factor $\lambda$ is set to 1. The ranking margin $m$ is set to $1$ following~\cite{cao2006adapting}.} The batch size $b$ is set to~$50$. The network architecture of the critic and generator in our DiCGAN are based on WGAN-GP~\cite{gulrajani2017improved}. See Supplementary for details. The baselines share the same architecture for a fair comparison. The optimizer is Adam~\cite{DBLP:journals/corr/KingmaB14} with a learning rate of $1e\textrm{-}4$ and $\beta_1=0.5, \beta_2=0.9$. The number of critic \mbox{iterations per generator iteration $n_{critic}$ is $5$.}

\begin{table}[!b]\vskip-0.1in
\centering
\caption{\label{tb:mnist_percentage} Percentage of desired data in the generation (PDD) of various GANs on MNIST. Best results
are highlighted in bold. Top $1$ means digit zero. Top $5$ means digits zero to four.}
	\renewcommand{\arraystretch}{1.}
	\setlength{\tabcolsep}{0.7mm}{	
		\scalebox{1.}{
\begin{tabular}{ccccccc}
\toprule[1.3pt]
Method & Original & WGAN  & CWGAN & FBGAN & \revision{GAN-FT} & DiCGAN \\ \hline
Top~1  & 9.9      & 97.3  & 95.0  & \textbf{100.0} & \revision{\textbf{100.0}} & \textbf{100.0}  \\
Top~5  & 51.1     & 98.2  & 96.4  & \textbf{100.0} & \revision{\textbf{100.0}} & \textbf{100.0}  \\
\toprule[1.3pt]
\end{tabular}}}
\end{table}

\textbf{Training} As for WGAN and CWGAN, zero digits in the training data are regarded as the desired samples ($c=1$), whose size is $4,950$. The other digits are labeled as the undesired samples, whose size is $45,050$ ($c=0$). WGAN is only trained with the desired data. CWGAN conditions on $c$ to model a conditional data distribution $p(x|c)$ for MNIST. 
\revision{For GAN-FT, we first pre-trained WGAN-GP with all digit images. Then we fine-tuned its generator with a classifier loss that makes the generated samples classified as digit zero.}
FBGAN and our DiCGAN both introduce the generated samples into the training dataset during the training. The labels of the generated samples are obtained by resorting to a classifier, pre-trained for digit classification. At every training epoch, FBGAN generates $50K$ samples and requests the classifier to label them. Then the selector in FBGAN will rank the images using their corresponding labels, where the smaller digits are ranked higher. The selector selects the generated images with digits ranked in the top $50\%$, i.e., small digits, as the desired data to replace old training data. 
As for DiCGAN, the pairwise comparison can be obtained for two images $x_1$ and $x_2$ according to their predicated label $y_1$ and $y_2$, namely $x_1 \succ x_2$ if $y_1<y_2$, and vice versa.  At each iteration, \#pairwise preferences $n_s$ is $25$. \#iteration per minor correction $n_{\mathrm{i}}=200$. \#generated samples for each minor correction $n_{\mathrm{g}}=50K$.

\begin{figure}[!t]
    \centering
    \begin{subfigure}[t]{\linewidth}
  \centering
    \includegraphics[width=.62\linewidth]{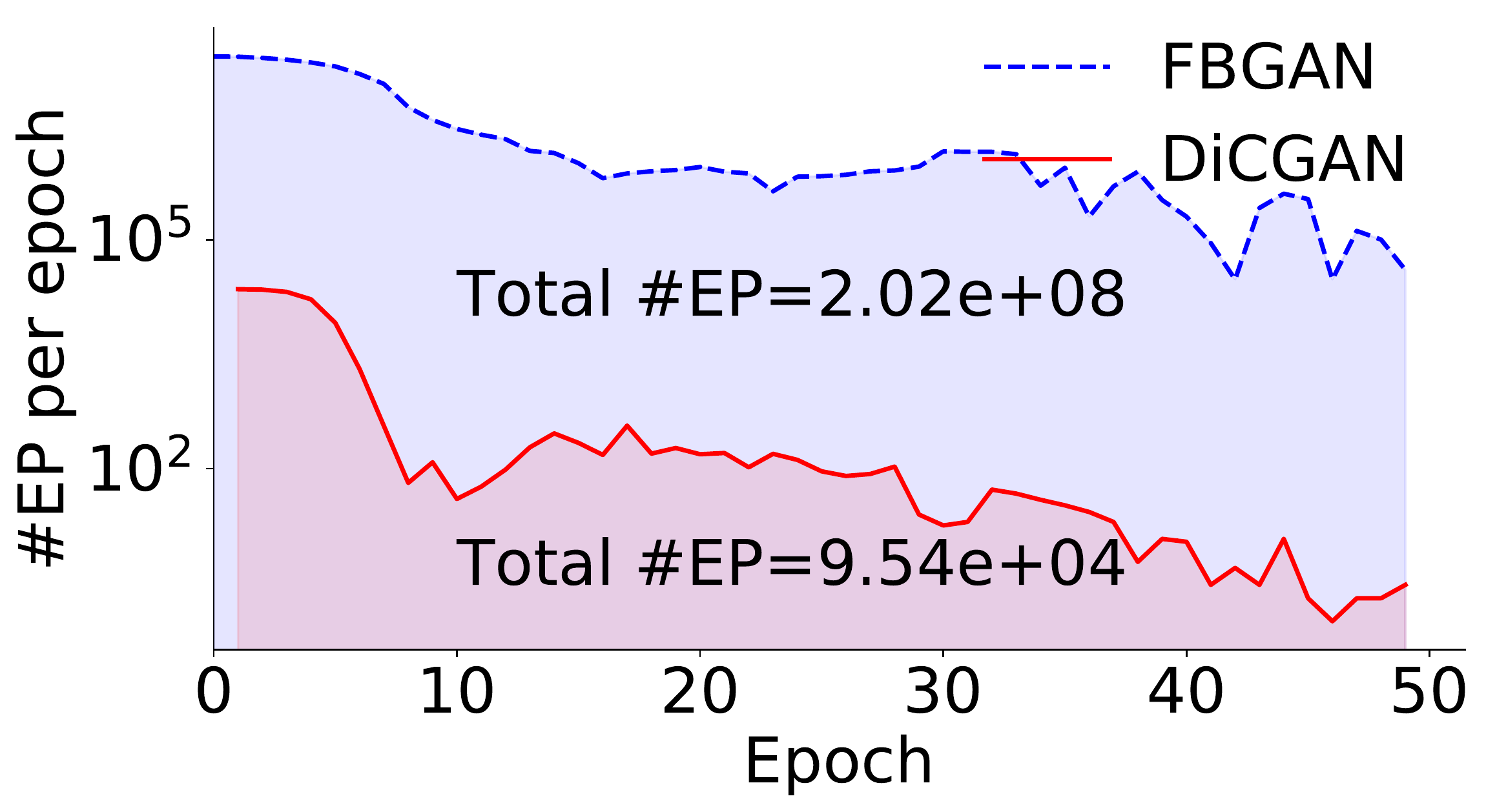}
    \caption{\label{fig:mnist_fbgan_dicgan_nep}}
  \end{subfigure}
    \begin{subfigure}[t]{.43\linewidth}
  \centering
    \includegraphics[width=\linewidth]{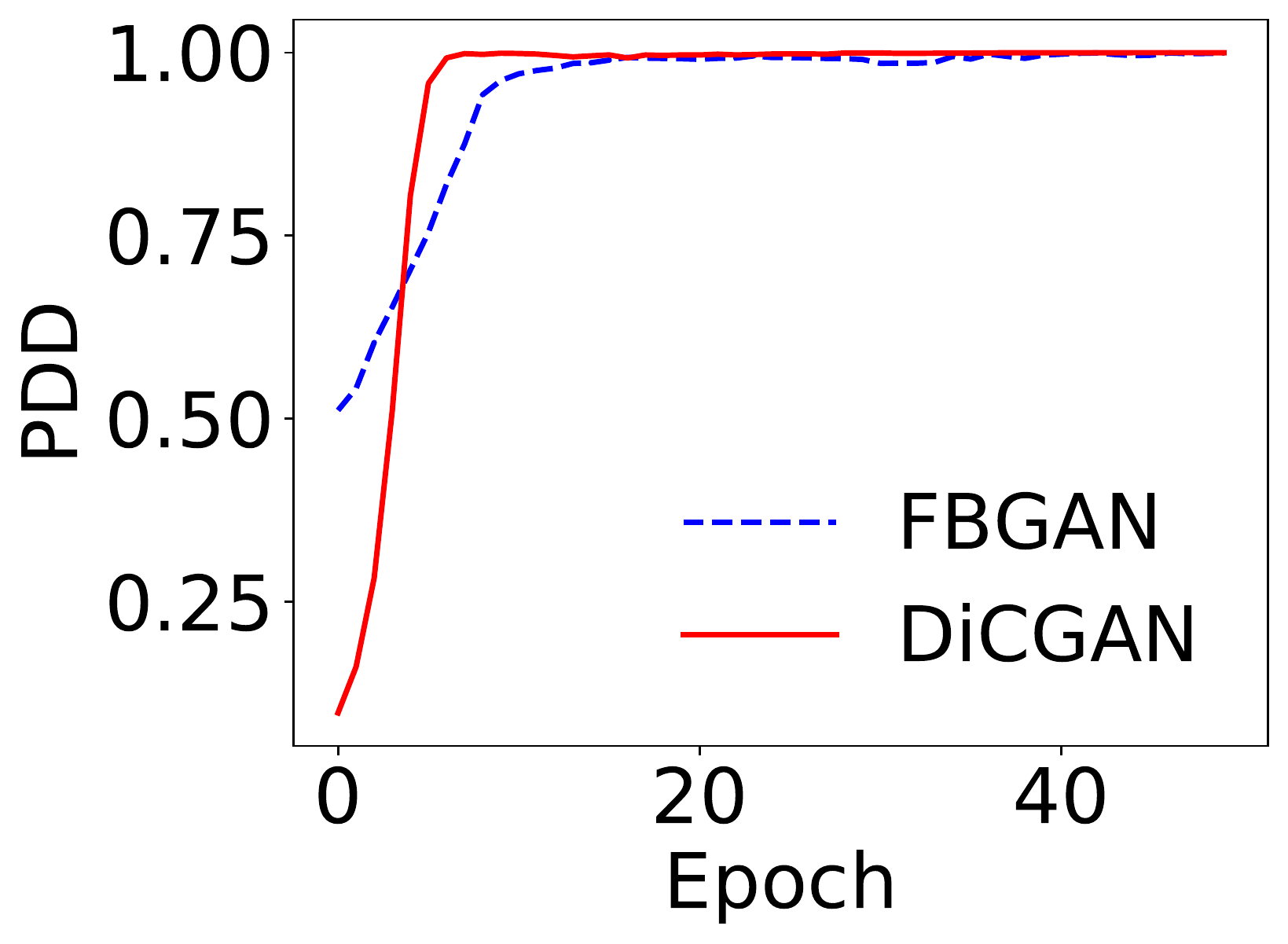}
    \caption{\label{fig:mnist_fbgan_dicgan_pdd}}
  \end{subfigure}
  \begin{subfigure}[t]{.515\linewidth}
  \centering
    \includegraphics[width=\linewidth]{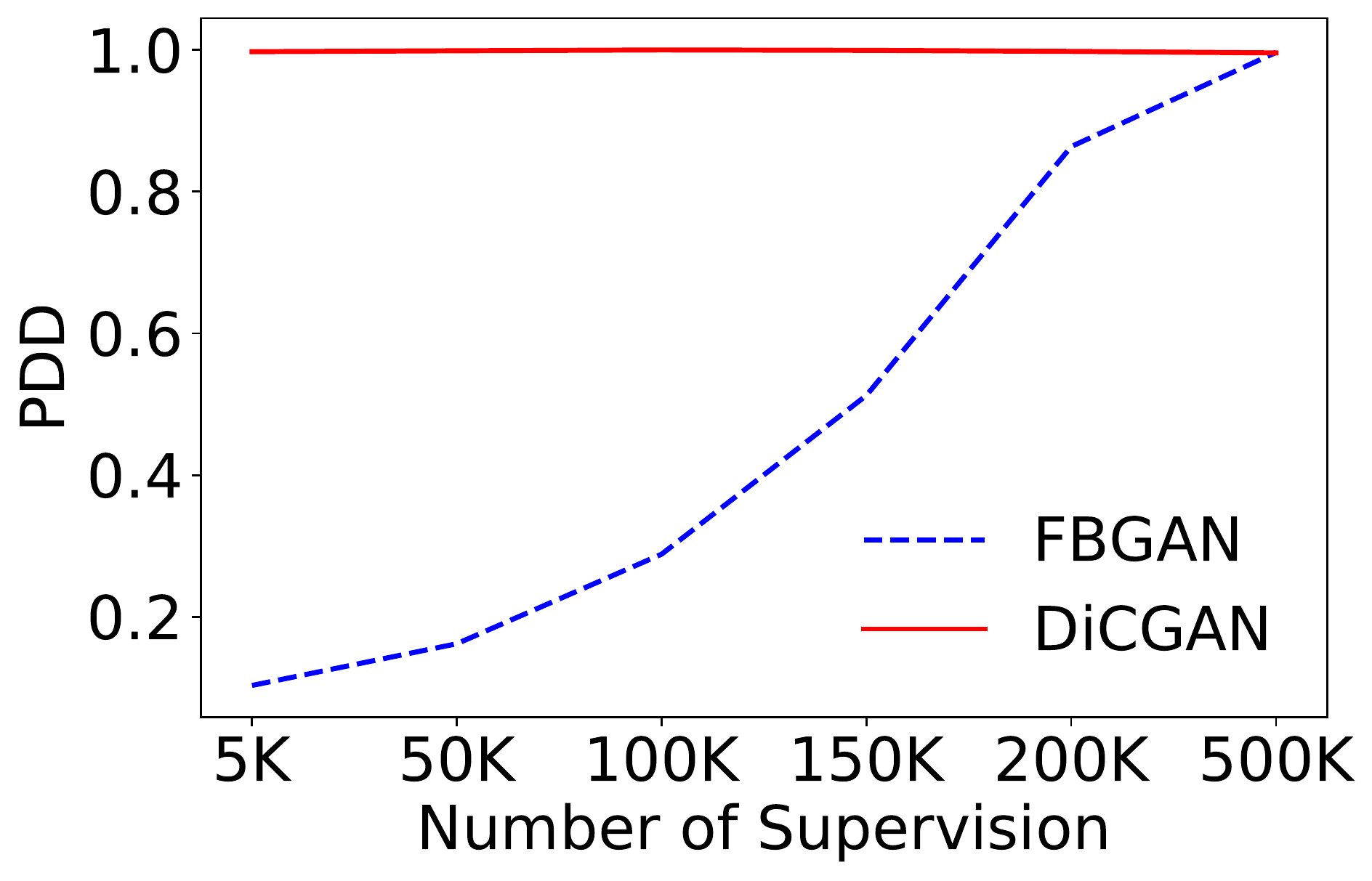}
    \caption{\label{fig:dw_ns}}
  \end{subfigure} \vskip-0.05in
    \caption{\label{fig:more_comp} Comparison of DiCGAN and FBGAN on MNIST. (a) plots used \#EP per epoch. (b) plots PDD versus the training epoch. (c) plots PDD versus the number of supervision.}\vskip-0.1in
\end{figure}

Fig.~\ref{fig:mnist_samples} presents the generated MNIST images randomly sampled from the generator of DiCGAN. It shows that the generated MNIST digits gradually shift to smaller digits during the training, and converge to the digit zero. 
For each method, we sample $50K$ samples from the generator and calculate the percentage of digit zero and digits zero to four among the generated digits for quantitative evaluation. In Table~\ref{tb:mnist_percentage}, only small digits are generated by DiCGAN and FBGAN; WGAN and CWGAN can also learn the distribution of the desired digit since the dataset is simple and has relatively sufficient data for the desired digit. The visual results shown in Fig.~\ref{fig:mnist_baselines} are consistent with the quantitative results. 
However, when the dataset is complex and the desired data is insufficient, WGAN and CWGAN fail, which is described in Sect.~\ref{sec:celebahq}. \revision{GAN-FT also only generates digit zero, but it suffers from mode collapse problem. The generated images have low diversity (Fig.~\ref{fig:gan-ft_mnist}). This is because there lacks data quality guarantee during the later fine-tuning stage.}

\subsubsection{Comparison of DiCGAN and FBGAN}

Though FBGAN achieves good performance in learning the desired data distribution, it requires a lot of supervision information from the selector. We calculate the number of effective pairs (\#EP) used in DiCGAN and FBGAN, respectively. 
\#EP in DiCGAN denotes the total number of explicitly constructed pairs during the training, i.e., $\text{\#EP}=\sum_{i=1}^{n_{\mathrm{e}}}\sum_{j=1}^{n_{\mathrm{i}}}n_{\mathrm{s}}$. As for FBGAN, its selector ranks all generated samples and selects the desired samples from them at each epoch. Therefore, \#EP can be induced by the implicit pairs implied by the desired generated samples versus the undesired generated samples, i.e., $\text{\#EP}=\sum_{i=1}^{n_{\mathrm{e}}}n_{\mathrm{gd}}\times n_{\mathrm{gu}}$, where $n_{\mathrm{e}}$ is the number of training epochs. 
where $n_{\mathrm{gd}}$ and $n_{\mathrm{gu}}$ denote the number of desired samples and undesired samples in the generation, respectively. 

Fig.~\ref{fig:mnist_fbgan_dicgan_nep} plots FBGAN's and DiCGAN's used \#EP at each epoch, respectively. It shows that (1) the \#EP used in DiCGAN is much smaller than that in FBGAN at each training epoch; (2) the total \#EP used in DiCGAN is significantly less than that in FBGAN, which can be reflected from the shadow area. In total, DiCGAN used $9.53e4$ effective pairs while FBGAN used $2.02e8$ effective pairs. Our DiCGAN is scalable to the large training dataset, e.g. MNIST. \#EP in DiCGAN is linearly correlated to the training size. In contrast, \#EP in FBGAN is determined by $n_\mathrm{gd}$ and $n_\mathrm{gu}$, which are both linearly correlated to the training size. Thus, \#EP in FBGAN is quadratically correlated to the training size.

We plot the ratio of digit zero in the generated data (PDD) of DiCGAN and FBGAN during the training process in Fig.~\ref{fig:mnist_fbgan_dicgan_pdd}. It shows that DiCGAN converges faster than FBGAN.

\subsubsection{Comparing DiCGAN and FBGAN given the limited supervision}
\label{sec:limit_sup}
We conduct the experiment on MNIST. Specifically, the query amount of resorting to the pre-trained classifier to obtain the prediction of the generated samples is restricted to $5K$ for both FBGAN and DiCGAN. 

\begin{figure}[!t]
    \centering
 \begin{minipage}[b]{0.24\textwidth}
  \centering
    \includegraphics[width=.8\linewidth]{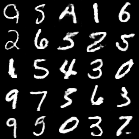}
    \label{fig:mnist_limit_fbgan}
    \centerline{(a) FBGAN}
  \end{minipage}
  \begin{minipage}[b]{0.24\textwidth}
  \centering
    \includegraphics[width=.8\linewidth]{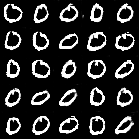}
    \label{fig:mnist_limit_dicgan}
   \centerline{(b) DiCGAN}
  \end{minipage}
  \caption{\label{fig:mnist_limit} The generated results of {(a) FBGAN and (b) DiCGAN} on MNIST given limited supervision.}\vskip-0.1in
\end{figure}

Table~\ref{tb:mnist_limit_pdd} shows that DiCGAN can learn the desired data distribution, generating $99.7\%$ zero digits, while FBGAN fails, generating $10.3\%$ digit zero, which is consistent with the visual results in Fig.~\ref{fig:mnist_limit}a and Fig.~\ref{fig:mnist_limit}b.

\begin{table}[!hb]
\arrayrulecolor{black}
\centering
\caption{\label{tb:mnist_limit_pdd}\mbox{PDD on MNIST given limited supervision.}}
  \renewcommand{\arraystretch}{1.}
	\setlength{\tabcolsep}{1.2mm}{	
		\scalebox{1.}{
\begin{tabular}{ccc}
\toprule[1.3pt]
Method                  &  Top~1         & Top~5        \\ \hline
FBGAN                   &  10.3          & 52.6         \\
DiCGAN                  &  \textbf{99.7} & \textbf{99.9} \\
\toprule[1.3pt]
\end{tabular}}}\vskip-.1in
\end{table}

\begin{figure*}[!ht]
    \centering
    \begin{subfigure}[t]{.15\linewidth}
  \centering
    \includegraphics[width=\linewidth]{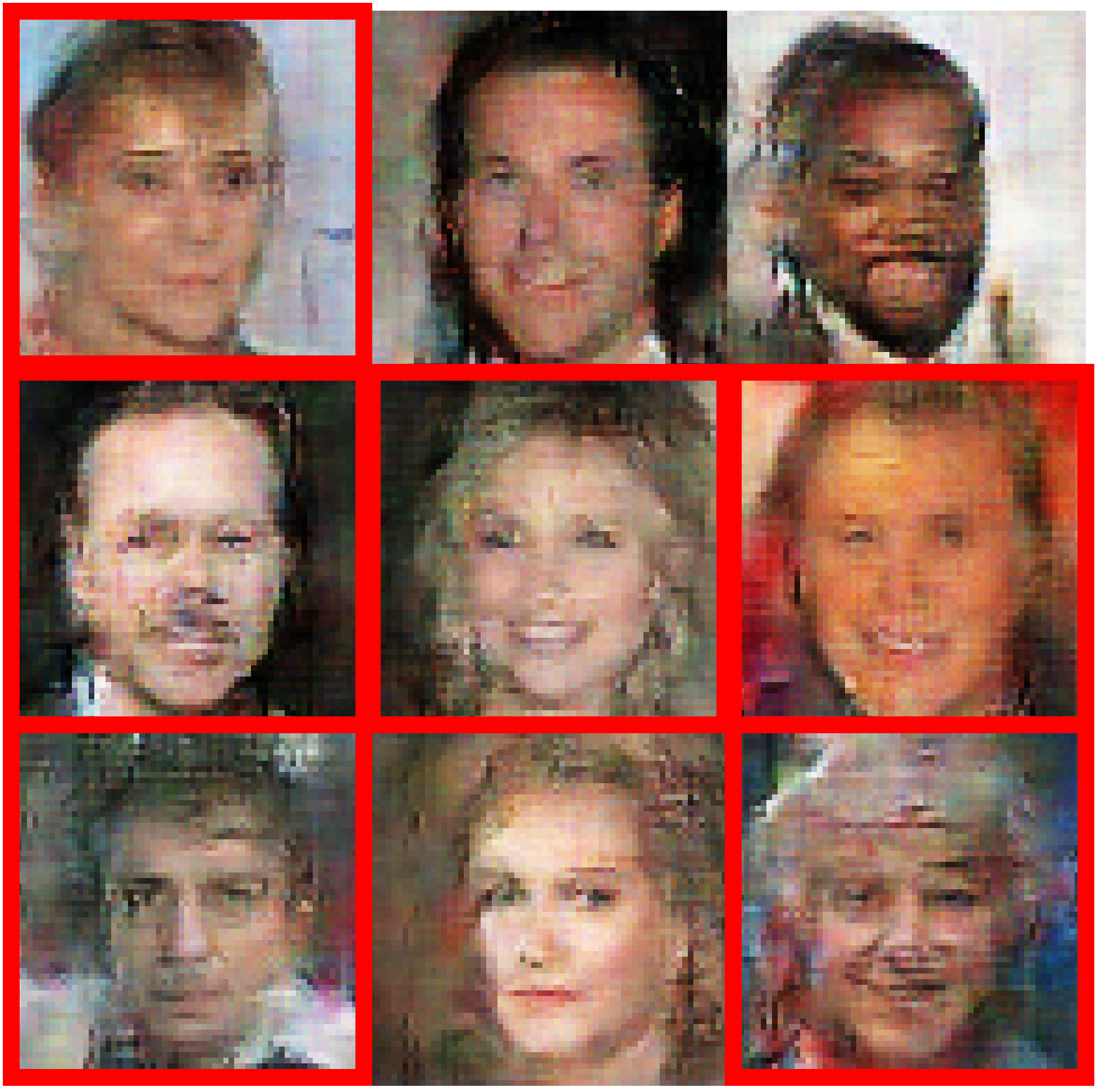}
    \caption{WGAN ($6/9$)\label{fig:wI}}
  \end{subfigure}
  \begin{subfigure}[t]{.16\linewidth}
  \centering
    \includegraphics[width=.925\linewidth]{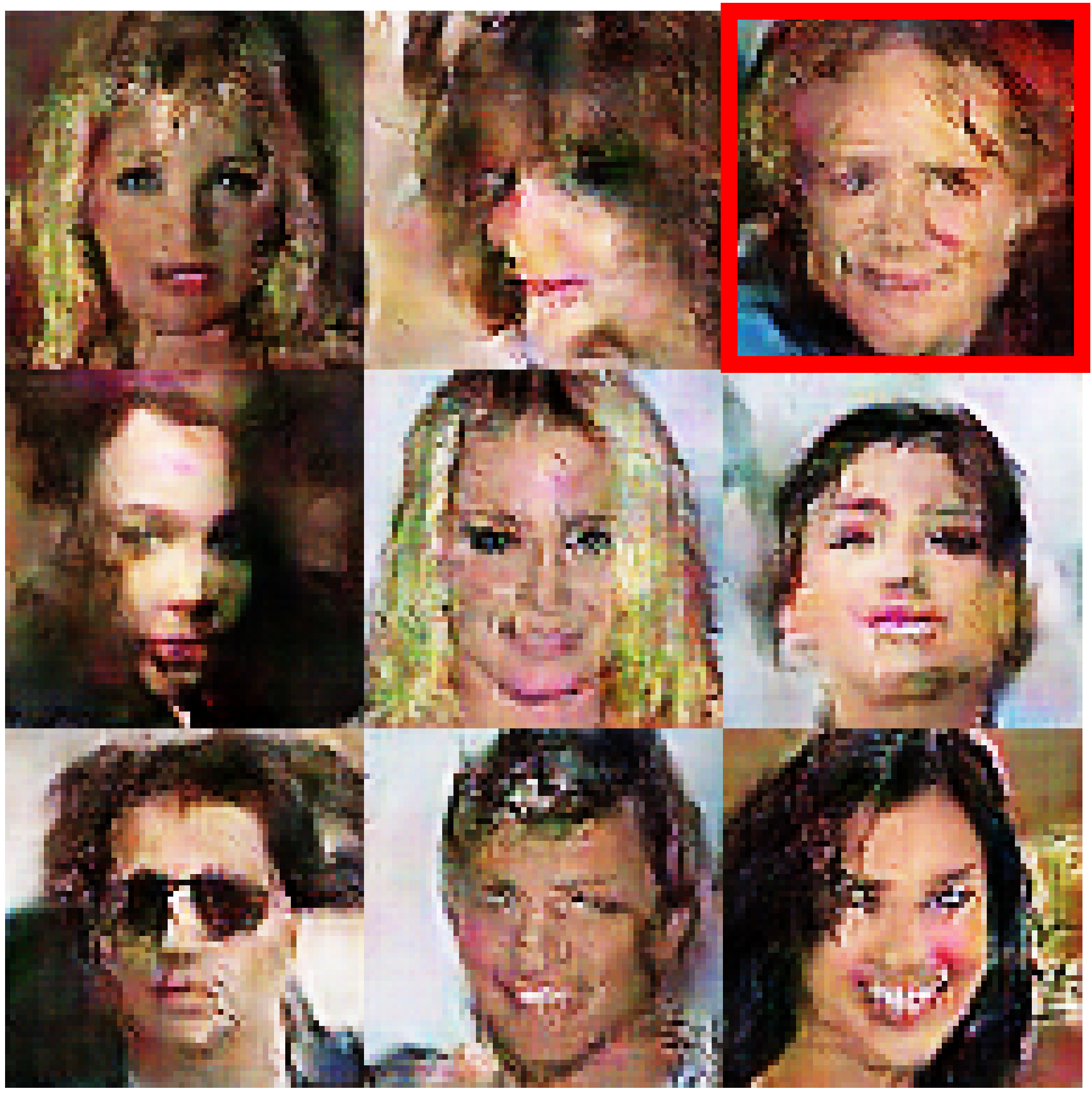}
    \caption{CWGAN ($1/9$)}
  \end{subfigure}
  \begin{subfigure}[t]{.15\linewidth}
  \centering
    \includegraphics[width=.96\linewidth]{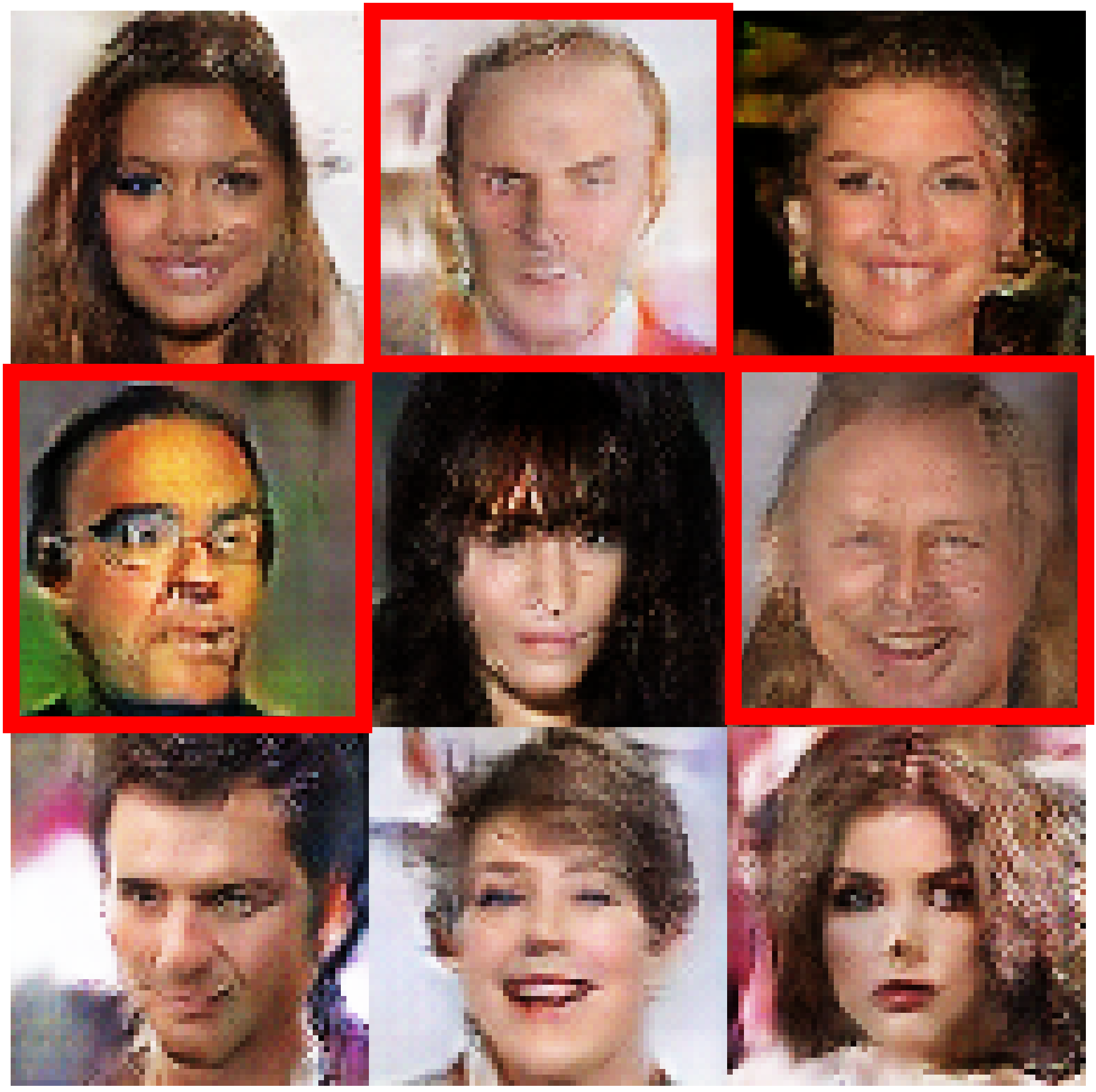}
    \caption{FBGAN ($3/9$)}
  \end{subfigure}
  \begin{subfigure}[t]{.16\linewidth}
  \centering
    \includegraphics[width=.92\linewidth]{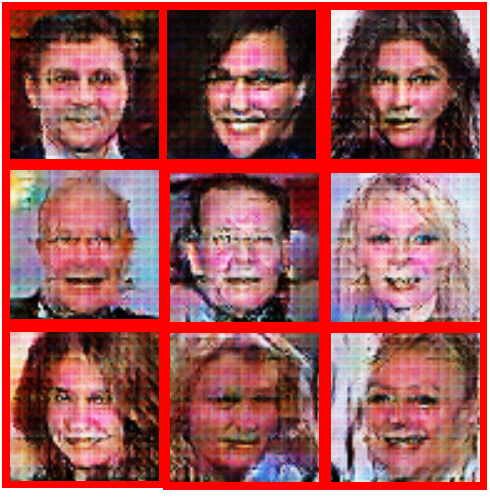}
    \caption{\revision{GAN-FT ($9/9$)}\label{fig:gan-ft_celebahq}}
  \end{subfigure}
  \begin{subfigure}[t]{.165\linewidth}
  \centering
    \includegraphics[width=.9\linewidth]{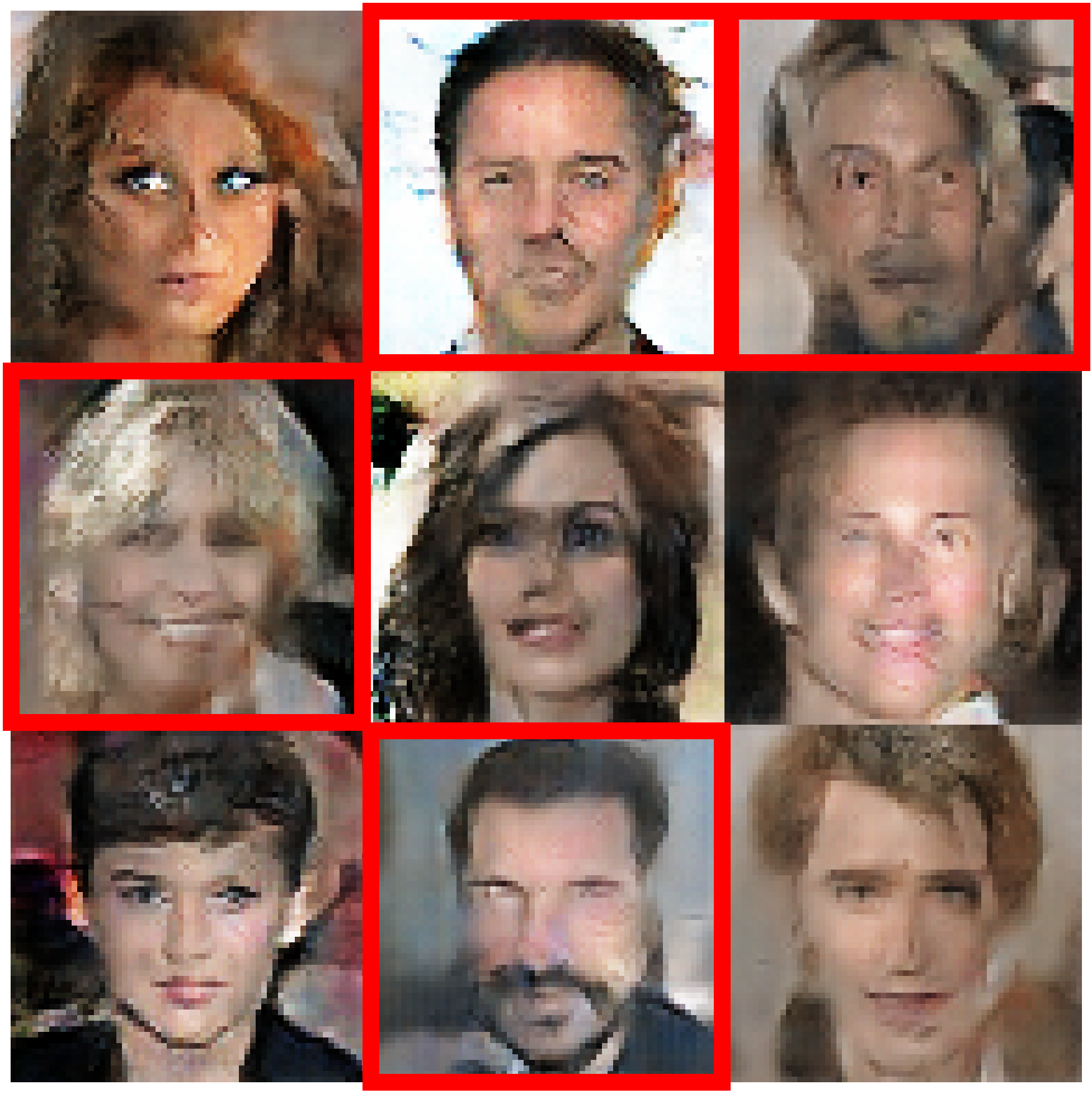}
    \caption{DiCGAN ($4/9$)}
  \end{subfigure}
  \begin{subfigure}[t]{.18\linewidth}
  \centering
    \includegraphics[width=.84\linewidth]{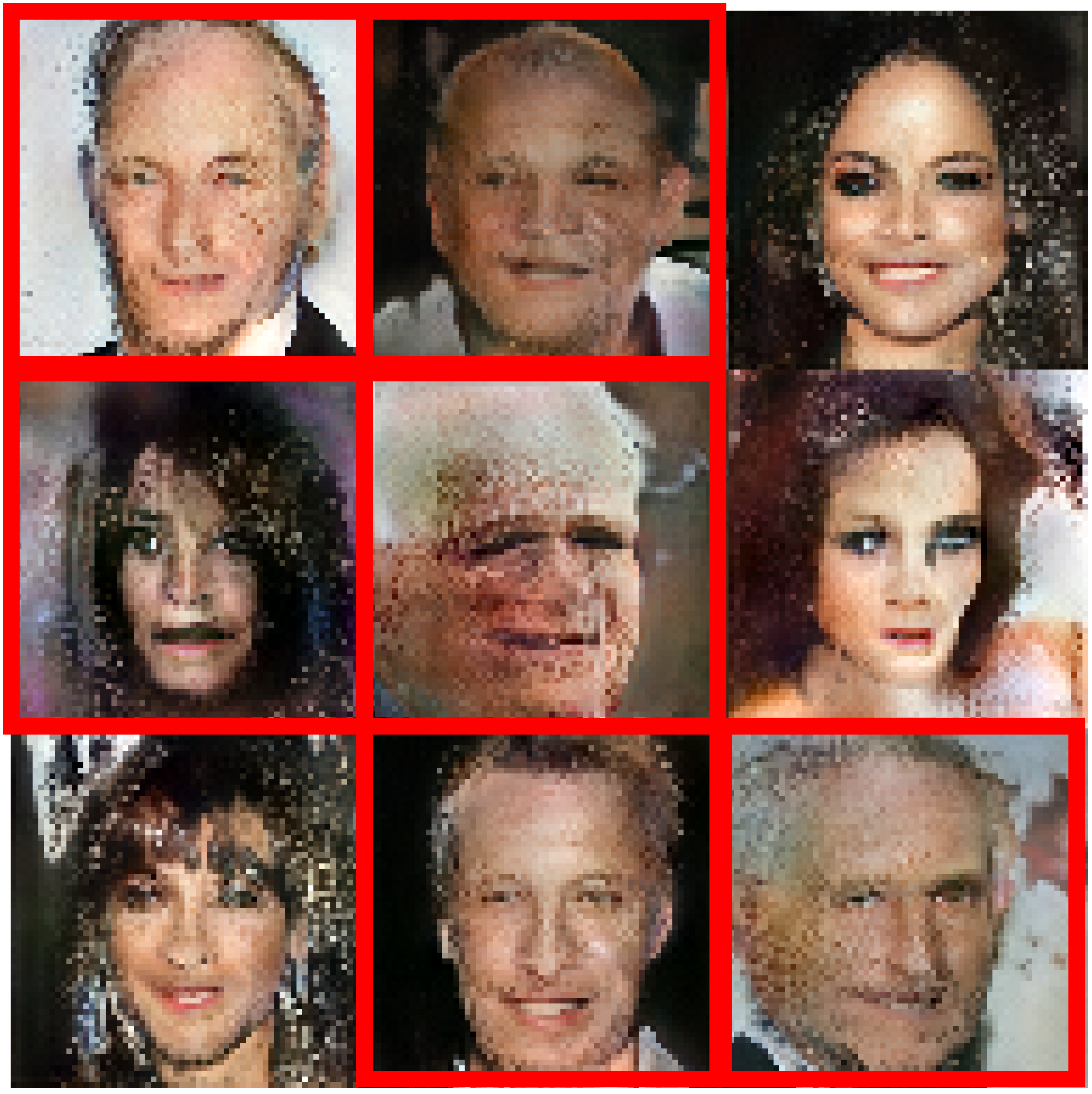}
    \caption{$\text{DiCGAN}_{\text{style}}$ ($6/9$)}
  \end{subfigure}
    \caption{Generated images on CelebA-HQ by {(a) WGAN, (b) CWGAN, (c) FBGAN, \revision{(d) GAN-FT}, (e) DiCGAN and (e) $\text{DiCGAN}_{\text{style}}$}. The red boxes refer to the images which are classified as old images.} 
    \label{fig:celebahq_baselines} \vskip-0.1in
\end{figure*}

\begin{figure*}[!t]
    \centering
  \begin{subfigure}[t]{.31\linewidth}
  \centering
    \includegraphics[width=.85\linewidth]{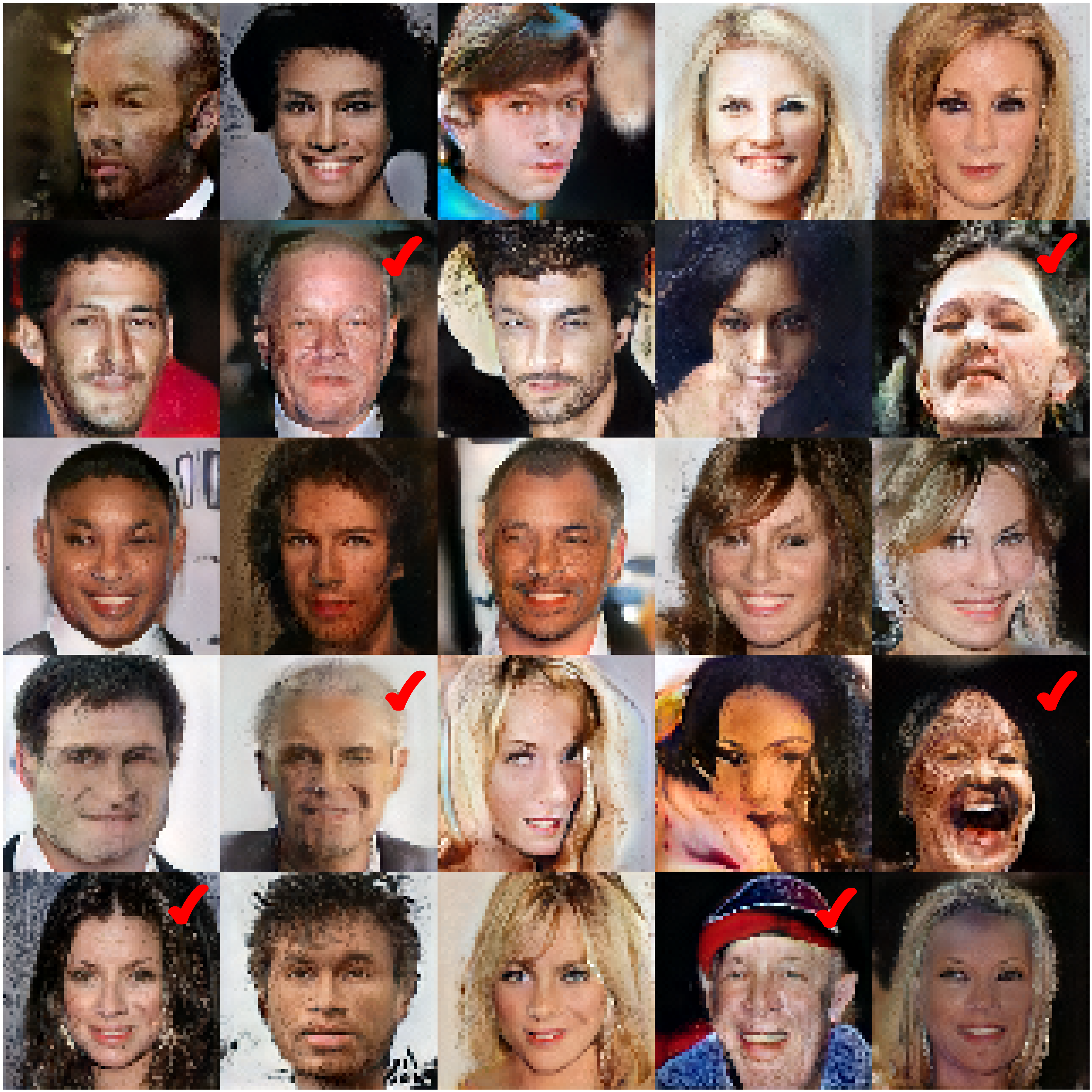}
    \caption*{Iter $4K$ ($6/25, 27.7\%$)}
  \end{subfigure}
  \begin{subfigure}[t]{.31\linewidth}
  \centering
    \includegraphics[width=.85\linewidth]{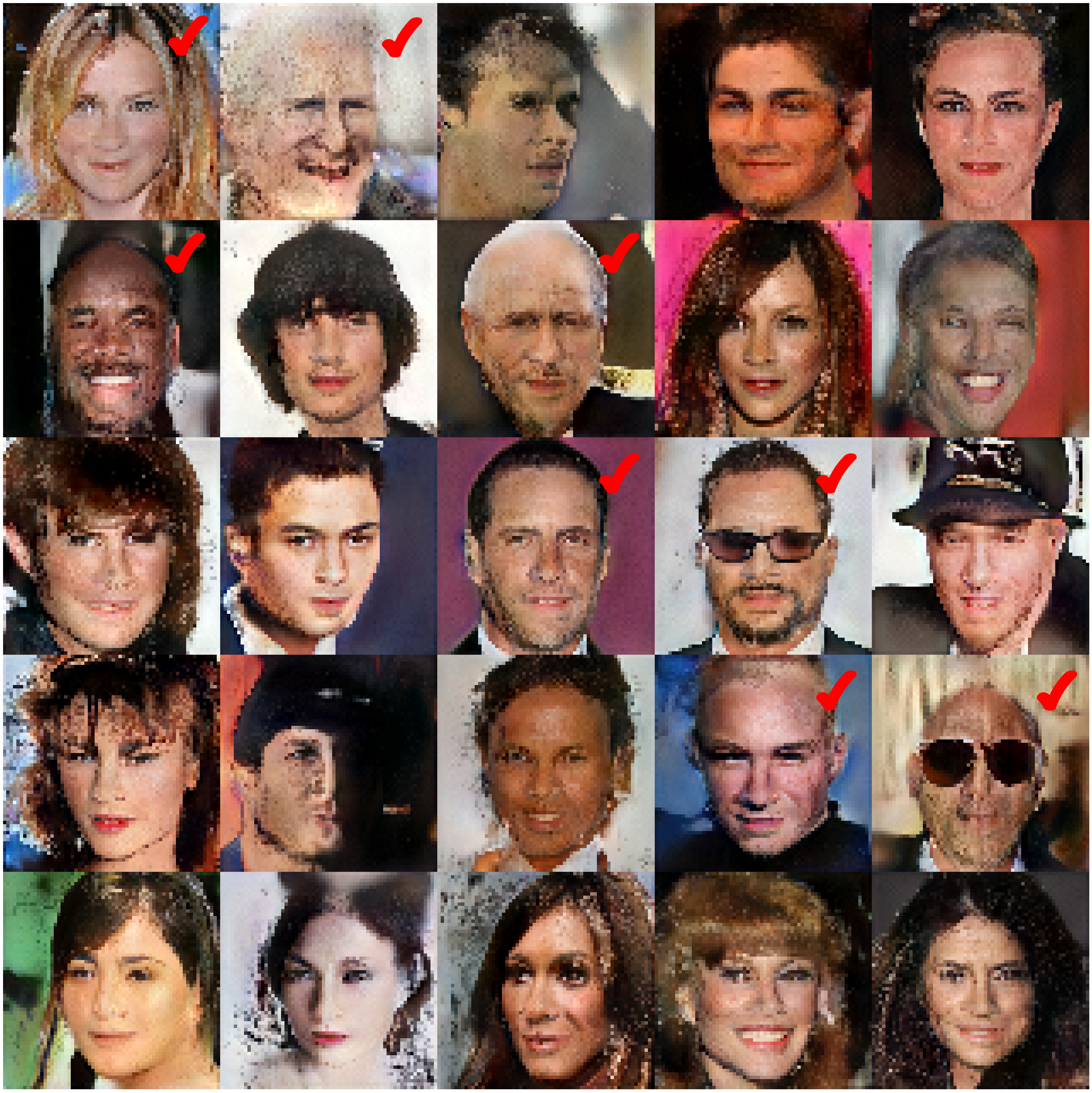}
    \caption*{Iter $6K$ ($8/25, 31.4\%$)}
  \end{subfigure}
  \begin{subfigure}[t]{.31\linewidth}
  \centering
    \includegraphics[width=.85\linewidth]{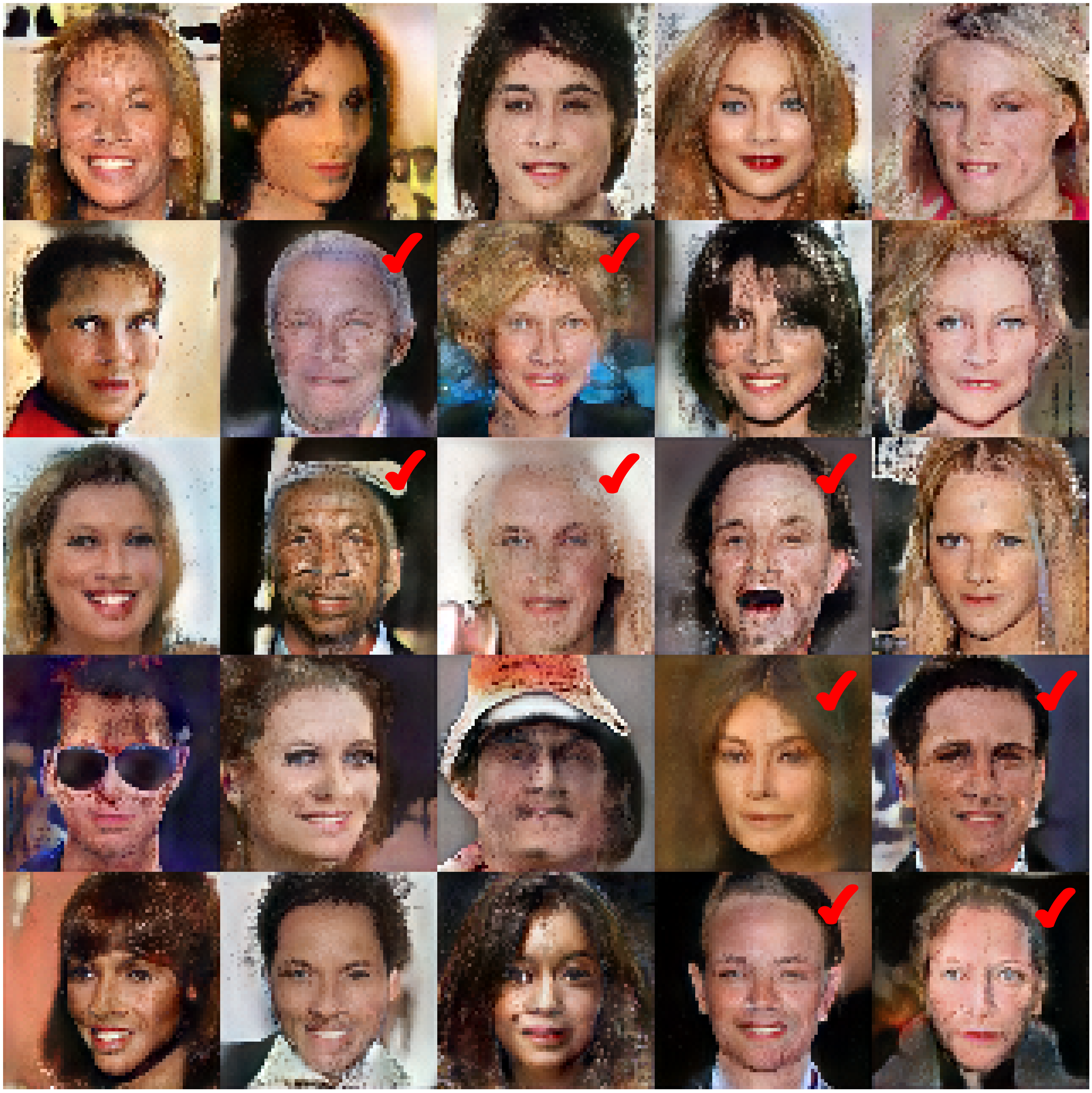}
    \caption*{Iter $8K$ ($9/25, 40.3\%$)}
  \end{subfigure}
  \begin{subfigure}[t]{.31\linewidth}
  \centering
    \includegraphics[width=.85\linewidth]{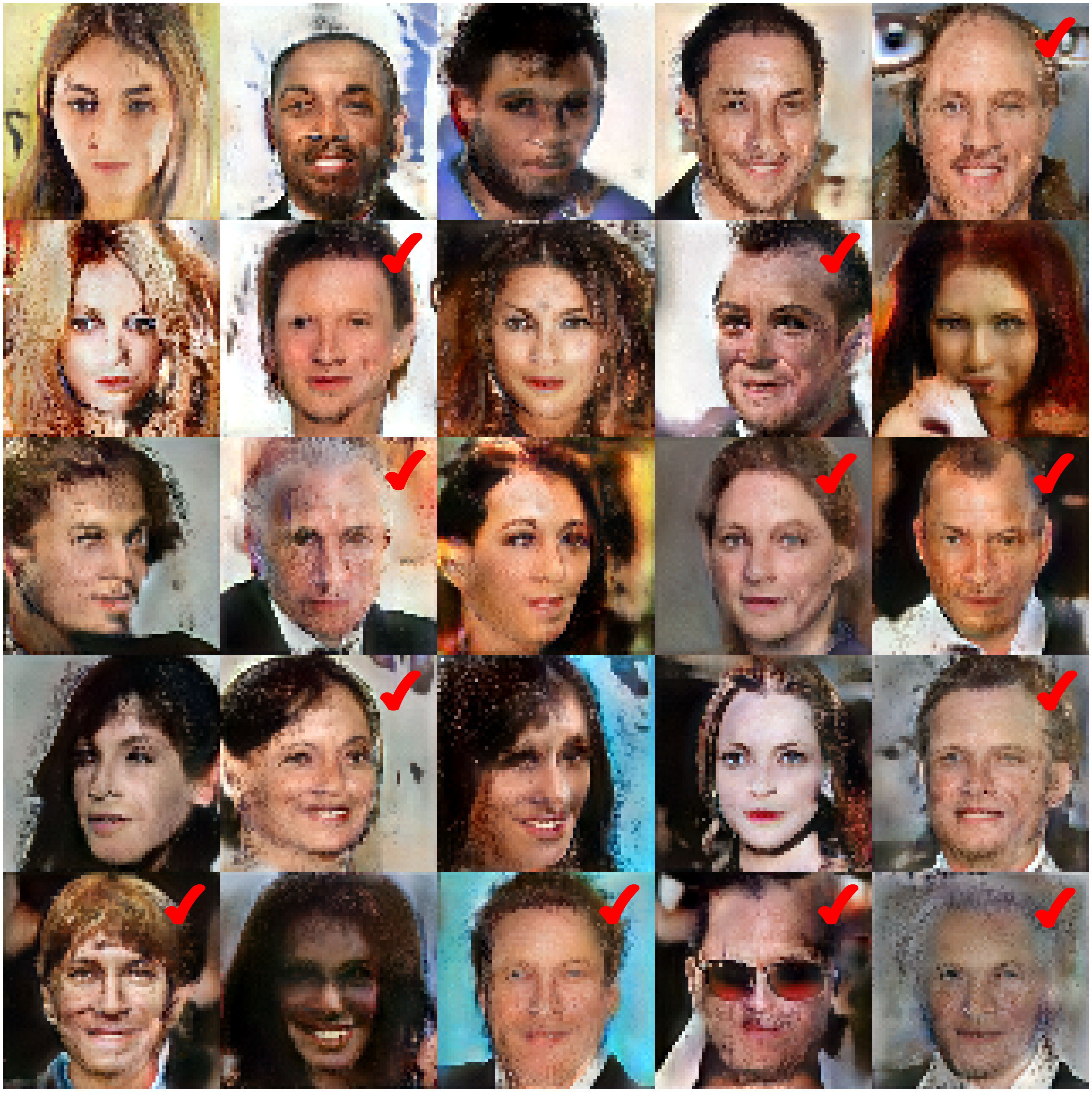}
    \caption*{Iter $10K$ ($12/25, 46.0\%$)}
  \end{subfigure}
  \begin{subfigure}[t]{.31\linewidth}
  \centering
    \includegraphics[width=.85\linewidth]{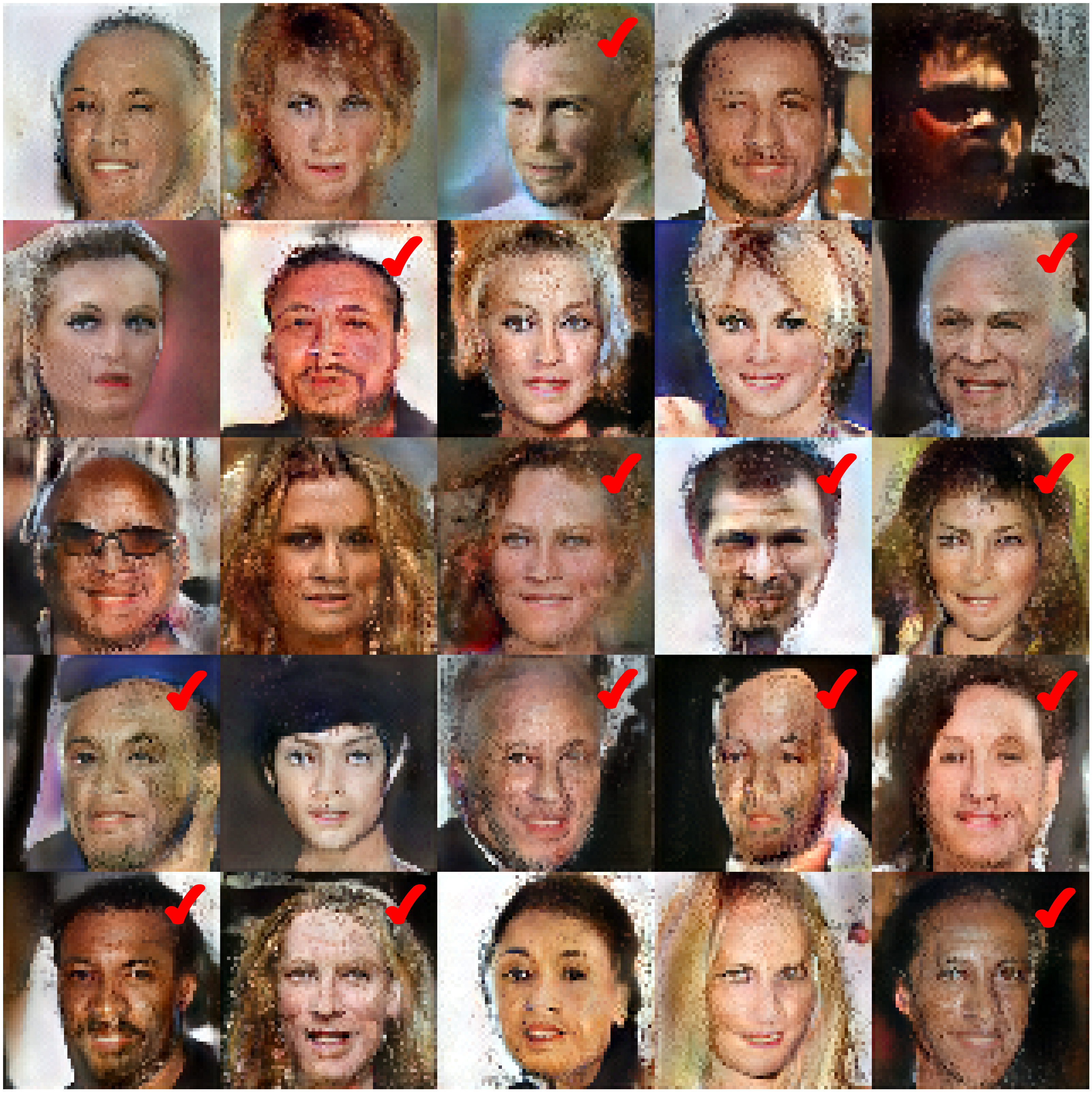}
    \caption*{Iter $12K$ ($13/25, 51.2\%$)}
  \end{subfigure}
  \begin{subfigure}[t]{.31\linewidth}
  \centering
    \includegraphics[width=.85\linewidth]{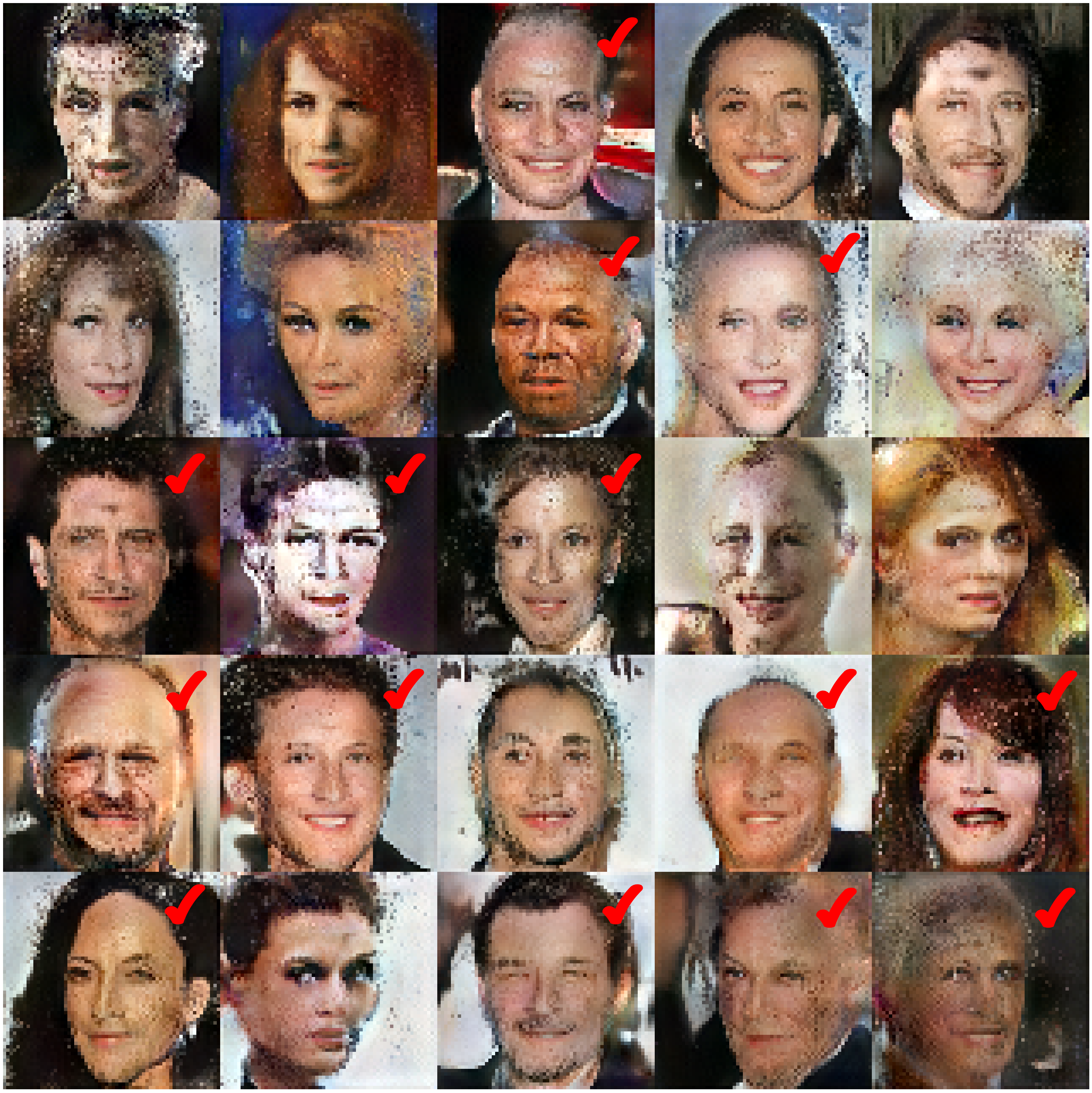}
    \caption*{Iter $14K$ ($14/25, 57.0\%$)}
  \end{subfigure}
    \caption{\label{fig:celebahq_samples}  Generated images of $\text{DiCGAN}_{\text{style}}$ on CelebA-HQ during the training process. $\text{DiCGAN}_{\text{style}}$ learns the distribution of old faces, which gradually generates more old face images. The red ticks refer to the images which are classified as old images. The \% denotes the percentage of old faces in $50K$ generated samples.}\vskip-0.1in
\end{figure*}
We explore the gap in the performance between DiCGAN and FBGAN evolves as the number of supervision increases on MNIST. Specifically,the query amount of resorting to the pre-trained classifier to obtain the prediction of the generated samples is restricted to $5K,50K,100K,150K,200K,500K$ for both FBGAN and DiCGAN. 

Fig.~\ref{fig:dw_ns} plots PDD versus the number of supervision for FBGAN and DiCGAN, respectively. It shows that (1) DiCGAN always learns the desired data distribution even given the limited supervision; (2) when given the limited supervision, FBGAN fails to learn the desired data distribution, i.e., achieving a small PDD; (3) FBGAN performs better and achieves a higher PDD,
narrowing the performance gap with DiCGAN as the number of supervision increases.

\subsection{Capturing Old Face Images on CelebA-HQ}
\label{sec:celebahq}
{Suppose the user is interested in learning the distribution of old face images on CelebA-HQ.} 

\textbf{Networks \& Hyperparameters} The balance factor $\lambda$ and the ranking margin $m$ is set to $1$. The batch size $b$ is set to $64$. {The network architecture of the critic and generator in our DiCGAN are based on WGAN-GP~\cite{gulrajani2017improved}. See Appendix for details. The baselines share the same architecture for a fair comparison.} The optimizer is Adam with a learning rate of $2e\textrm{-}4$ and $\beta_1=0.5, \beta_2=0.999$. $n_{critic}$ is set to 5. Further, we use an advanced GAN architecture, StyleGAN (\textit{https://github.com/NVlabs/stylegan2})~\cite{karras2020analyzing} to implement our DiCGAN, denoted as $\text{DiCGAN}_{\text{style}}$. The networks are optimized with Adam with $\beta_1=0, \beta_2=0.9$. The generator $G$'s learning rate is $1e\textrm{-}4$ while the critic $D$'s is $3e\textrm{-}4$~\cite{heusel2017gans}. \mbox{$n_{critic}$ is set to 1.}

\textbf{Training} There are $6,632$ old face images, labeled as desired, and $23,368$ young face images, labeled as undesired, in the training data. WGAN is only trained with the constructed desired dataset. CWGAN conditions on $c$ to model a conditional data distribution $p(x|c)$. A classifier, pre-trained for classifying young faces and old faces, is adopted for predicting the labels for the generated face images. Particularly, the query amount of resorting to the classifier is restricted to $30K$. As for FBGAN, at every training epoch, FBGAN generates $5K$ images, and those classified as old faces are selected by the selector to replace the old training data. \revision{As for GAN-FT, we first pre-trained WGAN-GP with all images including young faces and old faces. Then we fine-tuned the pre-trained generator with the classifier loss that makes the generated samples classified as old faces.} As for DiCGAN, the generated face image classified as an old face is preferred over the face image classified with the young attribute. At each iteration, $n_s$ is set to $64$. $n_{\mathrm{i}}$ is set to $1K$. $n_{\mathrm{g}}$ is set to $1K$. As for $\text{DiCGAN}_{\text{style}}$, at each iteration, $n_s$ is set to $64$. $n_{\mathrm{i}}$ is set to $500$. $n_{\mathrm{g}}$ is set to $30K$.

\begin{table}[!t]
\centering 
\caption{\label{tb:celebahq_percentage} Percentage of desired data in the generation (PDD) and image quality (FID) of various GANs on CelebA-HQ. The best results are highlighted in bold. The second best results are underlined. The strikethrough on PDD of WGAN and GAN-FT  denotes that they suffer from severe low-quality issues (large FID), generating very blur face images (Fig.~\ref{fig:celebahq_baselines}) and thus its PDD is not really meaningful.}
	\renewcommand{\arraystretch}{1.}
	\setlength{\tabcolsep}{0.5mm}{	
		\scalebox{1.}{
\begin{tabular}{ccccccccc}
\toprule[1.3pt]
Method & Original & WGAN  & CWGAN & FBGAN & \revision{GAN-FT} & DiCGAN & $\text{DiCGAN}_{\text{style}}$ \\ \hline
PDD    & 22.1      & \sout{\textbf{76.0}}  & 8.3   & 24.7  & \revision{\sout{\textbf{99.7}}} & \underline{33.4}  & \textbf{57.0}   \\
FID    & -        & 115.4 & 79.7  & 51.6 & \revision{107.1} & \underline{49.7}  & \textbf{36.5}   \\
\toprule[1.3pt]
\end{tabular}}}\vskip-0.1in
\end{table}

We visualize the generated face images randomly sampling from the generator of each model in Fig.~\ref{fig:celebahq_baselines}. For each model, we sample $50K$ samples from the generator and then calculate the percentage of old face images (PDD) and the image quality score, i.e., Frechet Inception Distance (FID) among the generated samples for quantitative evaluation in Table~\ref{tb:celebahq_percentage}. From Fig.~\ref{fig:celebahq_baselines} and Table~\ref{tb:celebahq_percentage},
(1) though WGAN mainly generates desired data, it has poor generation since its training data only consists of the desired subset, and thus is insufficient, which has only $6,632$ face images. The generated face images are blurred. Meanwhile, WGAN's FID score is the highest among all methods, i.e., $115.4$, quantitatively showing the poorest generation quality. (2) CWGAN has better generation quality than WGAN as it is trained with sufficient training data, $30K$ samples, but fails to shift towards the desired data distribution. There is only one old face image out of $9$ randomly sampled images in the visualization result. Its PDD ($8.3\%$) is smaller than the training data (Original, $22.1\%$). This is because the undesired data, i.e, the majority in the training data, dominates the generation of CWGAN. (3) FBGAN can achieve relatively good quality, with relatively small FID, but only slightly shift towards the distribution of desired data (PDD=$24.7\%$) due to limited supervision. \revision{(4) GAN-FT almost generates old faces, but the quality is poor, verified by a large FID quality score and low-quality visual results in Fig.~\ref{fig:gan-ft_celebahq}.} (5) DiCGAN achieves the best image quality among all the methods. Its FID score is the lowest. In addition, DiCGAN shifts more towards the desired data distribution than CWGAN and FBGAN, proven by a larger~PDD. 

On the other hand, WGAN-GP architecture is limited to approximating the complex distribution of CelebA-HQ data and thus cannot generate images with very high quality. Then, the introduction of generated samples into the training data will degrade the quality of generation. Therefore, DiCGAN implemented with WGAN-GP architecture is restricted with certain amounts of minor corrections and data replacement in order to obtain a good quality, achieving relatively low PDD. This problem can be improved by introducing a more advanced GAN architecture, StyleGAN. $\text{DiCGAN}_{\text{style}}$ can conduct more minor corrections and use more generated data to replace the training data, finally making the training data distribution shift very close to the desired data distribution. Thus, $\text{DiCGAN}_{\text{style}}$'s generation contains more desired samples than DiCGAN, i.e., larger PDD in Table~\ref{tb:celebahq_percentage}. Meanwhile, the generation has good quality with the best FID. We present generated images of $\text{DiCGAN}_{\text{style}}$ during the training process in Fig.~\ref{fig:celebahq_samples}. There gradually appears more desired face images, i.e., old face images in $\text{DiCGAN}_{\text{style}}$'s generation.

{Fig.~\ref{fig:gen_neighbors} shows the nearest neighbors of generated old images in the training dataset (given old face images), which demonstrates that our DiCGAN is not simply memorizing training images, but generates novel desired images. Thus, DiCGAN can perform data augmentation for desired samples.}

\begin{figure}[!t]
    \centering
    \includegraphics[width=.7\linewidth]{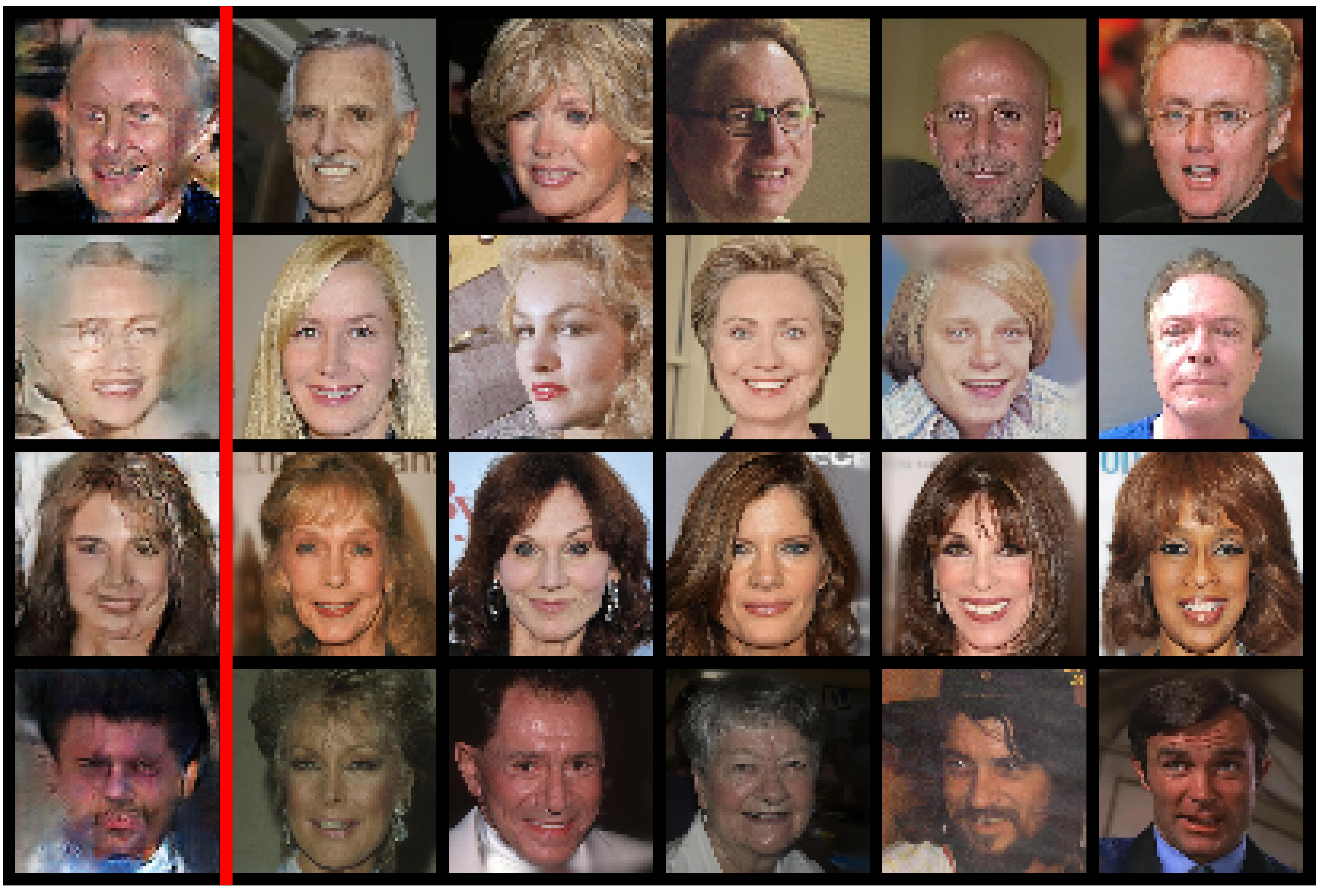}
    \caption{\label{fig:gen_neighbors}Nearest neighbors of generated desired images in the training dataset. The distance is measured by the $\ell_2$ distance between images. Images on the left of the red vertical line are samples generated by our DiCGAN. Images on the right are top~5 nearest neighbors in the training dataset.}
    \vskip-0.1in
\end{figure}

\subsection{{Simulating Synthetic Genes with Antimicrobial Properties}}
{Consider the biologist is interested in designing genes coding for antimicrobial peptides (AMPs), which are peptides with broad antimicrobial activity against bacteria, viruses, and fungi~\cite{izadpanah2005antimicrobial}. We can apply our DiCGAN to help optimize the gene coding for AMPs from an existing gene sequence dataset~\cite{gupta2019feedback}. Namely, our target is to learn the distribution of genes coding for AMPs on the gene sequence dataset.}

\textbf{Networks \& Hyperparameters} Both the balance factor $\lambda$ and the ranking margin $m$ are set to $1$. The batch size $b$ is set to $64$. All methods are implemented with the networks as FBGAN~\cite{gupta2019feedback}. The code of the network architecture can be found on FBGAN's official implementation (\textit{https://github.com/av1659/fbgan}). The networks are optimized with Adam with a learning rate of $1e\text{-}4$ and $\beta_1=0.5, \beta_2=0.9$. $n_{critic}$ is set to 10.

\textbf{Training} There is no labeling about whether the genes have the desired property in the training data. Therefore, WGAN, CWGAN, and GAN-FT cannot be applied.
Following FBGAN~\cite{gupta2019feedback}, we resort to an analyzer that can evaluate the property for genes. We pertain FBGAN and DiCGAN as vanilla WGAN using $3,655$ gene sequences. Then, we train FBGAN and DiCGAN with $2K$ gene sequences and collect the results for each method. Here we limit the amount of querying the analyzer to $6K$. $n_{\mathrm{i}}$ is set $31$. $n_{\mathrm{g}}$ is set to $500$.

\textbf{Sample selection in FBGAN} The selector in FBGAN selects desired samples based on the evaluation of the analyzer, which is able to predict the probability of a gene coding for AMPs. Specifically, the analyzer first estimates the probability of generated genes coding for AMPs. Then, the generated genes with the estimated probability over $0.8$, considered as the desired genes, are selected by the selector to replace the old training data.

\textbf{Pairwise preferences construction in DiCGAN} \revision{We consider the analyzer~\footnote{\revision{Or we can ask biological experts to compare pairs of samples in terms of the desired property if the desired properties cannot be expressed objectively.}} as the user, where a larger predicted value denotes that the gene is preferred for coding AMPs. Then, the pairwise comparison can be obtained for a pair of samples $x_1$ and $x_2$ according to their predicated values $p(x_1)$ and $p(x_2)$, i.e., $x_1 \succ x_2$ if $p(x_1)>p(x_2)$, and vice versa. At each iteration, $n_s$ is set to $64$. $n_{\mathrm{i}}$ is set to 31.} 

\begin{figure}[!tb]
    \centering
    \begin{subfigure}[t]{.49\linewidth}
  \centering
    \includegraphics[width=\linewidth]{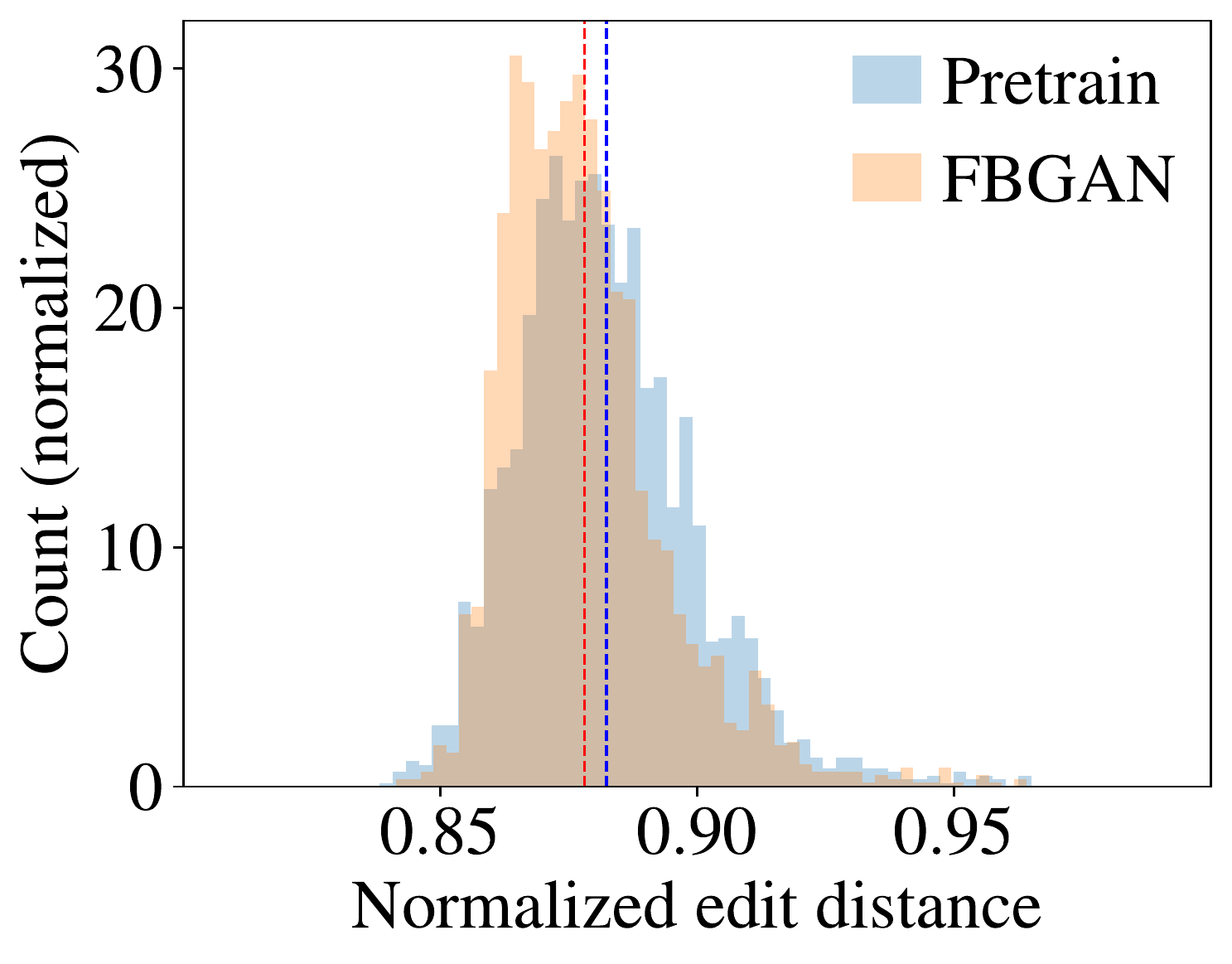}
    \caption{FBGAN}
  \end{subfigure}
    \begin{subfigure}[t]{.49\linewidth}
  \centering
    \includegraphics[width=\linewidth]{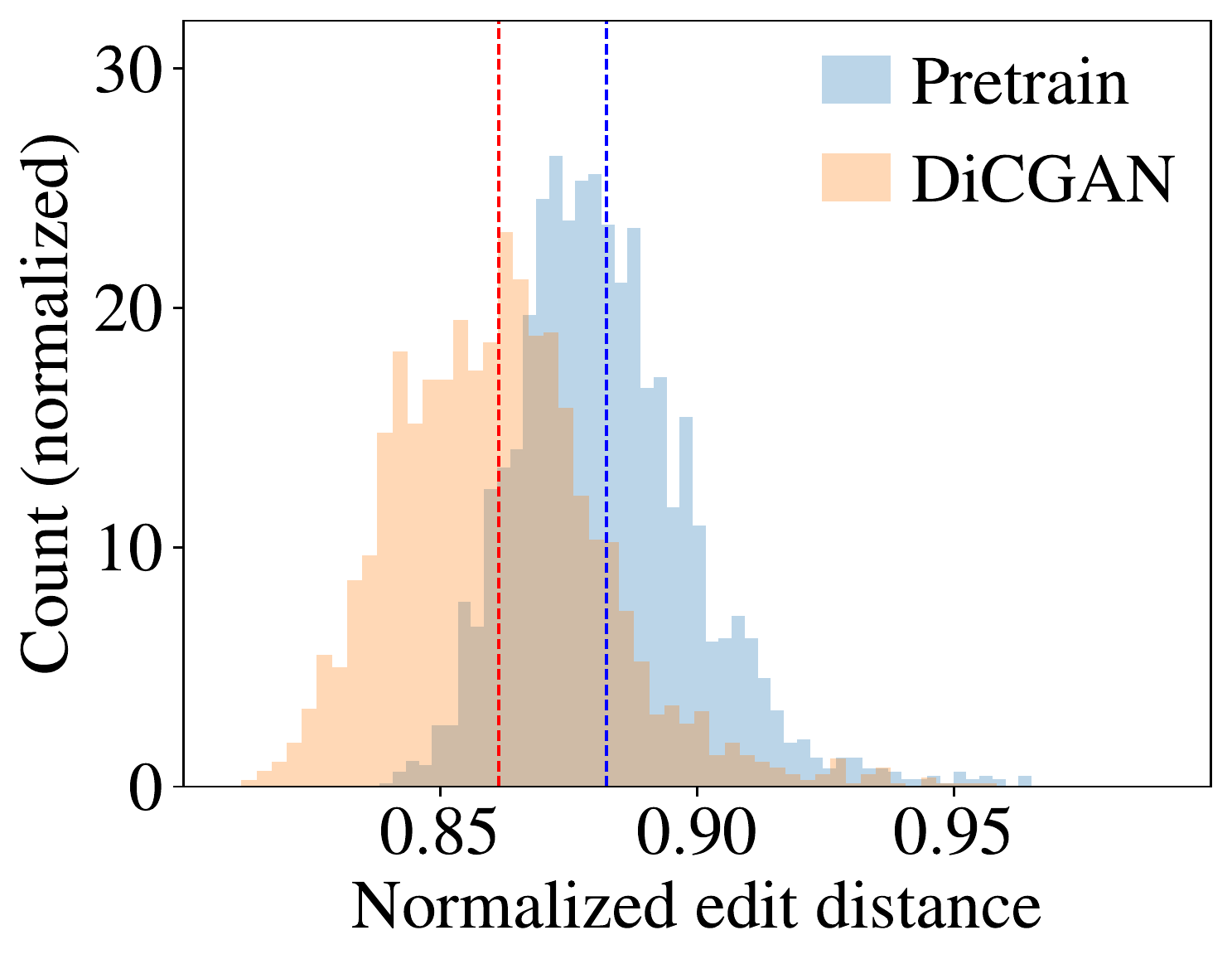}
    \caption{DiCGAN}
  \end{subfigure}
    \caption{\label{fig:gene} Comparison of {(a) FBGAN and (b) DiCGAN} on the gene sequence dataset. The dashed line denotes the mean value. The normalized edit distance is calculated between synthetic proteins and real desired proteins. A smaller distance denotes the generated genes are more similar to the desired genes.}\vskip-0.1in 
\end{figure}

The difference between genes is evaluated via the Normalized Edit Distance (normalized Levenstein distance, NED) between the proteins coded by the corresponding genes~\cite{gupta2019feedback}. The similarity of a generated gene to the desired gene can be evaluated using the averaged NED between its corresponding protein and all real desired proteins (i.e., AMPs). Particularly, a smaller NED w.r.t. AMPs denotes the generated genes more similar to genes coded for AMPs.

In Fig.~\ref{fig:gene}, we compare DiCGAN and WGAN w.r.t. the NED between AMPs and the synthetic proteins.  It shows that the synthetic proteins of DiCGAN shift toward a lower edit distance from AMPs, compared to the pretraining stage, i.e., WGAN. It means more genes coded for AMPs are generated by DiCGAN. However, FBGAN fails to shift its distribution towards the distribution of genes coding for AMPs.

\begin{table}[!hb]
\arrayrulecolor{black}
\centering
\caption{\label{tb:gene_metrics} Percentage of desired data in the generation (PDD) and gene quality (\%VG) of various GANs on the gene sequence dataset. The best results are highlighted in bold.}
	\renewcommand{\arraystretch}{1.}
	\setlength{\tabcolsep}{1.1mm}{	
		\scalebox{1.}{
\begin{tabular}{ccc}
\toprule[1.3pt]
Method & FBGAN & DiCGAN \\ \hline
PDD    & 29.0  & \textbf{98.8}   \\
\%VG   & 57.8  & \textbf{67.0}   \\
\toprule[1.3pt]
\end{tabular}}}\vskip-0.1in
\end{table}
Further,  we sample $50K$ genes from the generator for quantitative evaluation, which is collected in Table~\ref{tb:gene_metrics}. The generated genes with the probability of coding for AMPs over $0.8$ is considered as the desired genes. Then, the percentage of the desired genes among all $50K$ generated genes (PDD) is calculated. Particularly,  almost all genes generated by DiCGAN can be classified as the desired genes, i.e., $98.8\%$. In contrast, FBGAN generates $29.0\%$ desired genes. DiCGAN can learn the distribution of desired genes. However, FBGAN fails to derive the desired data distribution due to limited supervision.

On the other hand, we calculate the percentage of valid genes (\%VG)~\footnote{Correct gene structure is defined as a string starting with the canonical
start codon ``ATG'', followed by an integer number of codons of length 3, and ending with one of three canonical stop codons (``TAA'', ``TGA'', ``TAG'')~\cite{gupta2019feedback}.}.
The \%VG of DiCGAN is $67.0\%$ while that of FBGAN is $57.8\%$, which clarifies our methods achieve better quality than FBGAN. Their quality degradation compared to pre-trained WGAN is due to the introduction of generated genes as training data. 

\subsection{Study on critic values versus user preferences}
We apply DiCGAN to evaluate the critic values for all undesired data and all desired data on MNIST, CelebA-HQ, and the gene sequence dataset, respectively. Then we calculate the mean critic values for desired data and undesired data, respectively, with $95\%$ confidence interval. Meanwhile, we conduct the two-sample one-sided t-Test~\cite{welch1947generalization} for their mean critic values under the null hypothesis of equal means and the alternative hypothesis that the mean of the desired data is greater than that of the undesired data.

\begin{table}[!ht]
\centering
\caption{\label{tb:critic value} The mean (with $95\%$ confidence interval) and the two-sample one-sided t-Test results of  critic values for desired data and undesired data on MNIST, CelebA-HQ, and the gene sequence dataset.}
\renewcommand{\arraystretch}{1.}
\setlength{\tabcolsep}{1.1mm}{	
\scalebox{1.}{
\begin{tabular}{c|c|c|c}
\toprule[1.3pt]
\multirow{2}{*}{Dataset} & \multicolumn{2}{c|}{mean critic value} & \multirow{2}{*}{\begin{tabular}[c]{@{}c@{}}p value\\ (desired VS. undesired)\end{tabular}} \\ \cline{2-3}
          & \multicolumn{1}{c|}{desired} & undesired     &           \\ \midrule
MNIST     & $1.5\pm0.01$                 & $0.2\pm0.01$                   & $0.00$      \\
CelebA-HQ & $0.9 \pm 0.02$               & $0.4\pm 0.01$                  & $2.58e\text{-}285$ \\
gene      & $8.9\pm0.14$                 & $8.1\pm0.05$                   & $3.54e\text{-}24$  \\
\bottomrule[1.3pt]
\end{tabular}}}
\end{table}

The results in Table~\ref{tb:critic value} show that the average critic value for desired data is significantly larger than that of undesired data with a very small p-value. Therefore, it verifies the claim (mentioned in Sect.~\ref{sect:dicgan}) that the ranking loss can encourage high critic values to be assigned to the real desired data while low critic values are assigned to real undesired data.

\subsection{Ablation Study}
The objective in our DiCGAN (Eq.~\eqref{eq:objective}) consists of two components, i.e., the WGAN loss, which serves as the cornerstone of DiCGAN, and the ranking loss, which serves as the correction to WGAN. Meanwhile, we introduce the operation of replacement (Eq.~\eqref{eq:x}) during the model training. 

\begin{figure}[!t]
    \centering
 \begin{minipage}{0.24\textwidth}
  \centering
    \includegraphics[width=\linewidth]{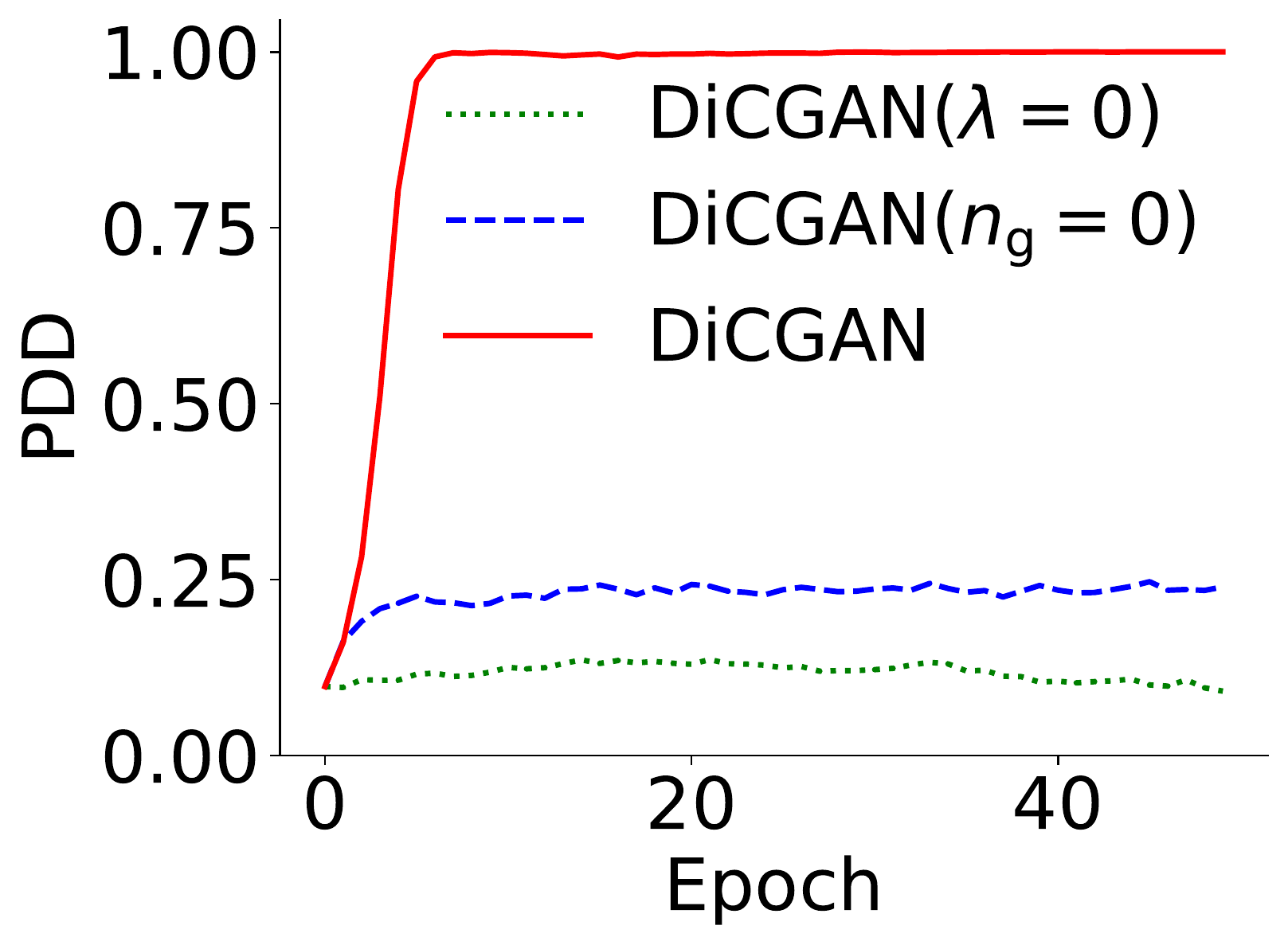}
    \centerline{(a) top~1}
    \label{fig:mnist_1}
  \end{minipage}
  \begin{minipage}{0.24\textwidth}
  \centering
    \includegraphics[width=\linewidth]{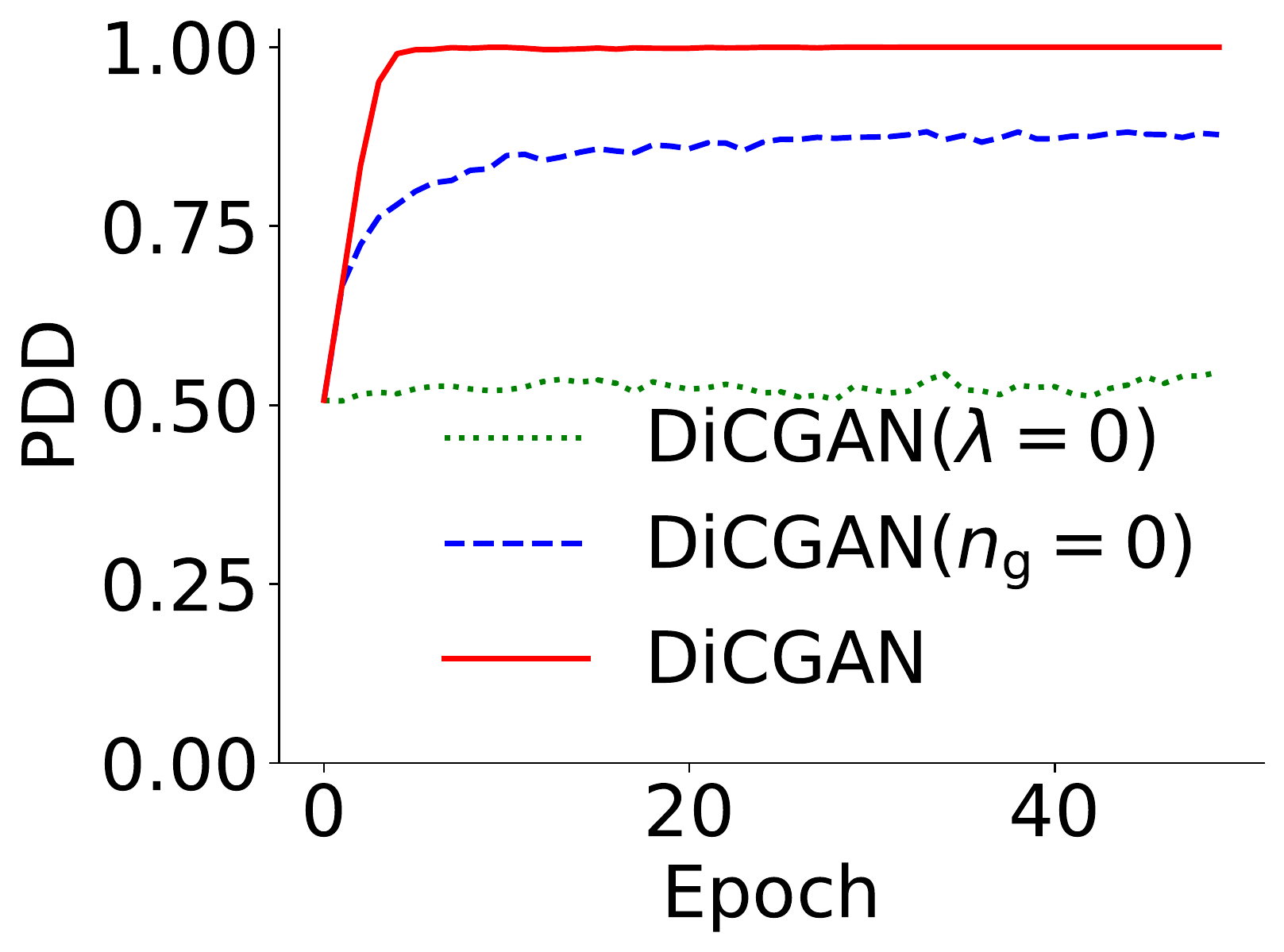}
    \centerline{(b) top~5}
    \label{fig:mnist_5}
  \end{minipage}
  \begin{minipage}{0.335\textwidth}
  \arrayrulecolor{black}
  \centering
  \renewcommand{\arraystretch}{1.1}
	\setlength{\tabcolsep}{1mm}{	
		\scalebox{0.85}{
\begin{tabular}{ccc}
\toprule[1.3pt]
Method                  &  top~1         & top~5        \\ \hline
Original                & 9.9   & 51.1      \\ \hline
DiCGAN ($\lambda=0$)               & 9.1   & \multicolumn{1}{c}{54.6}  \\ \hline
DiCGAN ($n_\mathrm{g}=0$)               & 24.0  & \multicolumn{1}{c}{87.8}   \\ \hline
DiCGAN                  & $\mathbf{100.0}$       & $\mathbf{100.0}$       \\
\toprule[1.3pt]
\end{tabular}}}
  \centerline{(c) PDD} 
  \end{minipage}
  \caption{\label{fig:ablation_PDD}Ablation study on MNIST. (a-b) PDD vs. epoch in the generation of DiCGAN ($\lambda=0$), DiCGAN ($n_\mathrm{g}=0$) and DiCGAN. (c) PDD in the data from the original dataset, DiCGAN ($\lambda=0$), DiCGAN ($n_\mathrm{g}=0$) and DiCGAN.}\vskip-0.1in
\end{figure}

To analyze the effects of the correction for WGAN (the third term in Eq.~\eqref{eq:objective}) and the replacement operation, we plot the percentage of desired data in the generation (PDD) versus the training epoch for DiCGAN ($\lambda=0$), DiCGAN ($n_\mathrm{g}=0$) and DiCGAN in Fig.~\ref{fig:ablation_PDD}a, ~\ref{fig:ablation_PDD}b. Meanwhile, the converged percentage of desired samples (PDD) is reported in Fig.~\ref{fig:ablation_PDD}c.
\begin{itemize}[leftmargin=.2in]
    \item [1)] \textbf{Without the correction term ($\lambda=0$), DiCGAN cannot learn the desired data distribution.} The PDD of DiCGAN ($\lambda=0$) remains constant during training on MNIST (Fig.~\ref{fig:ablation_PDD}a, ~\ref{fig:ablation_PDD}b) compared with that of the original dataset (Fig.~\ref{fig:ablation_PDD}c). This is because the WGAN term in DiCGAN ($\lambda=0$) focuses on learning the training data distribution. 
    \item [2)] \textbf{Without the replacement ($n_\mathrm{g}=0$), DiCGAN makes a minor correction to the generated distribution.} In Fig.~\ref{fig:ablation_PDD}a, ~\ref{fig:ablation_PDD}b, the PDD of DiCGAN ($n_\mathrm{g}=0$) slightly increases compared with the original dataset. This is consistent with our analysis that the correction term would drive the generation towards the desired data distribution. 
    \item [3)]\textbf{DiCGAN learns the desired data distribution with a sequential minor correction.} The PDD of DiCGAN grows with training and reaches almost $100\%$ when convergence.  The correction term drives DiCGAN's generation towards the desired data slightly at each epoch. With the iterative replacement, the minor correction sequentially accumulates, and finally the generated distribution shifts to the desired data distribution.
\end{itemize}

\subsection{Pairwise regularization on the Generator}
As discussed in Sect.~\ref{sec:prg}, the pairwise regularization is possibly added to the generator. We consider two cases of adding the regularization to the generator. 
First, we only add the pairwise regularization to the generator (PRG-1). Second, we add the regularization to the generator together with the regularization on the critic (PRG-2). 

\begin{figure}[!tb]
    \centering 
 \begin{minipage}[b]{0.24\textwidth}
  \centering
    \includegraphics[width=.8\linewidth]{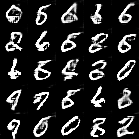}
    \label{fig:mnist_prg-1}
    \centerline{(a) PRG-1 (PDD=13.9)}
  \end{minipage}
  \begin{minipage}[b]{0.24\textwidth}
  \centering
    \includegraphics[width=.8\linewidth]{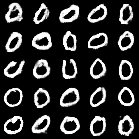}
    \label{fig:mnist_prg-2}
   \centerline{(b) PRG-2 (PDD=99.4)}
  \end{minipage}
  \caption{\label{fig:mnist_prg} \mbox{Generated digits and PDD of {(a) PRG-1 \& (b) PRG-2}.}} \vskip-0.1in
\end{figure}

We conducted experiments on MNIST to show the effectiveness of these two methods. $\lambda$ and $\lambda_g$ are both set to $1$. As shown in Fig.~\ref{fig:mnist_prg}, PRG-1 failed to learn the desired data distribution. PRG-2 can learn the desired data distribution. The quantitative results are consistent with the visual results, with $13.9\%$ and $99.4\%$ PDD, respectively.

\section{Conclusions and Discussions}
\label{sect:conclusion}
\revision{This paper proposes DiCGAN to learn the distribution of the user-desired data from the entire dataset using the pairwise preferences. This is the first work to promote the ratio of the desired data by incorporating user preferences directly into the data generation.
We empirically demonstrate the efficacy of DiCGAN in two real-world applications – generating images that meet the user’s interest for a given dataset and optimizing biological products with desired properties. Especially, our DiCGAN outperforms baselines in the cases of insufficient desired data and limited supervision.} 

\revision{Though it is superior to existing methods in terms of desired data generation when there is insufficient desired data, our DiCGAN cannot handle the case when there are extremely limited desired data, e.g., few-shot even one shot. 
Furthermore, as shown in our experimental study, high-resolution high-quality desired image generations require an advanced GAN architecture, which incurs heavy computational costs. There is an ongoing research direction of GAN that aims to generate high-resolution high-quality data with light architecture designs, which can mitigate such a limitation.}

\section*{Acknowledgment}
YP and IWT are supported by A*STAR CFAR. IWT is also supported by Australian Research Council under grants DP200101328. YY and XY are supported by the Program for Guangdong Introducing Innovative and Enterpreneurial Teams (Grant No. 2017ZT07X386), Shenzhen Science and Technology Program (Grant No. KQTD2016112514355531) and the Program for Guangdong Provincial Key Laboratory (Grant No. 2020B121201001).

\appendix
\section*{Hyperparameter setting and network structures.}
For the setup of the balance factor $\lambda$, we set $\lambda$ as $0.1, 0.5, 1, 5, 10$, respectively and found that $\lambda=1$ consistently performs well on all datasets. Thus we set $\lambda=1$ for all datasets.

\begin{table}[H]
\centering 
\caption{\label{tb:mnist D} The architecture of our critic for MNIST.} \vskip-0.05in
	\renewcommand{\arraystretch}{1.}
	\setlength{\tabcolsep}{1.1mm}{	
		\scalebox{1.}{
\begin{tabular}{c}
\toprule[1.3pt]
$x \in \mathbb{R}^{1\times28\times28}$  \\ \hline
Conv2d $5\times5$, stride 2, pad 2, $1 \rightarrow 64$; ReLU  \\ \hline
Conv2d $5\times5$, stride 2, pad 2, $64 \rightarrow 128$; ReLU  \\ \hline
Conv2d $5\times5$, stride 2, pad 2, $128 \rightarrow 256$; ReLU  \\ \hline
linear, $256\times4\times4 \rightarrow 1$   \\ 
\toprule[1.3pt]
\end{tabular}}}\vskip-0.15in
\end{table}

\begin{table}[ht]
\centering
\caption{\label{tb:mnist G} The architecture of our generator for MNIST.} \vskip-0.05in
	\renewcommand{\arraystretch}{1.}
	\setlength{\tabcolsep}{1.1mm}{	
		\scalebox{1.}{
\begin{tabular}{c}
\toprule[1.3pt]
$z \in \mathbb{R}^{128} \sim \mathcal{N}(0, I)$  \\ \hline
linear, $128\rightarrow 256\times4\times4$   \\ \hline
ConvTranspose2d $5\times5$, stride 1, pad 0, $256 \rightarrow 128$; ReLU  \\ \hline
ConvTranspose2d $5\times5$, stride 1, pad 0, $128 \rightarrow 64$; ReLU  \\ \hline
ConvTranspose2d $8\times8$, stride 2, pad 0, $64 \rightarrow 1$; Sigmoid  \\
\toprule[1.3pt]
\end{tabular}}}\vskip-0.15in
\end{table}

\begin{table}[htb]
\centering
\caption{\label{tb:celebahq D} The architecture of our critic for CelebA-HQ.} \vskip-0.05in
	\renewcommand{\arraystretch}{1.}
	\setlength{\tabcolsep}{1.1mm}{	
		\scalebox{1.}{
\begin{tabular}{c}
\toprule[1.3pt]
$x \in \mathbb{R}^{3\times64\times64}$  \\ \hline
Conv2d $5\times5$, stride 2, pad 2, $1 \rightarrow 64$; LeakyReLU  \\ \hline
Conv2d $5\times5$, stride 2, pad 2, $64 \rightarrow 128$; InstanceNorm2d; LeakyReLU  \\ \hline
Conv2d $5\times5$, stride 2, pad 2, $128 \rightarrow 256$; InstanceNorm2d; LeakyReLU  \\ \hline
Linear, $256\times8\times8 \rightarrow 1$   \\ 
\toprule[1.3pt]
\end{tabular}}}\vskip-0.15in
\end{table}

\begin{table}[H]
\centering
\caption{\label{tb:celebahq G} The architecture of our generator for CelebA-HQ.} \vskip-0.05in
	\renewcommand{\arraystretch}{1.}
	\setlength{\tabcolsep}{1.1mm}{	
		\scalebox{.85}{
\begin{tabular}{c}
\toprule[1.3pt]
$z \in \mathbb{R}^{100} \sim \mathcal{N}(0, I)$  \\ \hline
Linear, $100\rightarrow 512\times4\times4$   \\ \hline
ConvTranspose2d $5\times5$, stride 2, pad 2, output pad 1, $512 \rightarrow 256$; BatchNorm2d; ReLU  \\ \hline
ConvTranspose2d $5\times5$, stride 2, pad 2, output pad 1, $256 \rightarrow 128$; BatchNorm2d; ReLU  \\ \hline
ConvTranspose2d $5\times5$, stride 2, pad 2, output pad 1, $128 \rightarrow 64$; BatchNorm2d; ReLU  \\ \hline
ConvTranspose2d $8\times8$, stride 2, pad 2, output pad 1, $64 \rightarrow 3$; Tanh  \\
\toprule[1.3pt]
\end{tabular}}}\vskip-0.1in
\end{table}


\ifCLASSOPTIONcaptionsoff
  \newpage
\fi


\bibliographystyle{IEEEtran}
\bibliography{mybibfile}

\end{document}